\documentclass[twoside,11pt]{article}

\usepackage{blindtext}

%

%
%
%

\usepackage{jmlr2e}

\usepackage{tikz}
\usepackage{algorithm}
\usepackage{algpseudocode}
\usepackage{nicematrix}
\usepackage{setspace} 
\usepackage{amsfonts} 
\usepackage{amsmath} 
\usepackage{placeins}
\usepackage{amssymb} 
\usepackage{bm}
\usepackage{graphicx}
\usepackage{lineno} 
\usepackage{multicol}
\usepackage{multirow}
\usepackage{caption}
\usepackage{subcaption}
\usepackage{pdflscape}
\usepackage{hyperref}
\hypersetup{ hidelinks }
\usepackage{breqn}
\usepackage{diagbox}
\usepackage{cleveref}
\usepackage{ifthen}


\newcommand{\fakesection}[1]{%
  \par\refstepcounter{section}
  \sectionmark{#1}
  \addcontentsline{toc}{section}{\protect\numberline{\thesection}#1}
}

\newcommand{\norml}{{\|\varphi^\ell_\alpha\|^2 \|\varphi^\ell_\beta\|^2}}
\newcommand{\normll}{{\|\varphi^{\ell+1}_\alpha\|^2 \|\varphi^{\ell+1}_\beta\|^2}}
\newcommand{\normratio}{{\frac{\normll}{\norml}}}
\newcommand{\normratioo}{R_{\ell+1}}
\newcommand{\ph}{{\varphi}}
\newcommand{\pha}{{\ph_\alpha^\ell}}
\newcommand{\phb}{{\ph_\beta^\ell}}
\newcommand{\phaa}{{\ph_\alpha^{\ell+1}}}
\newcommand{\phbb}{{\ph_\beta^{\ell+1}}}
\newcommand{\cov}{{\mathbf{Cov}}}
\DeclareMathOperator{\var}{\mathbf{Var}}

\newcommand{\review}[1]{{#1}}

\newtheorem{fact}{Fact}
\newtheorem{approximation}{Approximation}
\newtheorem{prop}{Proposition}


\usepackage{lastpage}


\jmlrheading{25}{2024}{1-\pageref{LastPage}}{3/23; Revised
7/24}{8/24}{23-0350}{Cameron Jakub and Mihai Nica}


\ShortHeadings{Depth Degeneracy in Neural Networks}{Jakub and Nica}
\firstpageno{1}

\begin{document}
\title{Depth Degeneracy in Neural Networks: Vanishing Angles in Fully Connected ReLU Networks on Initialization}

\author{\name Cameron Jakub \hfill \texttt{cjakub@uoguelph.ca} \\
       \addr Department of Mathematics \& Statistics\\
       University of Guelph
       \AND
       \name Mihai Nica \hfill \texttt{nicam@uoguelph.ca} \\
       \addr Department of Mathematics \& Statistics\\
       University of Guelph} 

\editor{Miguel Carreira-Perpinan}

\maketitle

\begin{abstract}
Despite remarkable performance on a variety of tasks, many properties of deep neural networks are not yet theoretically understood. One such mystery is the depth degeneracy phenomenon: the deeper you make your network, the closer your network is to a constant function on initialization. In this paper, we examine the evolution of the angle between two inputs to a ReLU neural network as a function of the number of layers. By using combinatorial expansions, we find precise formulas for how fast this angle goes to zero as depth increases. These formulas capture microscopic fluctuations that are not visible in the popular framework of infinite width limits, and leads to qualitatively different predictions. We validate our theoretical results with Monte Carlo experiments and show that our results accurately approximate finite network behaviour. \review{We also empirically investigate how the depth degeneracy phenomenon can negatively impact training of real networks.} The formulas are given in terms of the mixed moments of correlated Gaussians passed through the ReLU function. We also find a surprising combinatorial connection between these mixed moments and the Bessel numbers that allows us to explicitly evaluate these moments.  \end{abstract}

\begin{keywords}
 deep learning theory, infinite limits of neural networks, network initialization, Markov chains, combinatorics
\end{keywords}

\section{Introduction}
\label{sec:intro}
The idea of stacking many layers to make truly \emph{deep} neural networks (DNNs) is what arguably led to the neural net revolution in the 2010s. Indeed, from a function-space point of view, it is known that depth exponentially improves expressibility \citep{poole_expressivity, eldan_expressivity}. However, an important but less well known fact is that under standard initialization schemes, deep neural networks become more and more \emph{degenerate} as depth gets larger and larger. One sense in which this happens is the phenomenon of vanishing and exploding gradients \citep{boris_gradients}. Another sense in which networks become degenerate is that a neural network gets closer and closer to a (random) constant function, i.e. the network sends all inputs to the same output and cannot distinguish input points. This phenomenon seems to have been  discovered and analyzed from different points of view by several authors  \citep{JMLR:cut_off, dherin_simple_solutions, li_nica_roy, hayou, schoenholz, cnn_nachum, buchanan_deep}.  \citet{cnn_nachum} found for convolutional neural networks, the level of degeneracy was dependent on the type of input fed into the network.

One method already proposed to deal with the degeneracy phenomenon is the idea of activation function \emph{shaping} \citep{martens_shaping}. In particular, \citet{li_nica_roy}, showed that rescaling the non-linear activation function (i.e. using Leaky ReLUs with leakiness depending on network depth) can lead to a non-trivial angle between inputs. However, a detailed analysis of the evolution of the angle $\theta$ \emph{without} any scaling (e.g. using an ordinary ReLU in all layers) remained an outstanding problem. This is the gap we fill in this article.

\subsection{Main Results for the Angle Process $\theta_\ell$}
\label{sec:main_results}
In this paper, we examine the evolution of the \emph{angle} $\theta_\ell$  between two arbitrary inputs $x_\alpha, x_\beta \in \mathbb{R}^{n_{in}}$ after passing through $\ell$ layers of a fully connected ReLU network (a.k.a. a multi-layer perceptron) on initialization. The angle is defined in the usual way by the inner product between two vectors in $\mathbb{R}^{n_\ell}$
\begin{equation*}
\cos\left( \theta_\ell \right) := \frac{\langle F^\ell(x_\alpha), F^\ell(x_\beta) \rangle}{\Vert F^\ell(x_\alpha) \Vert \Vert F^\ell(x_\beta) \Vert } , 
\end{equation*} %
where $n_\ell$ is the width (i.e. number of neurons) of the $\ell$-th layer and $F^\ell: \mathbb{R}^{n_{in}} \to \mathbb{R}^{n_\ell}$ is the (random) neural network function mapping input to the post-activation logits in layer $\ell$ on initialization. We assume here that the initialization is done with appropriately scaled independent Gaussian weights so that the network is on the ``edge of chaos'' \citep{hayou, schoenholz}, where the variance of each layer is order one as layer width increases. See Table \ref{tbl:notation} for our precise definition of the fully connected ReLU neural network. 

\begin{figure}[h!]
    \centering
    \includegraphics[scale=0.58]{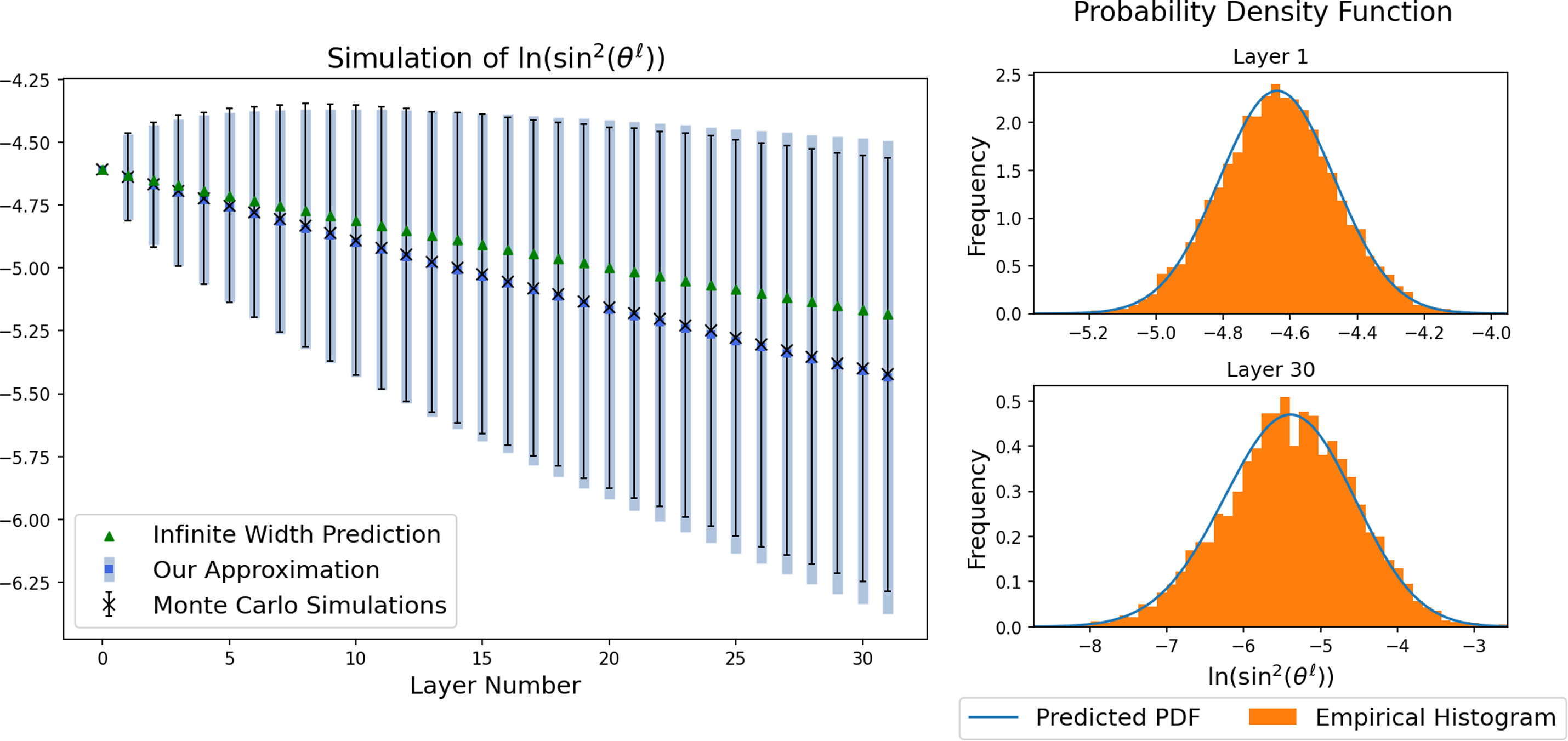}
    \caption{We feed 2 inputs with initial angle $\theta_0 = 0.1$ into 5000 Monte Carlo samples of independently initialized networks with network width $n_\ell = 256$ for all layers. \emph{Left:} Using the Monte Carlo samples, we plot the empirical mean and standard deviation of $\ln(\sin^2(\theta_\ell))$ at each layer. We compare this to both the infinite width update rule and our prediction using Approximation \ref{approx:simple} for the mean of $\ln(\sin^2(\theta_\ell))$ \review{(Shown as the blue square)}. Our prediction for the standard deviation in each layer using Approximation \ref{approx:Gaussian} is also plotted as the shaded area. \review{To compute this, we iterate Approximation 2 to estimate the PDF in each layer, and then compute the variance using the PDF.} In contrast to our prediction, the infinite width rule predicts 0 variance in all layers. \emph{Right:}  We plot histograms of our simulations as well as our predicted probability density function using Approximation \ref{approx:Gaussian} from \eqref{eq:update_full} at Layer 1 (top) and Layer 30 (bottom). \review{The predicted PDF is computed numerically by iterating Approximation \ref{approx:Gaussian} over the 30 layers, using the PDF in layer $\ell$ to get the PDF in layer $\ell+1$. }The predicted and empirical distribution are statistically indistinguishable according to a Kolmogorov-Smirnov test, with $p$ values $0.987 > 0.05$ (top) and $ 0.186  > 0.05$ (bottom). \review{The code which produced this figure can be found at the following \href{https://github.com/camjakub/Depth-Degeneracy-in-Neural-Networks}{\texttt{GitHub link}}.}}
    \label{updaterule}
\end{figure}

With this setup, since the effect of each layer is independent of everything previous, $\theta_\ell$ can be thought of as a Markov chain evolving as layer number $\ell$ increases. As expected by the aforementioned ``large depth degeneracy'' phenomenon,  we observe that the angle concentrates $\theta_\ell \to 0$ as $\ell\to \infty$ (see Figure \ref{updaterule} for an illustration). This indicates that the hidden layer representation of \emph{any} two inputs  becomes closer to co-linear as depth increases. 

We obtain a simple, yet remarkably accurate, approximation for the evolution of $\theta_\ell$ as a function of $\ell$ that captures precisely how quickly this degeneracy happens for small angles $\theta_\ell$ and large layer widths $n_\ell$. \review{ In Section \ref{sec:training}, we also empirically investigate how these predictions relate to network performance \emph{after} training and show they may have applications in neural architecture search.}

\begin{approximation}
\label{approx:simple}
For small angles $\theta_\ell \ll 1$ and large layer width $n_\ell \gg 1$, the angle $\theta_{\ell+1}$ at layer $\ell+1$ is well approximated by
\begin{equation} \label{eq:update_simple}
\ln \sin^2(\theta_{\ell+1}) \approx
\ln \sin^2(\theta_\ell) - \frac{2}{3\pi}\theta_\ell - \rho(n_\ell), 
\end{equation} 
where $\rho(n_\ell)$ is a constant which depends on the width $n_\ell$ of layer $\ell$, namely:
\begin{equation}
    \rho(n) := \ln\left( \frac{n+5}{n-1}\right) - \frac{10n}{\left(n+5\right)^2} + \frac{6n}{\left(n-1\right)^2 }= \frac{2}{n} + \mathcal{O}\left(n^{-2}\right). \label{eq:rho}
\end{equation}

\end{approximation}


Figure \ref{updaterule} illustrates how well this prediction matches Monte Carlo simulations of $\theta_\ell$ sampled from real networks. Also illustrated is the \emph{infinite width} prediction for $\theta_\ell$ (discussed in \Cref{app:infinite_width}) which is less accurate at predicting finite width network behaviour than our formula, due to the $n^{-1}_\ell$ effects that our formula captures in the term $\rho(n_\ell)$ but are not present in the infinite width formula. Approximation 1 is a simple corollary to the mathematically rigorous statement, Theorem \ref{thm:mean_var}, for the mean and variance of the random variable $\ln\sin^2(\theta_\ell)$. 
\review{Approximation \ref{approx:simple} is obtained by doing a Taylor series expansion around $\theta = 0$ and ignoring terms of size $\mathcal{O}(n^{-2})$ from the more precise Theorem \ref{thm:mean_var} concerning the random variable $\ln \sin^2(\theta_\ell)$, which is why Approximation \ref{approx:simple} only holds for $\theta \ll 1$ and $n \gg 1$. See also Corollary \ref{thm:mean_var_exp}
 for related expansions.}
\subsubsection{Theoretical Consequences and Comparison to Previous Work}
\label{sec:consequences}

Approximation \ref{approx:simple} predicts that $\theta_\ell \to 0$ \emph{exponentially fast} in $\ell$ due to $\rho(n)$; it predicts: 
$$\theta_\ell \leq \exp\left(-\frac{1}{2}\sum_{i=1}^\ell \rho(n_i)\right) = \exp\left(-\sum_{i=1}^\ell \frac{1}{n_i} + \mathcal{O}(n_i^{-2})\right).$$
\noindent (Note that the exponential behaviour vanishes when $n_\ell \to \infty$ with $\ell$ fixed). In contrast to this prediction, an analysis using only expected values or equivalently working in the infinite-width $n\to \infty$ limit predicts that $\theta_\ell \to 0$ like $\ell^{-1}$, which is qualitatively very different! \review{This difference in the rate of degeneracy demonstrates significant difference between ``real world'' and infinite width networks. See also Figure \ref{fig:comparison} for comparisons of the infinite vs finite width predictions for some real architectures.}

The prediction of the infinite width degeneracy was first demonstrated under the name ``edge of chaos'' \citep{hayou, schoenholz}  and again in \citet{principles_deep_learning, boris_correlation_functions}. These earlier works studied the correlation $\cos(\theta_\ell)$ as a function of layer number, and showed showed that $1-\cos(\theta_\ell)\to 0$ like $\ell^{-2}$, which is equivalent to $\theta_\ell\to 0$ like $\ell^{-1}$ by Taylor series expansion $\cos(x)\approx\frac{1}{2}x^2$ as $x\to 0$. To unify the notation, we also present a derivation of the update rule for $\cos(\theta_\ell)$ in the infinite width limit in our notation in Appendix \ref{app:infinite_width}. 

We can also recover the infinite width prediction from our result by replacing $\rho(n)$ with $0$ in Approximation \ref{approx:simple} in the update rule \eqref{eq:update_simple}. Exponentiating both sides and using $\sin(\theta) \approx \theta, e^\theta \approx 1+\theta$ for $\theta \ll 1$, Approximation \ref{approx:simple} becomes $(\theta_{\ell+1})^2 \approx \left(\theta^{\ell})^2(1-\frac{2}{3\pi}\theta_\ell\right)$, which is equivalent to the result of Proposition C.1 of \citet{boris_correlation_functions} and is also a corollary of Lemma 1 of \citet{hayou}. In those papers, the rule was derived directly from the infinite width update rule for $\cos(\theta)$, is equivalent since $\theta_\ell \approx \frac{1}{\ell}$ as $\ell \to \infty$. 

One of the main limitations of the infinite width predictions is that they predict zero variance in the random variable $\theta_\ell$. In contrast to this, our methods allow us to also understand the variance of this random variable, as discussed below.

\subsubsection{More detailed results for the mean and variance}

Approximation \ref{approx:simple}  comes from a simplification of more precise formulas for the mean and variance of the random variable $\ln(\sin^2(\theta_\ell))$, which are stated in Theorem \ref{thm:mean_var} below.

\begin{theorem}[Formula for mean and variance in terms of J functions]
\label{thm:mean_var}
Conditionally on the angle $\theta_\ell$ in layer $\ell$ \review{(see Table \ref{tbl:notation} for a precise definition of all the notations)}, the mean and variance of $\ln \sin^2 (\theta_{\ell+1})$ obey the following limit as the layer width $n_\ell \to \infty$  
\begin{align}
\label{mu_formula}
\mathbf{E}[\ln \sin^2(\theta_{\ell+1}) | \theta_\ell] =& \mu(\theta_\ell, n_\ell) + \mathcal{O}(n^{-2}_\ell), \quad \var[\ln \sin^2(\theta_{\ell+1})| \theta_\ell] = \sigma^2(\theta_\ell,n_\ell) + \mathcal{O}({n^{-2}_\ell}),\\ 
\mu(\theta,n) :=& \ln \left( \frac{(n-1)(1-4J_{1,1}^2)}{4J_{2,2}-1+n}\right)+ \frac{ 4(J_{2,2}+1)}{n\left(\frac{4J_{2,2}-1}{n}+1\right)^2} \label{eq:mu} \\
& -  \frac{4\left(8J_{1,1}^2J_{2,2} - 8J_{1,1}^4 + 4J_{1,1}^2 - 8J_{1,1}J_{3,1} + J_{2,2} + 1 \right) }{n\left( 1-\frac{1}{n} \right)^2 \left( 1 - 4 J_{1,1}^2 \right)^2 },\nonumber\\
\sigma^2(\theta,n) :=& \frac{8n(J_{2,2}+1)}{(4J_{2,2}-1+n)^2} + \frac{8n (8J_{1,1}^2J_{2,2} - 8J_{1,1}^4 + 4J_{1,1}^2 - 8J_{1,1}J_{3,1} + J_{2,2} +1)}{(n-1)^2(1-4J_{1,1}^2)^2} \label{eq:sigma_sq} \\
    &- \frac{16n(2J_{1,1}^2 - 4J_{1,1}J_{3,1} + J_{2,2} +1)}{(4J_{2,2}-1+n)(n-1)(1-4J_{1,1}^2)}, \nonumber
\end{align}
where 
$J_{a,b} := J_{a,b}(\theta_\ell)$ are the joint moments of correlated Gaussians passed through the ReLU function $\varphi(x)=\max\{x,0\}$, namely
\begin{equation}
J_{a,b}(\theta) := \mathbf{E}_{G,\hat{G}}[\varphi^a(G) \varphi^b(\hat{G})],
\end{equation}
where $G,\hat{G}$ are marginally $\mathcal{N}(0,1)$ random variables with correlation $\mathbf{E}[G\hat{G}]=\cos(\theta)$.
 \end{theorem}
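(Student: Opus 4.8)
The plan is to reduce the statement to a delta-method (Taylor) expansion of a ratio of sums of i.i.d.\ terms, and then carry out the moment bookkeeping explicitly. First I would condition on layer $\ell$ and record the exact distributional description of the next layer: since the rows of the weight matrix feeding layer $\ell+1$ are independent, the preactivation pairs at the $n$ neurons of that layer are i.i.d., and each pair is bivariate Gaussian whose correlation equals $\cos\theta^\ell$ (the variances cancel in the angle because $\ph$ is positively homogeneous). Hence, writing $S_{ab} := \sum_{i=1}^n \ph^a(G_i)\ph^b(\hat G_i)$ for i.i.d.\ pairs $(G_i,\hat G_i)$ that are $\mathcal{N}(0,1)$ with correlation $\cos\theta^\ell$, I get the exact identity
\[
\sin^2(\theta^{\ell+1}) = \frac{S_{20}S_{02} - S_{11}^2}{S_{20}S_{02}} =: \frac{D}{P},
\]
so that $\ln\sin^2(\theta^{\ell+1}) = \ln D - \ln P$, with $D\ge 0$ by Cauchy--Schwarz.

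Next I would expand each logarithm around its mean using the relative fluctuations $\delta_D := (D-\mathbf{E}D)/\mathbf{E}D$ and $\delta_P := (P-\mathbf{E}P)/\mathbf{E}P$, which are of size $O(n^{-1/2})$. Since $\mathbf{E}\delta_D=\mathbf{E}\delta_P=0$ and the two logs are expanded separately, the standard second-order formulas give
\[
\mathbf{E}[\ln\sin^2(\theta^{\ell+1})] = \ln\frac{\mathbf{E}D}{\mathbf{E}P} - \tfrac12\frac{\var D}{(\mathbf{E}D)^2} + \tfrac12\frac{\var P}{(\mathbf{E}P)^2} + \cdots,
\]
\[
\var[\ln\sin^2(\theta^{\ell+1})] = \frac{\var D}{(\mathbf{E}D)^2} + \frac{\var P}{(\mathbf{E}P)^2} - 2\frac{\cov(D,P)}{\mathbf{E}D\,\mathbf{E}P} + \cdots,
\]
whose three-term shape already matches \eqref{eq:mu} and \eqref{eq:sigma_sq} (note there is no covariance term in the mean, exactly as in the stated formula). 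All that then remains on the algebraic side is to compute the five quantities $\mathbf{E}D,\mathbf{E}P,\var D,\var P,\cov(D,P)$ exactly.

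These computations are purely combinatorial: expanding the sums and grouping terms by which neuron indices coincide, each contribution becomes a product of single-neuron moments. Cross-coordinate coincidences produce the joint moments $J_{1,1},J_{2,2},J_{3,1}$ (which depend on $\theta^\ell$ through the correlation), while purely one-coordinate factors produce constants such as $\mathbf{E}[\ph^2(G)]=\tfrac12$ and $\mathbf{E}[\ph^4(G)]=\tfrac32$; this is precisely why only $J_{1,1},J_{2,2},J_{3,1}$ survive as symbols in the final formulas. In particular I would first verify the two clean leading identities
\[
\mathbf{E}P = \tfrac{n}{4}\bigl(4J_{2,2}-1+n\bigr), \qquad \mathbf{E}D = \tfrac{n(n-1)}{4}\bigl(1-4J_{1,1}^2\bigr),
\]
so that $\ln(\mathbf{E}D/\mathbf{E}P)$ reproduces the leading logarithmic term of $\mu$, and then grind out $\var D,\var P,\cov(D,P)$ (each of order $n^3$) to obtain the remaining $O(n^{-1})$ terms.

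The main obstacle will be the rigorous error control, i.e.\ justifying that the neglected Taylor remainders are genuinely $O(n^{-2})$ rather than merely formally higher order. Because $\ln$ has unbounded derivatives near $0$, I cannot bound the remainder pathwise; instead I would use that $\mathbf{E}D$ and $\mathbf{E}P$ are $\Theta(n^2)$ together with concentration of $D$ and $P$ (all relevant ReLU--Gaussian moments are finite, so $\delta_D,\delta_P$ have moments of the correct $O(n^{-k/2})$ size) to split into a high-probability event, on which the third- and fourth-order expansion terms are controlled and contribute $O(n^{-2})$, and a negligible tail event. This last step is where the hypotheses that $\theta^\ell$ is bounded away from $0$ (keeping $D$ away from its degenerate value) and $n\to\infty$ are genuinely used; everything preceding it is a finite, if lengthy, exact moment calculation.
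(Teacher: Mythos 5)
Your proposal is correct and follows essentially the same route as the paper's proof: your $P$ and $D$ are exactly the paper's $R^{\ell+1}$ and $R^{\ell+1}\sin^2(\theta^{\ell+1})$ up to the common factor $4/n_\ell^2$ (which cancels in $\ln D - \ln P$), your second-order expansion of each logarithm about its mean together with the three-term variance/covariance decomposition is precisely the paper's argument in \Cref{app:expected_value}, \Cref{app:variance} and \Cref{app:covariance_approx}, and your leading identities for $\mathbf{E}P$ and $\mathbf{E}D$ match \eqref{eq:exp_norm} and \eqref{eq:exp_norm_sin}. The only difference is a technical variant of the same step, not a different approach: the paper controls the Taylor remainders by bounding third and fourth central moments of averages of the form $n_\ell^{-2}\sum_{i,j} f(G_i,\hat{G}_j)$ via counting irreducible index configurations (Lemma \ref{lem:third_moment}), whereas you propose a concentration-plus-truncation argument to handle the singularity of $\ln$ near zero.
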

The joint moments $J_{a,b}(\theta)$ are discussed in detail in \Cref{sec:j_functions}. A new combinatorial method of computing these moments is presented, which is used to give an explicit formula is given for these joint-moments, which is presented in Theorem \ref{thm:j_ab}. Using the explicit formula for $J_{a,b}$, the result of Theorem \ref{thm:mean_var} can be used to obtain useful asymptotic formulas for $\mu$ and $\sigma$, as in the following corollary.

\begin{corollary}[Small $\theta$ asymptotics for mean and variance]
\label{thm:mean_var_exp}
Conditionally on the angle $\theta_\ell$ in layer $\ell$, the mean and variance of $\ln \sin^2 (\theta_{\ell+1})$ obey the following limit as the layer width $n_\ell \to \infty$  
\begin{align}
\label{mu_formula2}
\mathbf{E}[\ln \sin^2(\theta_{\ell+1})] =& \mu(\theta_\ell, n_\ell) + \mathcal{O}(n^{-2}_\ell), \quad \var[\ln \sin^2(\theta_{\ell+1})] = \sigma^2(\theta_\ell,n_\ell) + \mathcal{O}({n^{-2}_\ell}),\\ 
\mu(\theta,n) =& \ln\sin^2\theta - \frac{2}{3\pi}\theta - \rho(n) - \frac{8\theta}{15\pi n} -\left(\frac{2}{9\pi^2}-\frac{68}{45\pi^2 n}\right)\theta^2 + \mathcal{O}(\theta^3), \label{eq:mu_asymp_formula} \\
\sigma^2(\theta,n) =&  \frac{8}{n} -  \frac{64}{15\pi}\frac{\theta}{n} - \left(8+\frac{296}{45\pi}\right)\frac{\theta^2}{n} +\mathcal{O}\left( \theta^3\right), \label{eq:si_asymp_formula}
\end{align}
where $\rho(n)$ is as defined in \eqref{eq:rho}.
\end{corollary}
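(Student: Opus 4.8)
The plan is to obtain the asymptotic expansions \eqref{eq:mu_asymp_formula} and \eqref{eq:si_asymp_formula} directly from the exact formulas \eqref{eq:mu} and \eqref{eq:sigma_sq} of \Cref{thm:mean_var}, by substituting the small-$\theta$ Taylor expansions of the three joint moments $J_{1,1}(\theta)$, $J_{2,2}(\theta)$, $J_{3,1}(\theta)$ and then carrying out a simultaneous expansion in small $\theta$ and large $n$. The required Taylor coefficients are supplied by the explicit formula of \Cref{thm:j_ab}; for $J_{1,1}$ one may equally use the closed form $J_{1,1}(\theta)=\frac{1}{2\pi}\bigl(\sin\theta+(\pi-\theta)\cos\theta\bigr)$. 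The boundary values $J_{1,1}(0)=\tfrac12$ and $J_{2,2}(0)=J_{3,1}(0)=\tfrac32$ already pin down the constants $5=4J_{2,2}(0)-1$ and the vanishing $1-4J_{1,1}(0)^2=0$ that appear in $\rho(n)$ of \eqref{eq:rho}.

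The first main step is the logarithmic term of \eqref{eq:mu}. Expanding $J_{1,1}$ gives $1-4J_{1,1}^2=\sin^2\theta\,\bigl(1-\tfrac{2}{3\pi}\theta+\mathcal{O}(\theta^3)\bigr)$, the essential point being that the $\theta^2$ corrections cancel between the numerator and $\sin^2\theta$; taking logarithms yields $\ln(1-4J_{1,1}^2)=\ln\sin^2\theta-\tfrac{2}{3\pi}\theta-\tfrac{2}{9\pi^2}\theta^2+\mathcal{O}(\theta^3)$, which already produces every $n$-independent term of \eqref{eq:mu_asymp_formula}. Writing $4J_{2,2}-1+n=(n+5)+\delta(\theta)$ with $\delta(\theta)=4\bigl(J_{2,2}(\theta)-\tfrac32\bigr)$ and expanding $\ln(n-1)-\ln\!\bigl((n+5)+\delta(\theta)\bigr)$ for large $n$ isolates at $\theta=0$ the piece $-\ln\frac{n+5}{n-1}$, while the $\theta$-dependence of $J_{2,2}$ only enters at order $n^{-1}$.

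The second step expands the two $n^{-1}$ correction terms of \eqref{eq:mu} and the three terms of \eqref{eq:sigma_sq}. At $\theta=0$ the second term of \eqref{eq:mu} equals exactly $\tfrac{10n}{(n+5)^2}$ and the last term equals $-\tfrac{6n}{(n-1)^2}$; combined with $-\ln\frac{n+5}{n-1}$ these assemble precisely into $-\rho(n)$. The remaining coefficients $-\tfrac{8}{15\pi}$ and $\tfrac{68}{45\pi^2}$ in \eqref{eq:mu_asymp_formula}, and all the coefficients of \eqref{eq:si_asymp_formula}, then come from the $\theta^1$ and $\theta^2$ parts of these correction terms, keeping only the leading power of $n$. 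Here the relevant numerators are the combinations $A(\theta):=8J_{1,1}^2J_{2,2}-8J_{1,1}^4+4J_{1,1}^2-8J_{1,1}J_{3,1}+J_{2,2}+1$ and $B(\theta):=2J_{1,1}^2-4J_{1,1}J_{3,1}+J_{2,2}+1$, both of which vanish at $\theta=0$.

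The main obstacle is exactly these two combinations. Although $A$ is divided by $(1-4J_{1,1}^2)^2\sim\theta^4$ and $B$ by $(1-4J_{1,1}^2)\sim\theta^2$, the exact formulas $\mu(\theta,n)$ and $\sigma^2(\theta,n)$ must remain finite as $\theta\to0$ (the only genuine divergence being the $\ln\sin^2\theta$). This forces the removable cancellations $A(\theta)=\tfrac32\theta^4+\mathcal{O}(\theta^5)$ and $B(\theta)=\tfrac32\theta^2+\mathcal{O}(\theta^3)$, which I would verify from the expansions; note that the leading ratios $\tfrac32$ are precisely what converts the two correction terms into $-\tfrac{6n}{(n-1)^2}$ and the value $\tfrac{8}{n}$. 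Extracting not only these limiting constants but also the $\theta^1$ and $\theta^2$ coefficients of $A/(1-4J_{1,1}^2)^2$ and $B/(1-4J_{1,1}^2)$ requires the expansions of $J_{2,2}$ and $J_{3,1}$ through order $\theta^4$ (and for the $\theta^2$ coefficient of $\mu$, through $\theta^6$ in $A$), followed by careful division by the high-order zeros in the denominators. This bookkeeping is the heart of the computation; once the two ratios are in hand, collecting terms by order of $\theta$ and $n^{-1}$ yields \eqref{eq:mu_asymp_formula} and \eqref{eq:si_asymp_formula} directly.
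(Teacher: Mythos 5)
Your proposal is correct and follows essentially the same route as the paper, which presents Corollary~\ref{thm:mean_var_exp} as a direct computation: substitute the small-$\theta$ expansions of $J_{1,1}$, $J_{2,2}$, $J_{3,1}$ (from the explicit formulas) into $\mu(\theta,n)$ and $\sigma^2(\theta,n)$ of Theorem~\ref{thm:mean_var} and expand jointly in $\theta$ and $n^{-1}$. Your key intermediate claims check out --- in particular $1-4J_{1,1}^2=\sin^2\theta\left(1-\tfrac{2}{3\pi}\theta+\mathcal{O}(\theta^3)\right)$, the values $J_{1,1}(0)=\tfrac12$, $J_{2,2}(0)=J_{3,1}(0)=\tfrac32$, the cancellations $A(\theta)=\tfrac32\theta^4+\mathcal{O}(\theta^5)$ and $B(\theta)=\tfrac32\theta^2+\mathcal{O}(\theta^3)$, and the assembly of the $\theta=0$ terms into $-\rho(n)$ --- so the remaining work is exactly the bookkeeping you describe.
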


 To derive Approximation  \ref{approx:simple} from Theorem \ref{thm:mean_var}, we simply keep only the first few terms of the series expansion \eqref{eq:mu_asymp_formula}, and then also completely drop the variability, essentially approximating $\sigma^2(\theta_\ell,n) \approx 0$ (Note that in reality $\sigma^2(\theta,n) \approx 8/n$ from \eqref{eq:si_asymp_formula}). Therefore Approximation 1 is a greatly simplified consequence of Theorem \ref{thm:mean_var}.

Moreover, our derivation shows that $\ln\sin^2(\theta_\ell)$ can be expressed in terms of \review{sums} over $n$ pairs of independent Gaussian variables (see (\ref{sums_1}-\ref{sums_4})). Thus, by central-limit-theorem type arguments, one would expect the following approximation by Gaussian laws which also accounts for the variability of $\ln \sin^2(\theta)$ using our calculated value for the variance.

\begin{figure}[h!]
\centering
\begin{subfigure}{.5\textwidth}
  \centering
  \includegraphics[scale=0.38]{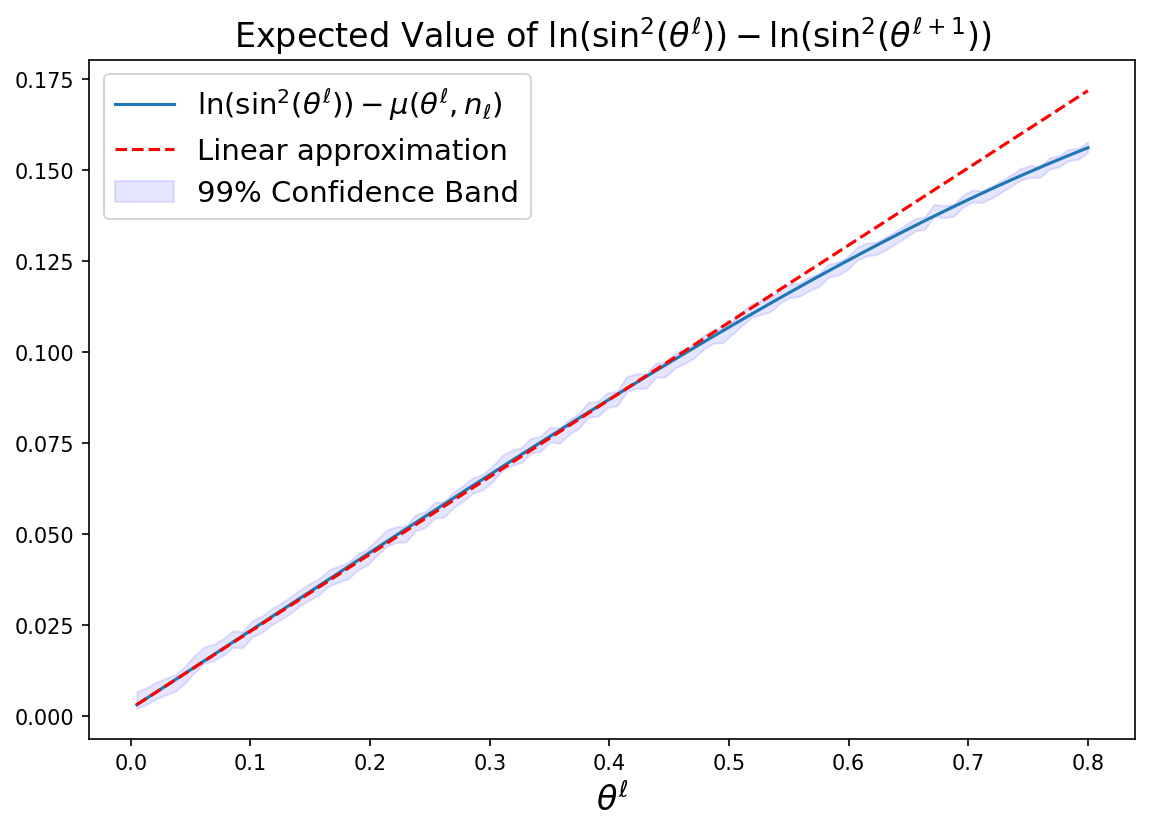}
  \caption{Mean as a function of $\theta$}
  \label{fig:theta_exp_value}
\end{subfigure}%
\begin{subfigure}{.5\textwidth}
  \centering
  \includegraphics[scale=0.38]{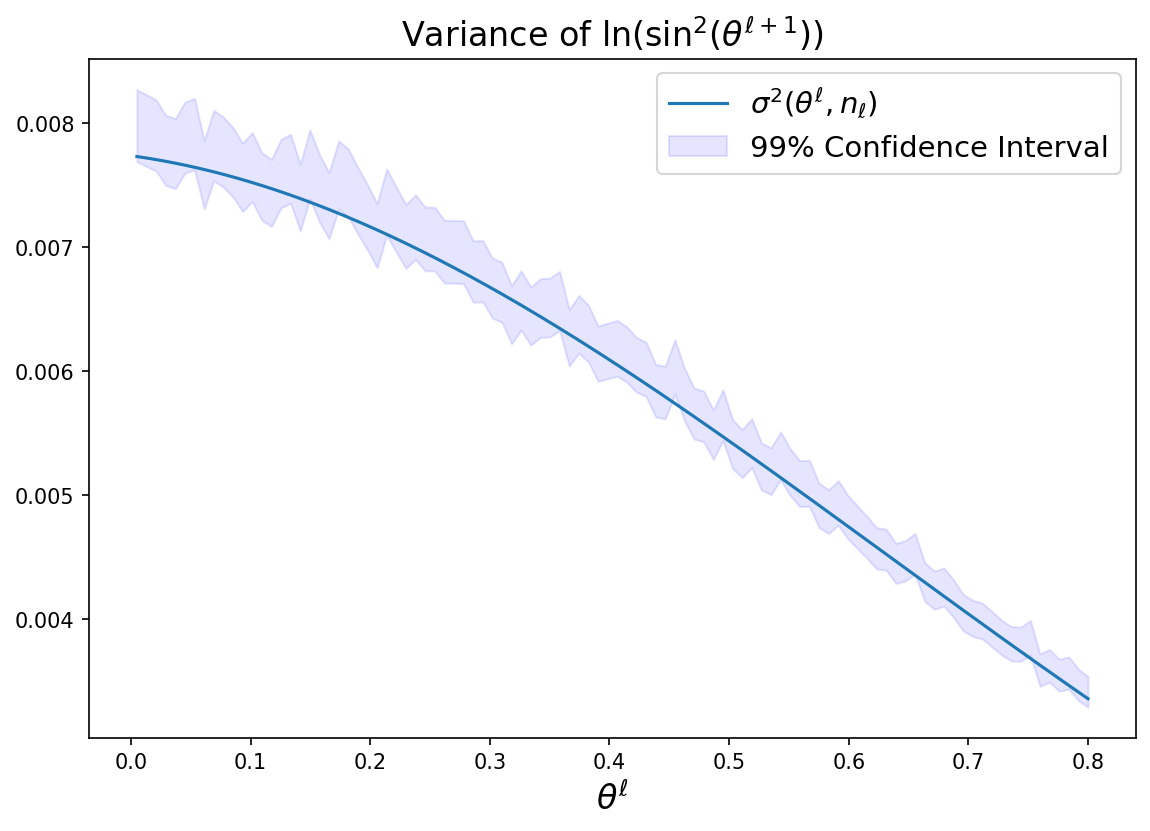}
  \caption{Variance as a function of $\theta$}
  \label{fig:theta_variance}
\end{subfigure}
\caption{Plots comparing the functions $\mu(\theta, n)$ and $\sigma^2(\theta,n)$ to simulated neural networks. The linear approximation of $\mu$, used to create Approximation \ref{approx:simple} is also displayed. Confidence bands are constructed by randomly initializing 10,000 neural networks with layer width $n_\ell=1024$, and a range of 100 initial angles $0.005 \leq \theta_\ell \leq 0.8$. We study $\theta_{\ell+1}$ and use the simulations to construct 99\% confidence intervals for a) $\mathbf{E}\left[ \ln(\sin^2(\theta_\ell)) - \ln(\sin^2(\theta_{\ell+1}))\right]$ and b) $\var\left[ \ln(\sin^2(\theta_{\ell+1}))\right]$. }
\label{fig:theta_plots}
\end{figure}

\begin{approximation}
\label{approx:Gaussian}
Conditional on the value of $\theta_\ell$, the angle at layer $\ell+1$ is well approximated by a Gaussian random variable
\begin{equation} \label{eq:update_full}
 \ln \sin^2(\theta_{\ell+1}) \stackrel{d}{\approx} \mathcal{N}(\mu(\theta_\ell,n_\ell), \sigma^2(\theta_\ell,n_\ell)),  
\end{equation}
where $\mu,\sigma^2$ are as in \review{Corollary} \ref{thm:mean_var_exp}. \review{This approximation is understood in the sense that in the limit $n_\ell \to \infty$, we have }

$$ \frac{\ln \sin^2(\theta_{\ell+1})-\mu(\theta_\ell,n_\ell)}{\sqrt{\sigma^2(\theta_\ell,n_\ell)}} \stackrel{d}{\Rightarrow} \mathcal{N}(0,1),  $$

\end{approximation}

We find that the normal approximation \eqref{eq:update_full} matches simulated finite neural networks remarkably well; see Monte Carlo simulations from real networks in Figure \ref{updaterule}. The big advantage of this approximation is that it very accurately captures the variance of $\ln(\sin^2(\theta_\ell))$, not just its mean. This variance grows as $\ell$ increases, so it is crucial for understanding behaviour of very deep networks.


The methods we use to obtain these approximations are quite flexible. For example, more accurate approximations can be obtained by incorporating higher moments $J_{a,b}(\theta)$ (see \Cref{sec:relu_nets} for a discussion). We also believe that it should be possible to extend these methods to other non-linearities beyond ReLU and more complicated neural network architectures through the same basic principles we introduce here. 
\subsection{\review{Practical Consequences: Depth Degeneracy Negatively Impacts Training}}
\label{sec:training}

\review{ In this section, we empirically investigate how the theoretical prediction of large depth degeneracy phenomenon can lead to poor results \emph{after training}. In other words, we show evidence that the depth degeneracy phenomenon (identified and studied only at \emph{initialization}) can be used as a screening tool for neural architecture search to identify problematic neural architectures before they are trained. This could potentially add to the arsenal of existing tools used for neural architecture search (see e.g. \cite{NAS})
 
 We use the formula $\mu(\theta,n)$ developed in Theorem \ref{thm:mean_var} to create a simple algorithm which accurately predicts the angle between inputs after travelling through the layers of an initialized network up to an error of size $\mathcal{O}(n_\ell^{-2})$ in layer $\ell$.

\begin{algorithm}
    \caption{Angle prediction between inputs for a feed-forward ReLU network with depth $L$ and layer widths $n_\ell,\; 1 \leq \ell \leq L$. The function $\mu(\theta,n)$ is given in Theorem \ref{thm:mean_var_exp}. }
    \label{algo:update_rule}
    \begin{algorithmic}[1]
        \item $\theta^0 = $ angle between inputs
        \For{$\ell = 0, \ldots, L-1$}
            \State $x = \mu(\theta_\ell, n_\ell)$ \Comment{$x$ represents $\mathbf{E}[\ln(\sin^2(\theta_{\ell+1}))]$}
            \State $\theta_{\ell+1} = \arcsin(e^{\frac{x}{2}})$ 
        \EndFor
        \item Final angle $= \theta_L$
    \end{algorithmic}
\end{algorithm}

\Cref{algo:update_rule} predicts the angle at the final layer on initialization based solely on the network architecture $n_1,n_2,\ldots n_L$. \Cref{fig:simulations} demonstrates how networks which exhibit this type of degeneracy empirically tend to perform worse after training.
\begin{figure}[h]
    \centering
    \includegraphics[scale=0.55]{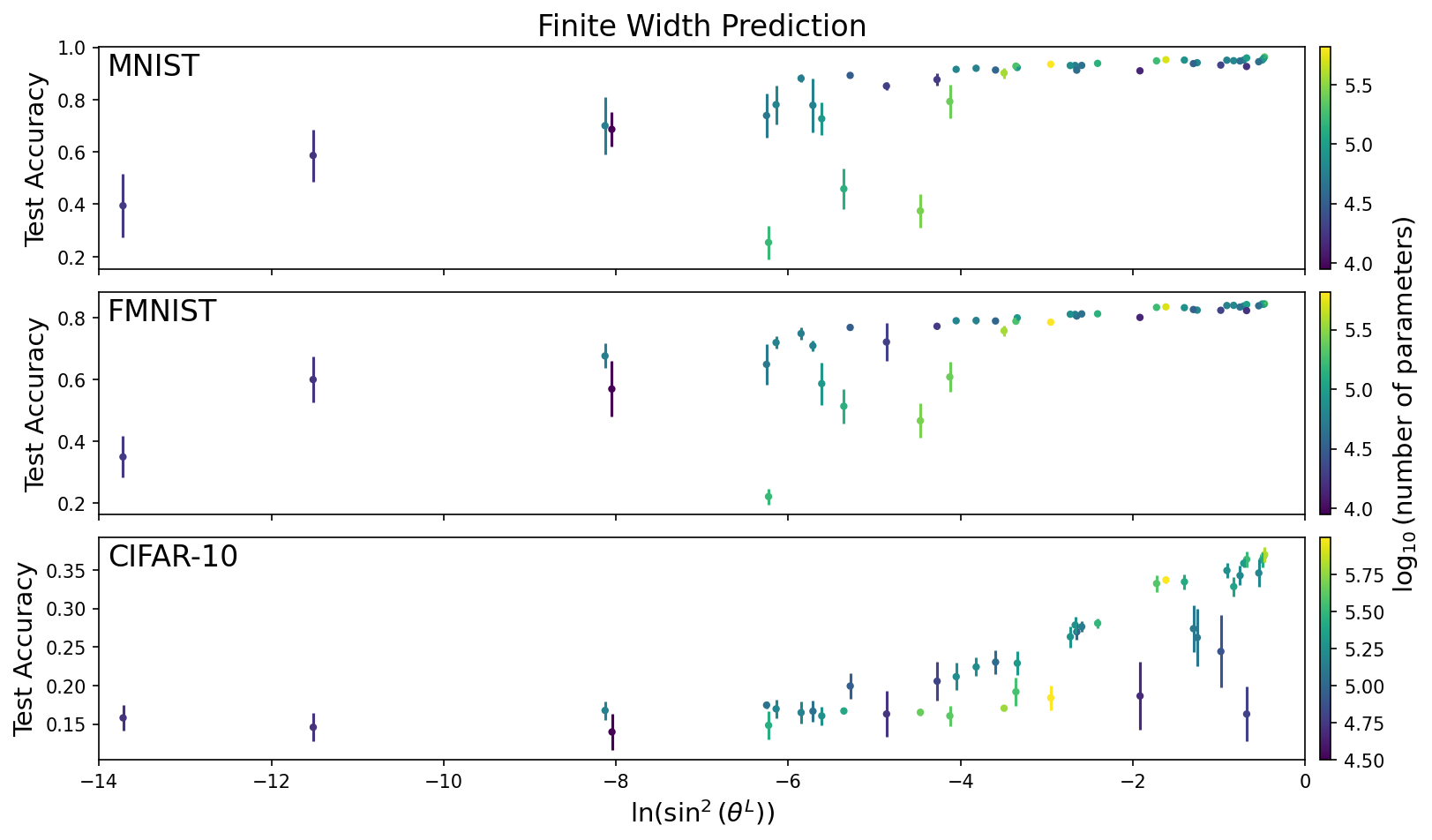}
    \caption[Comparison of Network Degeneracy to Training Performance]{\review{We compare 45 different network architectures trained on the MNIST \cite{mnist}, Fashion-MNIST \cite{fmnist}, and CIFAR-10 \cite{cifar} datasets 10 times each. Using the architecture of the network and \Cref{algo:update_rule}, we predict the angle between 2 orthogonal inputs at the final output layer of the network on initialization. We express the angle as $\ln(\sin^2(\theta_L))$, to follow the form used when  developing the finite width approximations. The angle is plotted against the accuracy of each network on the test data after training, with error bars representing a 95\% confidence interval across the 10 runs. We observe that small angle $\theta_L$ is related to lower test accuracy. All networks are trained using 1 epoch, batch size $=100$, categorical cross-entropy loss, the ADAM optimizer, and default learning rate in the Keras module of TensorFlow \cite{tensorflow}. See Appendix \ref{app:network_architectures} for details on all of the network architectures used. The code which produced this figure can be found at \href{https://github.com/camjakub/Depth-Degeneracy-in-Neural-Networks}{\texttt{GitHub link}}.}}
    \label{fig:simulations}
\end{figure} When \Cref{algo:update_rule} predicts that $\theta$ is small at initialization, this serves a warning that the network may train poorly i.e. the test accuracy seems to be lower. Before going through the computationally expensive process of training many networks to assess their performance, this prediction could be used to quickly filter out network architectures that are unlikely to perform well due to excessive degeneracy.

\subsubsection{Comparison to Infinite Width Update Rule}
\label{sec:infinite_width_comparison}

We compare the performance of Algorithm \ref{algo:update_rule} (which takes into account the layer size $n$) to the infinite width update rule (given in Approximation \ref{approx:infinite}). Since the infinite width prediction cannot account for the differences in layer widths, all networks with the same depth have the same prediction. In contrast, our method considers both the depth and width of each layer to predict how the angle propagates layer-by-layer through the network. \Cref{fig:comparison}-Left illustrates how our method yields different angle predictions for different architectures with the same depth, while the infinite width method does not. \Cref{fig:comparison}-Right shows the how the infinite width predictions differ from our ``finite width'' method which takes into account fluctuations of size $\mathcal{O}(n^{-1})$ in each layer.

\begin{figure}[h]
    \centering
    \includegraphics[scale=0.48]{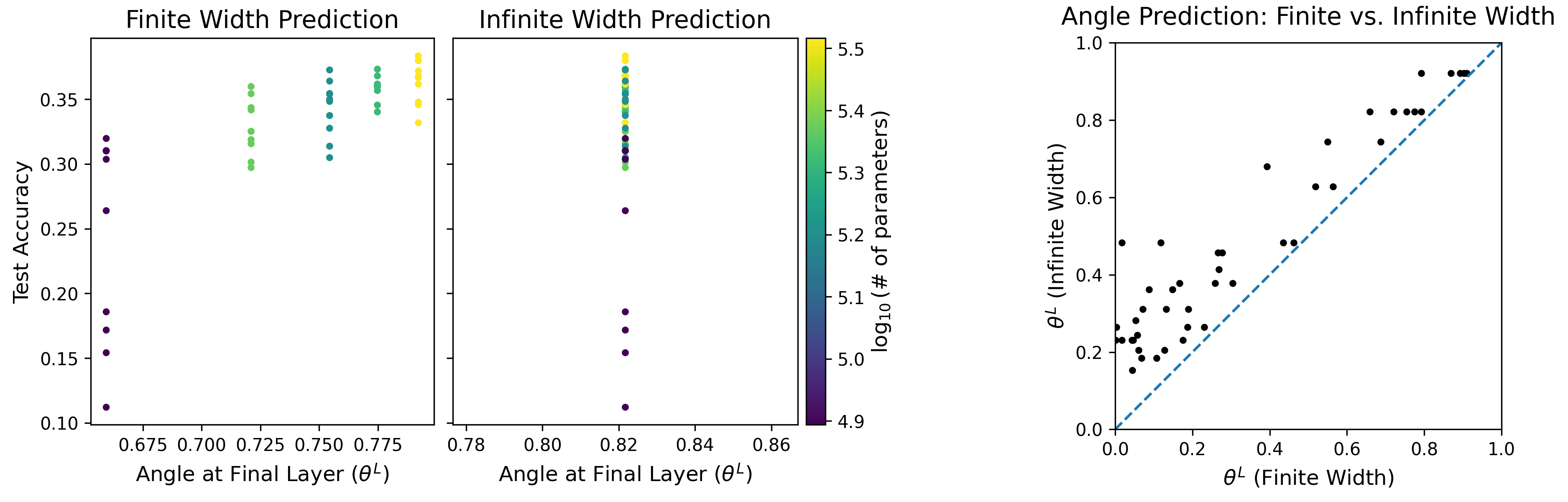}
    \caption[Finite Versus Infinite Width Angle Prediction Comparison]{\review{Left: Comparison of the finite and infinite width predictions for 5 network architectures with a depth of $L=3$ trained 10 times each on the CIFAR-10  dataset \cite{cifar}. The infinite width predicts the same final angle for all networks, since it only depends on network depth. Right: Using the same 45 network architectures as in \Cref{fig:simulations}, we plot a comparison of the predicted angle $\theta_L$ using \Cref{algo:update_rule} (finite width) versus the infinite width prediction. We see that the infinite width prediction tends to underestimate the rate at which $\theta_\ell$ tends towards 0.}}
    \label{fig:comparison}
\end{figure} }

\subsection{J Functions and Infinite Width Limits}
\label{sec:intro_jfunc}
In 2009, \citet{cho-saul} introduced the $p$-th moment for correlated ReLU-Gaussians, which they denoted with the letter $J$,
\begin{equation}\label{eq:J_ChoSaul}
J_p(\theta) := 2\pi \mathbf{E}\left[ \varphi^p(G) \varphi^p(\hat{G})  \right],
\end{equation}
where $p \in \mathbb{N}$, $\varphi(x) = \max\{x,0\}$ is the ReLU function, and $G, \hat{G} \in \mathbb{R}$ are marginally two standard $\mathcal{N}(0,1)$ Gaussian random variables with correlation $\cov(G,\hat{G})=\cos(\theta)$. This quantity has found numerous applications for infinite width networks. One simple application of $J_1$ appears in the infinite width approximation for $\cos(\theta_\ell)$, where $\ell$ is fixed and we take the limit $n_1, n_2, \ldots n_\ell \to \infty$ (see \Cref{app:infinite_width} for a detailed derivation):
\begin{approximation} \label{approx:infinite}
The infinite-width approximation for the angle $\theta_{\ell+1}$ given $\theta_\ell$ is    
\begin{equation} \label{eq:infinte_width_update}
\cos\left( \theta_{\ell+1} \right) = \frac{J_1( \theta_\ell )}{\pi}=\frac{\sin(\theta_\ell)+(\pi-\theta_\ell)\cos(\theta_\ell)}{\pi}. 
\end{equation}
\end{approximation}
The formula for $J_1$ is the $p=1$ case of a remarkable explicit formula for $J_p$ derived by \citet{cho-saul} namely, 
$$J_p(\theta) = (-1)^p(\sin \theta)^{2p+1} \left(\frac{1}{\sin\theta} \frac{\partial}{\partial \theta} \right)^p \left(\frac{\pi-\theta}{\sin\theta} \right).$$ 
This allows one to derive asymptotics of $\theta_\ell$ in the infinite width limit, as in Section \ref{sec:consequences}. However, there are several limitations to this approach. Most important is that the infinite width limit is not a good approximation when the network depth $\ell$ is comparable to the network width $n$ (\cite{li_nica_roy}). The infinite width limit uses the law of large numbers to obtain \eqref{eq:infinte_width_update}, thereby discarding random fluctuations. For very deep networks, microscopic fluctuations (on the order of $\mathcal{O}(1/n_\ell)$) from layer to layer can accumulate over $\ell$ layers to give macroscopic effects. This is why the infinite width predictions for $\theta_\ell$ are not a good match to the simulations in Figure \ref{updaterule}; very deep networks are far from the infinite width limit in this case.  See Figure \ref{updaterule} where the infinite width predictions are compared to finite networks.

Instead, to analyze the evolution of the angle $\theta_\ell$ more accurately, we need to do something more precise than the law of large numbers to capture the effect of these microscopic fluctuations. This is the approach we carry out in this paper. While the mean only depends on the $p$-th moment functions $J_p$ from \eqref{eq:J_ChoSaul}, these fluctuations depend on the \emph{mixed} moments, which we denote by $J_{a,b}$ for $a,b \in \mathbb{N}$ as follows\footnote{Note that compared to Cho and Saul's definition for $J_p$, we omit the factor of $2\pi$ in our definition of $J_{a,b}$. The factor of $2\pi$ seems natural when $a+b$ is even (like the case $a=b=p$ that Cho-Saul considered), but when $a+b$ is odd a different factor of $2\sqrt{2\pi}$ appears! Therefore the factor of $2\pi$ would confuse things in the general case (see Table \ref{jtable}). The correct translation between Cho-Saul $J_p$ and our $J_{a,b}$ is $J_p = 2\pi J_{p,p}$.}
\begin{equation}\label{eq:J_ab_def}
J_{a,b}(\theta) := \mathbf{E}\left[ \varphi^a(G) \varphi^b(\hat{G})  \right],
\end{equation}
with $G,\hat{G}$ again as in \eqref{eq:J_ChoSaul} are marginally $\mathcal{N}(0,1)$ with correlation $\cos(\theta)$. In  \Cref{sec:expected_value} we carry out a detailed asymptotic analysis to write the evolution of $\theta_\ell$ in terms of the mixed moments $J_{a,b}$.
In order to make useful predictions, one must also calculate a formula for $J_{a,b}(\theta)$. Unfortunately, the method that Cho-Saul originally proposed for this does \emph{not} seem to work when $a\neq b$. This is because that method used contour integrals, and relied on using certain trig identities  which do not hold when $a\neq b$. Instead, in \Cref{sec:j_functions}, we introduce a new method, based on Gaussian integration by parts, to compute $J_{a,b}$ for general $a,b$ via a recurrence relation. By serendipity\footnote{This connection was first noticed by calculating the first few $J$ functions, and then using the On-Line Encyclopedia of Integer Sequence to discover the connection to Bessel number (\url{https://oeis.org/A001498}).}, we find a remarkable combinatorial connection between $J_{a,b}$ and the Bessel numbers (\cite{bessel}), which allows one to find an explicit (albeit complicated) formula for $J_{a,b}$ in terms of binomial coefficients. The formula for the first few functions are shown in Table \ref{jtable}, and the general explicit formula is presented in Theorem \ref{thm:j_ab}.

\renewcommand{\arraystretch}{1.5}
\begin{table}[h!]
\begin{center}
\begin{tabular}{|c|c|c|c|c|}
      \hline \diagbox[width=1.25cm,height=1cm]{$a$}{$b$} & 0 & 1 & 2 & 3 \\ \hline
     0 & $\frac{\scriptstyle\pi-\theta}{2\pi}$ & $\frac{\scriptstyle \cos\theta+1}{2\sqrt{2\pi}}$ & $\frac{\scriptstyle(\pi-\theta)+\sin\theta\cos\theta}{2\pi}$ & $\frac{\scriptstyle 2(\cos\theta+1) + \sin^2\theta\cos\theta}{2\sqrt{2\pi}}$\\ \hline 
     1 &  & $\frac{\scriptstyle \sin\theta + (\pi-\theta)\cos\theta}{2\pi}$ & $\frac{\scriptstyle (\cos\theta+1)^2}{2\sqrt{2\pi}}$& $\frac{\scriptstyle3(\pi-\theta)\cos\theta + \sin\theta\cos^2\theta + 2\sin\theta}{2\pi}$\\ \hline
     2 &  &  & $\frac{\scriptstyle (\pi-\theta)(2\cos^2\theta+1)+3\sin\theta\cos\theta}{2\pi}$ & $\frac{\scriptstyle 3\cos\theta(\cos\theta+1)^2+2(\cos\theta+1)+\sin^2\theta\cos\theta}{2\sqrt{2\pi}}$ \\ \hline
     3 &  &  &  & \begin{tabular}{@{}c@{}} $\frac{\scriptstyle(\pi-\theta)(6\cos^2 \theta+9)\cos \theta}{2\pi}$ \\ $\frac{\scriptstyle+5\sin\theta\cos^2\theta + (6\cos^2\theta + 4)\sin\theta}{2\pi}$ \end{tabular} \\ \hline
\end{tabular}
\end{center}
\caption{Table of formulas for the first few $J$ functions. Note that all entries have a denominator either $c_0 = 2\pi$ or $c_1 = 2\sqrt{2\pi}$ depending on the parity of $a+b$. These generalize $J_p(\theta)$ of \eqref{eq:J_ChoSaul} which appear on the diagonal of this table. Note that $J_{a,b} = J_{b,a}$ so only upper triangular entries are shown. An explicit formula for all $J_{a,b}$ is derived in \Cref{sec:explicit_formula}.}
\label{jtable}\
\end{table}

\subsection{Outline}

The two main contributions of this paper are to prove Theorem \ref{thm:mean_var} for the evolution of $\theta_\ell$ in terms of the mixed moments $J_{a,b}$, and then separately derive an explicit formula, Theorem \ref{thm:j_ab}, for any of the mixed moments $J_{a,b}$. Combined, these allow for the explicit formula Corollary \ref{thm:mean_var_exp} and the useful simpler Approximations \ref{approx:simple} and \ref{approx:Gaussian}. See Figure \ref{updaterule} and Figure \ref{fig:theta_plots} for comparisons of these predictions to Monte-Carlo simulations. We also believe the methods proposed here are flexible enough to be modified to apply to non-linearities other than ReLU and to different neural network architectures beyond fully-connected networks in future work.

\Cref{sec:relu_nets} contains the analysis of the angle process and predicted distribution of $\ln(\sin^2(\theta_\ell))$ in deep ReLU networks. \Cref{sec:expected_value} covers our approximation of $\mathbf{E}[\ln(\sin^2(\theta_{\ell+1}))]$ which leads to the rule for $\theta_\ell$ as in equation \eqref{eq:update_simple}, while \Cref{sec:variance} outlines our approximation for $\var[\ln(\sin^2(\theta_{\ell+1}))]$. 

In \Cref{sec:j_functions}, we cover the derivation of the explicit formula for the $J$ functions. We state the main results of this section in \Cref{sec:j_main_results}, and cover the mathematical tools needed to solve the expectations using Gaussian integration by parts in \Cref{sec:G_I_P}. We develop the formula for $J_{a,b}$ by first finding a recursive formula in \Cref{sec:recursive_formula}, which reveals a connection between the $J$ functions and the Bessel numbers. This recursion is studied to develop an explicit formula for $J_{a,b}$ in \Cref{sec:explicit_formula}. 


\section{ReLU Neural Networks on Initialization}
\label{sec:relu_nets}
\begin{table}[ht] 
\begin{center}
\begin{tabular}{ |c|c| } 
\hline
 \textbf{Symbol} & \textbf{Definition} \\ \hline\hline
 $x \in \mathbb{R}^{n_{in}}$ & Input (e.g. training example) in the input dimension $n_{in}\in\mathbb{N}$ \\ \hline
 $\ell \in \mathbb{N}$ & Layer number. $\ell=0$ is the input \\ \hline
 $n_\ell \in \mathbb{N}$ & Width of hidden layer $\ell$ (i.e. number of neurons in layer $\ell$)\\ \hline
 $W^\ell \in \mathbb{R}^{n_{\ell+1} \times n_\ell}$ & Weight matrix for layer $\ell$. Initialized with iid standard Gaussian entries \\
 & $W^{\ell}_{a,b} \sim \mathcal{N}(0,1)$ \\ \hline
 $\varphi: \mathbb{R}^n \to \mathbb{R}^n$ & Entrywise ReLU activation function $\varphi(x)_i = \varphi(x_i)=\max\{x_i,0\} $ \\ \hline
 $z^\ell(x) \in \mathbb{R}^{n_\ell}$ & Pre-activation vector in the $\ell^{\text{th}}$ layer for input $x$ (a.k.a logits of layer $\ell$) \\ 
 & $z^1(x) := W^1x, \quad\quad  z^{\ell+1}(x) := \sqrt{\frac{2}{n_\ell} }W^{\ell+1} \varphi(z^\ell(x)).$\\ \hline
 $\varphi_\alpha^\ell, \varphi_\beta^\ell \in \mathbb{R}^{n_\ell}$ & Post-activation vector on inputs $x_\alpha, x_\beta$ respectively \\
 & $\varphi_\alpha^\ell := \varphi(z^\ell(x_\alpha)), \quad\quad \varphi_\beta^\ell := \varphi(z^\ell(x_\beta))$ \\ \hline
 $\theta_\ell \in [0,\pi]$ & Angle between $\varphi_\alpha^\ell$ and  $\varphi_\beta^\ell$ defined by $\cos(\theta_\ell) := \frac{\langle \varphi_\alpha^\ell, \varphi_\beta^\ell \rangle}{\Vert \varphi_\alpha^\ell \Vert \Vert \varphi_\beta^\ell \Vert }$ \\ \hline
 $\normratioo \in \mathbb{R}$ & Shorthand for the ratio $\normratioo := \normratio$ \\
\hline
\end{tabular}
\end{center}
\caption{Definition and notation used for fully connected ReLU neural networks.}
\label{tbl:notation}
\end{table}
In this section, we analyze ReLU neural networks and show how the the mixed moments $J_{a,b}$ appear in evolution of the angle $\theta_\ell$ on initialization. We define the notation we use for a fully connected ReLU neural network, along with other notations we will use in Table \ref{tbl:notation}. 
%
Note that the factor of $\sqrt{2/n_\ell}$ in our definition is implementing the so called He initialization \citep{he_initialization}, which ensures that $\mathbf{E}[\Vert z^\ell \Vert^2]=\Vert x \Vert^2$ for all layers $\ell$. This initialization is known to be the ``critical'' initialization for taking large limits of the network \citep{principles_deep_learning, hayou}. Given this neural network, we wish to study the evolution of 2 inputs $x_\alpha$ and $x_\beta$ as they traverse through the layers of the network. Specifically, we wish to study how the angle $\theta$ between the inputs changes as the inputs move from layer to layer.

The starting point for our calculation is to notice that because the weights are Gaussian, the values of $\phaa,\phbb$ are jointly Gaussian given the vectors of $\pha,\phb$. In fact, it turns out that by properties of Gaussian random variables, one only needs to know the values of the scalars $\Vert \pha \Vert,\Vert \phb \Vert$ and $\theta_\ell$ to understand the full distribution of $\phaa,\phbb$. (see \Cref{app:identities} for details) By using the positive homogeneity of the ReLU function $\varphi( \lambda x) = \lambda \varphi(x)$ for $\lambda >0$, we can factor out the effect of the norm of each vector in layer $\ell$. After some manipulations, these ideas lead us to the following identities that are the heart of our calculations; a full derivation of these quantities are provided in \Cref{app:identities} and \ref{app:cauchy_binet}.
\begin{align}
    \|\phaa\|^2 = \frac{\|\pha \|^2}{n_\ell} \sum_{i=1}^{n_\ell}2\ph^2(G_i), \quad&\quad \| \phbb\|^2 = \frac{\|\phb \|^2}{n_\ell} \sum_{i=1}^{n_\ell}2\ph^2(\hat{G}_i), \label{sums_1} \\ 
    \langle \phaa, \phbb\rangle &= \frac{\|\pha\| \|\phb\|}{n_\ell}\sum_{i=1}^{n_\ell}2 \ph(G_i) \ph(\hat{G}_i), \label{sums_2} \\
    \frac{\|\phaa \|^2 \|\phbb \|^2}{\| \pha \|^2 \|\phb \|^2} \sin^2(\theta_{\ell+1}) &=  \frac{2}{n_\ell^2} \sum_{i,j=1}^{n_\ell} \left(\ph(G_i)\ph(\hat{G}_j) - \ph(G_j)\ph(\hat{G}_i) \right)^2,
    \label{sums_4}
\end{align}
where $G_i$, $\hat{G}_i$ are all marginally $\mathcal{N}(0,1)$,
%
with correlation $\cov(G_i,\hat{G}_i)=\cos(\theta_\ell)$ and independent for different indices $i$. The identity in (\ref{sums_4}) is derived using the \emph{determinant of the Gram matrix} for vectors $\phaa,\; \phbb$ (full derivation given in \Cref{app:cauchy_binet}).  Combining the equations in (\ref{sums_1}) gives us a useful identity for the ratio $\normratioo$, namely:
\begin{gather}
    \normratioo = \frac{4}{n^2_\ell} \sum_{i,j=1}^{n_\ell} \ph^2(G_i) \ph^2(\hat{G}_j) \label{eq:norm_ratio_identity}.
\end{gather}
Given some $\theta_\ell$, we wish to predict the behaviour of $\theta_{\ell+1}$. Rather than studying $\theta_{\ell+1}$ directly, we instead study the quantity $\ln(\sin^2(\theta_{\ell+1}))$. This allows us to use convenient approximations and identities for quantities we are interested in. And indeed, a post-hoc analysis shows that as $\theta \to 0$, the random variable $\ln \sin^2(\theta_{\ell+1})$ has a \emph{non-zero constant} variance which depends only on $n_\ell$. This is in contrast to $\theta_\ell$ itself which has variance tending to \emph{zero}. This is one reason why the Gaussian approximation for $\ln \sin^2(\theta_\ell) \in (-\infty,0] \subset (-\infty,\infty)$ works well, whereas Gaussian approximations for $\theta_\ell$ or $\cos(\theta_\ell) \in [-1,1]$ are less accurate. \review{This observation can equivalently be understood as the observation that the random fluctuations seem to be multiplicative, rather than additive, which is why taking the log makes them more amenable to calculation.} We first derive a formula for $\mathbf{E}\left[ \ln(\sin^2(\theta_{\ell+1}))\right]$. 

\subsection{Expected Value}
\label{sec:expected_value}
In this section, we show how to compute the expected value of $\ln(\sin^2(\theta^{\ell}))$ in terms of the $J$ functions as in Theorem \ref{thm:mean_var}.  Firstly, we rewrite this expectation as the difference
\begin{gather}
\mathbf{E}\left[\ln(\sin^2(\theta_{\ell+1}))\right] = \mathbf{E}\left[\ln\left(\normratioo\sin^2(\theta_{\ell+1})\right)\right] - \mathbf{E}\left[\ln\left(\normratioo\right)\right]. 
    \label{E_ln_sin}
\end{gather}
The two random variables $\normratioo$ and $\normratioo \sin^2(\theta_{\ell+1})$ in \eqref{E_ln_sin} both have interpretations in terms of sums of Gaussians as in \eqref{sums_4} and \eqref{eq:norm_ratio_identity} which makes it possible to calculate their moments in terms of the $J$ functions. To enable our use of the moments here, we use the following approximation of $\ln(X)$ for a random variable $X$, which is based on the Taylor expansion for $\ln(1+x)=x-\frac{1}{2}x^2+\ldots$ (a full derivation is given in \Cref{app:expected_value}):
\begin{align}
    \ln(X) = \ln(\mathbf{E}[X]) + \frac{X-\mathbf{E}[X]}{ \mathbf{E}[X]} - \frac{(X-\mathbf{E}[X])^2}{2 \mathbf{E}[X]^2} + \epsilon_2\left(\frac{X - \mathbf{E}[X]}{\mathbf{E}[X]}\right), \label{eq:lnX_approx}
\end{align}
where $\epsilon_2(x)$ is the Taylor remainder term in $\ln(1+x) = x - \frac{x^2}{2} +\epsilon_2(x)$ and satisfies $\epsilon_2(x)=\mathcal{O}(x^3)$. Applying this approximation to the terms appearing on the right hand side of \eqref{E_ln_sin}, and taking expected value of both sides, we obtain the estimates 
\begin{align*}
    \mathbf{E}\left[\ln\left(\normratioo\sin^2(\theta_{\ell+1})\right)\right] &= \ln\left(\mathbf{E}\left[\normratioo\sin^2(\theta_{\ell+1})\right]\right) - \frac{\var\left[\normratioo\sin^2(\theta_{\ell+1})\right]}{2\mathbf{E}\left[\normratioo\sin^2(\theta_{\ell+1})\right]^2} + \mathcal{O}(n^{-2}_\ell), \\
    \mathbf{E}\left[\ln\left(\normratioo\right)\right] &= \ln\left(\mathbf{E}\left[\normratioo \right]\right) - \frac{\var\left[\normratioo \right]}{2\mathbf{E}\left[\normratioo \right]^2}+ \mathcal{O}(n^{-2}_\ell).
\end{align*}
To control the error here, we have used here the fact that $\normratioo$ and $\normratioo\sin^2(\theta_{\ell+1})$ can be written as averages over random variables as in  (\ref{sums_1} - \ref{sums_4}). This allows us to show the 3rd central moments for $\normratioo$ and $\normratioo\sin^2(\theta_{\ell+1})$ are $\mathcal{O}(n^{-2}_{\ell})$; see \Cref{app:expected_value} 
for details. This approximation is convenient because we are able to calculate the values on the right hand side of the equations in terms of the moments $J_{a,b}$ by expanding/taking expectations of the representations (\ref{sums_1} - \ref{sums_4}).  The key quantities we calculate are
\begin{align}
    \mathbf{E}\left[\normratioo \right] &= \frac{4J_{2,2}-1}{n_\ell}+1, \label{eq:exp_norm}\\
    \var\left[\normratioo \right] &= \frac{4}{n_\ell}(J_{2,2} +1) + \frac{16}{n_\ell^2}\left(2J_{4,2} - \frac{5}{2}J_{2,2} + J_{2,2}^2 + \frac{5}{8} \right) + \mathcal{O}\left(n_\ell^{-3} \right), \label{eq:var_norm}\\
     \mathbf{E}\left[\normratioo \sin^2(\theta_{\ell+1})\right]
     &= \frac{(n_\ell-1)(1-4J_{1,1}^2)}{n_\ell}, \label{eq:exp_norm_sin}\\
     \var\left[\normratioo \sin^2(\theta_{\ell+1})\right] &= \frac{8\left( -8J_{1,1}^4 + 8J_{1,1}^2J_{2,2} + 4J_{1,1}^2 - 8J_{1,1}J_{3,1} + J_{2,2} + 1\right)}{n_\ell} 
    + \mathcal{O}\left(n_\ell^{-2} \right), \label{eq:var_norm_sin}
\end{align}
 where $J_{a,b}=J_{a,b}(\theta_\ell)$. These formulas are calculated in \Cref{app:exp_value_calculation} and \Cref{app:formula_calculation} by a combinatorial expansion using the representations from (\ref{sums_1}-\ref{sums_4}). Combining these gives the result for $\mu(\theta,n)$ in Theorem \ref{thm:mean_var}. Note that to obtain a more accurate approximation, we would simply include more terms in the variance expressions in (\ref{eq:var_norm}, \ref{eq:var_norm_sin}).

\subsection{Variance of $\ln(\sin^2(\theta_{\ell+1}))$}
\label{sec:variance}
In this section, we show how to compute the variance of $\ln(\sin^2(\theta^{\ell}))$ in terms of the $J$ functions as in Theorem \ref{thm:mean_var}. We can rewrite $\var[\ln(\sin^2(\theta_{\ell+1}))]$ in the following way:
\begin{gather}
    \var[\ln(\sin^2(\theta_{\ell+1}))] = \var\left[ \ln\left( \normratioo \sin^2(\theta_{\ell+1})\right) - \ln\left( \normratioo\right)\right] \label{eq:ln_variance_trick}\\
    = \var\left[ \ln\left( \normratioo \sin^2(\theta_{\ell+1})\right)\right] + \var\left[  \ln\left( \normratioo\right)\right] - 2\cov\left(  \ln\left( \normratioo \sin^2(\theta_{\ell+1})\right),  \ln\left( \normratioo\right)\right). \nonumber
\end{gather}

\noindent We have now expressed this in terms of $R^{\ell+1}$ and $\normratioo \sin^2(\theta_{\ell+1})$ which will allow us to use identities as in (\ref{sums_1} - \ref{sums_4}) in our calculations. \Cref{app:variance} and \Cref{app:covariance_approx} cover the method used to approximate the unknown variance and covariance terms above.
%
%
Once again, we control the error term arising from moments in the error term of the Taylor series by using representation as sums (\ref{sums_1} - \ref{sums_4}). 
\noindent We have already calculated most of the quantities on the right hand side already in our calculation for $\mu(\theta,n)$. The only new term is
\begin{gather*}
    \cov\left( \normratioo \sin^2(\theta_{\ell+1}),\normratioo \right) = \frac{1}{n_\ell}\left(16J_{1,1}^2 - 32J_{1,1}J_{3,1} + 8J_{2,2} +8 \right) + \mathcal{O}\left(n_\ell^{-2} \right).
\end{gather*}

This is again computed by a combinatorial expansion of the sums (\ref{sums_1}-\ref{sums_4}). \noindent (Full calculation given in \Cref{app:formula_calculation}). We now have solved for all of the functions needed to perform our approximation of $\var[\ln(\sin^2(\theta_{\ell+1}))]$. Putting it together, we end up with the expression for $\sigma^2(\theta,n)$ as in \eqref{eq:sigma_sq}. We compare the predicted probability distribution of $\ln(\sin^2(\theta))$ using our formulas $\mu(\theta,n)$ and $\sigma^2(\theta,n)$ to empirical probability distributions in Figure \ref{updaterule}.




\section{Explicit Formula for the Mixed-Moment J Functions}
\label{sec:j_functions}
In this section we develop a combinatorial method that allows us \review{to} compute exact formulas for the $J$ functions. The method is to use Gaussian integration by parts to find a recurrence relationship between the moments $J_{a,b}$, and then solve it explicitly. We begin by generalizing the definition of $J_{a,b}$ from \eqref{eq:J_ab_def} to include $a=0$ and/or $b=0$ as follows. Let $G$, $W$ be \emph{independent} $\mathcal{N}(0,1)$ variables. Then, we define the functions $J_{a,b}(\theta)$ as 
\begin{align}
    J_{a,b}(\theta) = \mathbf{E}[G^a (G\cos\theta  + W\sin\theta )^b \;1\{G>0\}\; 1\{G\cos\theta  + W\sin\theta  >0 \}]\label{eq:j_generalization},
\end{align}
where $a,b \in \mathbb{N} \cup \{0\}$.  Note that $G \cos\theta+ W\sin\theta = \hat{G}$ is marginally $\mathcal{N}(0,1)$ and has correlation $\cos(\theta)$ with $G$, matching the original definition. The ReLU function satisfies the identity $\varphi(x)^a = x^a 1\{x>0\}$ for $a\geq 1$, so \eqref{eq:j_generalization} generalizes \eqref{eq:J_ab_def} to the case $a=0$. We also note that $J_{a,b}(\theta) = J_{b,a}(\theta)$ for all $a,b \in \mathbb{N}\cup\{0\}$.

\remark{ \review{Note that \eqref{eq:j_generalization} can equivalently be written in terms of the correlation $\rho = \cos \theta$ and $\sqrt{1-\rho^2} = \sin \theta$ as follows. 

$$G^a (\rho G + \sqrt{1-\rho^2} W)^b 1\{ G >0 \} 1\{\rho G + \sqrt{1-\rho^2} W\}$$

Some authors prefer to work with the correlation $\rho$ rather than the angle $\theta$. In this work we choose to work with the angle $\theta$ since our technique works well to directly see how quickly $\theta \to 0$.}}


\subsection{Statement of Main Results and Outline of Method}
\label{sec:j_main_results}

By using the method of Gaussian integration by parts, we are able to derive recurrence relations for the $J_{a,b}$ functions. Since the definition of $J_{a,b}$ involves the indicator function $1\{G > 0\}$, we must make sense of what the derivative of this function means for the purposes of integration by parts; see Section \ref{sec:G_I_P} where this is carried out.  Then, by use of the generalized Gaussian integration by parts formula, we obtain the following recurrence relations for $J_{a,b}$.

\begin{prop}[Recurrence relations for $J_{a,b}$]
\label{prop:rec_relations}
For $a \geq 2$, the sequence $J_{a,0}$ satisfies the recurrence relation:
\begin{equation}
    J_{a,0}(\theta) = (a-1) J_{a-2,0}(\theta) + \frac{\sin^{a-1}\theta \cos \theta}{c_{a \bmod 2}}(a-2)!!,
\end{equation}
where $c_0 = 2\pi$, $c_1 = 2\sqrt{2\pi}$. For $a\geq 2$, and $b\geq 1$, the collection $J_{a,b}$ satisfies the following two-index recurrence relation:
\begin{equation}
J_{a,b}(\theta) = (a-1)J_{a-2, b}(\theta) + b \cos \theta J_{a-1,b-1}(\theta).
\end{equation}
\end{prop}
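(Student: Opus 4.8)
The plan is to prove both recurrences by a single application of Gaussian integration by parts in the variable $G$, using the representation $J_{a,b}(\theta) = \mathbf{E}[G^a \hat{G}^b\, 1\{G>0\}\,1\{\hat{G}>0\}]$ with $\hat{G} = G\cos\theta + W\sin\theta$ and $G,W$ independent standard normals. First I would invoke the generalized integration-by-parts identity developed in \Cref{sec:G_I_P}, namely $\mathbf{E}[G\, h(G,W)] = \mathbf{E}[\partial_G h(G,W)]$, in which the derivative of an indicator $1\{\cdot>0\}$ is interpreted as the corresponding Dirac delta. Writing $G^a = G\cdot G^{a-1}$ and taking $h = G^{a-1}\hat{G}^b\, 1\{G>0\}\,1\{\hat{G}>0\}$, the product rule together with $\partial_G \hat{G} = \cos\theta$ produces exactly four terms,
\begin{align}
\mathbf{E}[\partial_G h] &= (a-1)J_{a-2,b} + b\cos\theta\, J_{a-1,b-1} \nonumber \\
&\quad + \mathbf{E}[G^{a-1}\hat{G}^b\,\delta(G)\,1\{\hat{G}>0\}] + \cos\theta\,\mathbf{E}[G^{a-1}\hat{G}^b\,1\{G>0\}\,\delta(\hat{G})]. \nonumber
\end{align}

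The first two are the ``bulk'' terms, giving $(a-1)J_{a-2,b}$ and $b\cos\theta\, J_{a-1,b-1}$ directly from the definition of $J$. The last two are ``boundary'' terms carrying $\delta(G)$ and $\cos\theta\,\delta(\hat{G})$. The key structural observation is that each boundary term is multiplied by a monomial that vanishes on the relevant boundary: the $\delta(G)$ term carries the factor $G^{a-1}$, which vanishes at $G=0$ whenever $a\ge 2$, and the $\delta(\hat{G})$ term carries $\hat{G}^b$, which vanishes at $\hat{G}=0$ whenever $b\ge 1$. Hence for $a\ge 2$ and $b\ge 1$ both boundary terms drop out, and the two-index recurrence $J_{a,b} = (a-1)J_{a-2,b} + b\cos\theta\, J_{a-1,b-1}$ follows at once.

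For the $b=0$ recurrence the $\delta(G)$ term still vanishes (still requiring $a\ge 2$), but the $\delta(\hat{G})$ term no longer carries an $\hat{G}^b$ factor and therefore survives; evaluating it is where the nontrivial computation lives. I would compute $\cos\theta\,\mathbf{E}[G^{a-1}\,1\{G>0\}\,\delta(\hat{G})]$ by integrating out $W$ against the delta via $\delta(g\cos\theta + w\sin\theta) = \sin^{-1}\theta\,\delta(w + g\cot\theta)$ (valid for $\theta\in(0,\pi)$), which collapses the joint density to $\frac{1}{2\pi\sin\theta}\exp(-g^2/(2\sin^2\theta))$ on $\{g>0\}$. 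After the substitution $u = g/\sin\theta$ this reduces the surviving term to $\frac{\cos\theta\,\sin^{a-1}\theta}{2\pi}\int_0^\infty u^{a-1}e^{-u^2/2}\,du$, a one-sided Gaussian moment. Expressing this integral through the absolute moments $\mathbf{E}[|Z|^{a-1}]$ of a standard normal yields $(a-2)!!$ up to a parity-dependent constant, and tracking that constant is precisely what produces $c_{a \bmod 2}$: the even case ($a-1$ odd) contributes the extra factor $\sqrt{2/\pi}$ so that both parities collapse to the single formula $\frac{\sin^{a-1}\theta\cos\theta}{c_{a \bmod 2}}(a-2)!!$ with $c_0 = 2\pi$ and $c_1 = 2\sqrt{2\pi}$.

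The main obstacle I anticipate is not the algebra but the rigorous justification of the boundary terms: one must make precise the sense in which $\partial_G 1\{\hat{G}>0\}$ equals $\cos\theta\,\delta(\hat{G})$ inside the expectation, i.e.\ that integration by parts against these distributional derivatives is legitimate for the integrands at hand. This is exactly the content of \Cref{sec:G_I_P}; once that machinery is in place, the vanishing-on-the-boundary argument and the explicit one-sided Gaussian moment evaluation are routine. A minor secondary check is the double-factorial bookkeeping and the parity collapse that unifies the even and odd cases into one expression valid for all $a\ge 2$.
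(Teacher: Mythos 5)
Your proposal is correct and takes essentially the same approach as the paper: Gaussian integration by parts in $G$ applied to $G\cdot G^{a-1}\hat{G}^b 1\{G>0\}1\{\hat{G}>0\}$, with the two bulk terms producing $(a-1)J_{a-2,b}+b\cos\theta\, J_{a-1,b-1}$, the boundary terms vanishing for $a\ge 2$, $b\ge 1$ because the factors $G^{a-1}$ and $\hat{G}^b$ kill the respective deltas, and the surviving $\delta(\hat{G})$ term in the $b=0$ case evaluating to $\frac{\sin^{a-1}\theta\cos\theta}{c_{a\bmod 2}}(a-2)!!$. Your only deviation is mechanical: where you integrate out $W$ directly against $\delta(\hat{G})$ and substitute $u=g/\sin\theta$, the paper performs the orthogonal change of variables $(G,W)\mapsto(Z,Y)$ with $Z=\hat{G}$ and then invokes Fact \ref{fact:dirac_delta} and Lemma \ref{lem:powers_of_phi}; this is the same computation in different notation, and your parity bookkeeping for $c_0$, $c_1$ checks out.
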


The same integration by parts technique that yields the recurrence relation also makes it easy to evaluate the first few $J$ functions. They are as follows:

\begin{prop}[Explicit Formula for $J_{0,0},J_{1,0},J_{1,1}$]
\label{prop:j_small}

$J_{0,0}$, $J_{1,0}$, and $J_{1,1}$ are given by
\begin{equation} \label{eq:J_small}
    J_{0,0}(\theta) = \frac{\pi - \theta}{2\pi}, \quad\quad
    J_{1,0}(\theta) = \frac{1+\cos \theta}{2 \sqrt{2 \pi}}, \quad\quad
    J_{1,1}(\theta) = \frac{\sin \theta + (\pi - \theta)\cos \theta }{2 \pi}.
\end{equation}
\end{prop}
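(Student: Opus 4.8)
The plan is to read the three quantities off the generalized definition \eqref{eq:j_generalization} as explicit low-order moments of the correlated pair $(G,\hat G)$, where $\hat G = G\cos\theta + W\sin\theta$ with $G,W$ independent standard Gaussians, and then evaluate each using the Gaussian integration-by-parts machinery of Section \ref{sec:G_I_P}. Concretely, $J_{0,0} = \mathbf{E}[1\{G>0\}1\{\hat G>0\}] = \mathbf{P}(G>0,\ \hat G>0)$ is a pure orthant probability, $J_{1,0} = \mathbf{E}[G\,1\{G>0\}1\{\hat G>0\}]$, and $J_{1,1} = \mathbf{E}[G\hat G\,1\{G>0\}1\{\hat G>0\}]$. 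These are precisely the base cases that the recurrence of Proposition \ref{prop:rec_relations} cannot reach, so they must be computed by hand. I would treat $J_{0,0}$ first and then bootstrap $J_{1,0}$ and $J_{1,1}$ from it with one integration by parts each.

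For $J_{0,0}$ the cleanest route avoids any special-function identities: since $(G,W)$ is a rotationally symmetric standard Gaussian in the plane, the probability of the wedge $\{g>0\}\cap\{g\cos\theta+w\sin\theta>0\}$ equals its angular width divided by $2\pi$. Writing the two constraints in polar form as $\cos\alpha>0$ and $\cos(\alpha-\theta)>0$, their intersection is the arc $\alpha\in(\theta-\tfrac{\pi}{2},\tfrac{\pi}{2})$, of width $\pi-\theta$ for $\theta\in(0,\pi)$, which gives $J_{0,0}=\frac{\pi-\theta}{2\pi}$.

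For $J_{1,0}$ and $J_{1,1}$ I would apply the Gaussian integration-by-parts identity in the variable $G$, namely $\mathbf{E}[G\,h(G,W)]=\mathbf{E}[\partial_G h(G,W)]$. Writing $J_{1,0}=\mathbf{E}[G\,h]$ with $h=1\{G>0\}1\{\hat G>0\}$, differentiating in $G$ produces two Dirac terms: a $\delta(G)1\{\hat G>0\}$ from $1\{G>0\}$, and $\cos\theta\,1\{G>0\}\delta(\hat G)$ from the chain rule on $1\{\hat G>0\}$ (since $\partial_G\hat G=\cos\theta$). Each delta localizes the joint density onto a line; the $\delta(G)$ term becomes $\phi(0)\,\mathbf{P}(W>0)=\frac{1}{2\sqrt{2\pi}}$, while the $\delta(\hat G)$ term, after integrating out the constraint and using $\cot^2\theta+1=\csc^2\theta$, evaluates to $\cos\theta\cdot\frac{1}{2\sqrt{2\pi}}$, so the sum is $\frac{1+\cos\theta}{2\sqrt{2\pi}}$. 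For $J_{1,1}=\mathbf{E}[G\,h]$ with $h=\hat G\,1\{G>0\}1\{\hat G>0\}$, the same step yields three contributions: $\cos\theta\,1\{G>0\}1\{\hat G>0\}$, a $\delta(G)$ term, and a term containing $\hat G\,\delta(\hat G)$ that vanishes by $x\delta(x)=0$. The first surviving term is exactly $\cos\theta\,J_{0,0}$, and the $\delta(G)$ term reduces to $\phi(0)\sin\theta\,\mathbf{E}[\varphi(W)]=\frac{\sin\theta}{2\pi}$, giving $J_{1,1}=\frac{\sin\theta+(\pi-\theta)\cos\theta}{2\pi}$.

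The \emph{main obstacle} is making the delta-function calculus rigorous: the derivative of the indicator $1\{\cdot>0\}$ is not an ordinary function, so the integration-by-parts steps above cannot be taken at face value and must instead be justified through the regularized/generalized version of Gaussian integration by parts developed in Section \ref{sec:G_I_P}. Once that formalism is in place, the remaining work consists of routine one-dimensional Gaussian integrals, and the two observations that $\hat G\,\delta(\hat G)=0$ and that the $\delta(\hat G)$ integral collapses via $\cot^2\theta+1=\csc^2\theta$ are exactly what keep the three evaluations short. A minor point to track is the standing assumption $\sin\theta>0$, i.e. $\theta\in(0,\pi)$, used when solving the constraints; the boundary cases $\theta\in\{0,\pi\}$ then follow by continuity.
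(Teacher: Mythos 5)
Your proposal is correct, and it splits into two parts relative to the paper. For $J_{1,0}$ and $J_{1,1}$ you are doing essentially what the paper does in \Cref{app:low_order}: Gaussian integration by parts in $G$ (Fact \ref{fact:G_I_P}), discarding the $\hat{G}\,\delta(\hat{G})$ term, recognizing the $\cos\theta\, J_{0,0}$ term, and evaluating the $\delta(G)$ contributions at $G=0$ via Fact \ref{fact:dirac_delta} together with $\mathbf{E}[\varphi(W)]=(\sqrt{2\pi})^{-1}$ from Lemma \ref{lem:powers_of_phi}; the only cosmetic difference is that you evaluate the $\delta(\hat{G})$ contribution by an explicit line integral (hence your $\cot^2\theta+1=\csc^2\theta$ step), whereas the paper first applies the rotation \eqref{cov} so that this term is also read off from Fact \ref{fact:dirac_delta} — the two computations are identical in content, and both inherit their rigor from \Cref{sec:G_I_P}, exactly as you note. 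Where you genuinely diverge is $J_{0,0}$: you compute it as a wedge probability, using rotational invariance of $(G,W)$ to identify the event with the arc $\alpha\in\left(\theta-\tfrac{\pi}{2},\tfrac{\pi}{2}\right)$ of width $\pi-\theta$, whereas the paper differentiates under the expectation to derive the ODE $J_{0,0}'(\theta)=-\tfrac{1}{2\pi}$ (again via the delta calculus) and integrates from $J_{0,0}(0)=\tfrac{1}{2}$. Your route is more elementary for this term — it needs no interchange of $\partial_\theta$ and $\mathbf{E}$ and no generalized integration by parts at all, and it makes the answer transparently an arc-length fraction; it also sidesteps a slip in the paper's write-up, which restates the initial condition as $J_{0,0}(0)=0$ when it should be $\tfrac{1}{2}$. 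What the paper's route buys is uniformity: every formula in the section, base cases and the recursion of Proposition \ref{prop:rec_relations} alike, is produced by the single integration-by-parts/delta mechanism, which is the tool the rest of \Cref{sec:j_functions} is built on.
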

See \Cref{app:low_order} for a derivation of these quantities. Note that \citet{cho-saul} have previously discovered the formulas for $J_{0,0}$ and $J_{1,1}$ by use of a completely different contour-integral based method.

The combination of Propositions \ref{prop:rec_relations} and \ref{prop:j_small} make it possible to practically calculate any value of $J_{a,b}$ when $a,b$ are not too large. However, by serendipity, we are able to find remarkable explicit formulas for $J_{a,b}$, which we report below.





\begin{prop}[Explicit Formulas for $J_{a,0}(\theta)$, $J_{a,1}(\theta)$]
\label{prop:J_a0}
 Let $a \geq 2$. Then, $J_{a,0}$ and $J_{a,1}$ are explicitly given by the following:
\begin{align*}
    J_{a,0}(\theta) = (a-1)!!\left( J_{a \bmod 2,0} + \frac{\cos \theta }{c_{a \bmod 2}} \sum_{\substack{i \not\equiv a(\bmod 2) \\ 0<i<a}} \frac{(i-1)!!}{i!!} \sin^i \theta \right),
\end{align*}
where $c_0 = 2\pi$, $c_1 = 2\sqrt{2\pi}$. We can then use the explicit formula for $J_{a,0}$ in the formula for $J_{a,1}$:
\begin{align*}
    J_{a,1}(\theta) = (a-1)!! \left(  J_{a \bmod 2,1} + \cos \theta \sum_{\substack{i \not\equiv a(\bmod 2)\\0<i<a}} \frac{J_{i,0}(\theta)}{i!!} \right),
\end{align*}
where an explicit formula for the first term (either $J_{1,0}$ or $J_{1,1}$ depending on the parity of $a$) is given in Proposition \ref{prop:j_small}.
\end{prop}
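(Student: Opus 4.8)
The plan is to prove both identities by induction on $a$ in steps of two, so that the parity $p:=a\bmod 2$ is preserved throughout and each rung connects $J_{a,0}$ to $J_{a-2,0}$ (respectively $J_{a,1}$ to $J_{a-2,1}$) exactly through the recurrences of Proposition \ref{prop:rec_relations}. The two base cases $a=2$ and $a=3$ are checked by hand: for these the parity-restricted sum $\sum_{\substack{i\not\equiv a\,(\bmod\,2)\\0<i<a}}$ contains a single term ($i=1$ when $a=2$, $i=2$ when $a=3$), and substituting the small-order values $J_{0,0},J_{1,0},J_{1,1}$ from Proposition \ref{prop:j_small} (together with the symmetry $J_{0,1}=J_{1,0}$) reproduces the recurrence verbatim. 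One could even start the induction from the empty-sum case $a=p$, but beginning at $a\in\{2,3\}$ keeps the double factorials unambiguous.

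For the inductive step of the $J_{a,0}$ formula I would assume the claim at level $a-2$ and insert it into $J_{a,0}(\theta)=(a-1)J_{a-2,0}(\theta)+\frac{(a-2)!!}{c_p}\sin^{a-1}\theta\cos\theta$. The homogeneous factor $a-1$ interacts with the inductive prefactor through the single elementary identity $(a-1)\,(a-3)!!=(a-1)!!$, which promotes $(a-3)!!$ to the desired $(a-1)!!$ and turns $(a-1)J_{a-2,0}$ into $(a-1)!!J_{p,0}+(a-1)!!\frac{\cos\theta}{c_p}\sum_{0<i<a-2}\frac{(i-1)!!}{i!!}\sin^i\theta$. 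The crux is then to recognize the leftover inhomogeneous term $\frac{(a-2)!!}{c_p}\sin^{a-1}\theta\cos\theta$ as precisely the missing $i=a-1$ summand: since the coefficient $\frac{(i-1)!!}{i!!}$ evaluated at $i=a-1$ equals $\frac{(a-2)!!}{(a-1)!!}$, multiplying by the prefactor $(a-1)!!$ cancels to leave exactly $(a-2)!!$. Adding this term extends the index set from $0<i<a-2$ to $0<i<a$ over the values of parity $1-p$, which completes the step.

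The $J_{a,1}$ formula follows by the identical mechanism applied to the second recurrence $J_{a,1}(\theta)=(a-1)J_{a-2,1}(\theta)+\cos\theta\,J_{a-1,0}(\theta)$. Again $(a-1)(a-3)!!=(a-1)!!$ promotes the prefactor, and the new inhomogeneous term $\cos\theta\,J_{a-1,0}(\theta)$ is identified as the $i=a-1$ summand of $(a-1)!!\cos\theta\sum_{0<i<a}\frac{J_{i,0}}{i!!}$, because the coefficient $\frac{1}{i!!}$ at $i=a-1$ equals $\frac{1}{(a-1)!!}$ and again cancels the prefactor. Notably this step never uses the closed form of $J_{a,0}$: it treats $J_{a-1,0}$ as a symbol, so one only substitutes the first formula of Proposition \ref{prop:J_a0} afterwards to obtain a fully explicit expression.

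I expect the main obstacle to be purely the bookkeeping of double factorials and the parity-restricted index set, not any conceptual difficulty. One must verify carefully that $(i-1)!!/i!!$ at the boundary index $i=a-1$ matches the coefficient produced by the recurrence, that the parity class $i\not\equiv a\,(\bmod\,2)$ is the one actually generated, and that the newly added index $i=a-1$ sits immediately above the previous top index $i=a-3$ with no gap. Confirming the base cases from Proposition \ref{prop:j_small} (with the convention $0!!=1$) is the other spot where a normalization constant $c_{a\bmod 2}$ could slip, so I would carry $p=a\bmod 2$ symbolically and check that $c_p$ is threaded consistently through every term.
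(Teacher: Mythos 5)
Your proposal is correct and takes essentially the same approach as the paper's proof in \Cref{app:j_proof}: both are parity-preserving inductions driven by the recurrence of Proposition \ref{prop:rec_relations} and the identity $(a-1)(a-3)!!=(a-1)!!$, with the inhomogeneous term of the recurrence recognized as the top ($i=a-1$) summand of the explicit sum. The only cosmetic differences are the direction of the rewriting (the paper peels the last term off the explicit formula at level $n+1$ and invokes the induction hypothesis, whereas you substitute the inductive hypothesis into the recurrence) and that you spell out the $J_{a,1}$ case, which the paper dismisses as ``similar.''
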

We can finally express $J_{a,b}$ as a linear combination of $J_{0,n}$ and $J_{1,n}$, as follows. (In light of the previous explicit formulas, this is an explicit formula for $J_{a,b}$.) It turns out that the coefficients are given in terms of two special numbers $P(a,b)$ and $Q(a,b)$ which are known as the Bessel numbers.


\begin{definition}[Bessel numbers] \label{defn:PQ}
The numbers $P(a,b)$ and $Q(a,b)$ are defined as follows,
\begin{align}
    P(a,b) &= \begin{cases}
    \frac{a!}{b! \left( \frac{a-b}{2}\right)! 2^{\frac{a-b}{2}} },  &  a \geq b,\; a \equiv b \;(\bmod \; 2) \\
    0, & \text{otherwise}
    \end{cases}, \label{p_explicit} \\
    Q(a,b) &= \begin{cases}
    \frac{\left( \frac{a+b}{2}\right)!}{b!}2^{\frac{b-a}{2}} \sum_{i=0}^{\frac{a-b}{2}} {a+1 \choose i},&  a \geq b,\; a \equiv b \;(\bmod \; 2) \\
    0, & \text{otherwise}
    \end{cases}. \label{q_explicit}
\end{align}
\end{definition}

$P(a,b)$ represents a family of numbers known as the Bessel numbers of the second kind (\cite{bessel}), and $Q(a,b)$ comes from a closely related family of numbers (\cite{pq}). Using these, we can express $J_{a,b}$ as follows.

\begin{theorem}[Explicit Formula for $J_{a,b}(\theta)$]
\label{thm:j_ab}
Let $b \geq 2, a\geq 1, \; b \geq a$. Then, we have the following formula for $J_{a,b}(\theta)$ in terms of $J_{0,n}$ and $J_{1,n}$
\begin{align*}
    J_{a,b} &= \sum_{\substack{i \equiv 0(\bmod 2)\\0 < i\leq a}} (b)_{a-i}  (\cos\theta)^{a-i} \left(P(a,a-i) - Q(a-1, a-1-i)\right)  J_{0,b-a+i} \\
    &+ \sum_{\substack{i \equiv 1(\bmod 2) \\ 0 < i \leq a}} (b)_{a-i} (\cos\theta)^{a-i} Q(a-1, a-i) J_{1,b-a+i}.
\end{align*}
\end{theorem}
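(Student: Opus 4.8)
The plan is to prove the formula by induction on the first index $a$, using the two-index recurrence $J_{a,b} = (a-1)J_{a-2,b} + b\cos\theta\, J_{a-1,b-1}$ from Proposition \ref{prop:rec_relations}. Conceptually, iterating this recurrence expands $J_{a,b}$ as a weighted sum over lattice paths starting at $(a,b)$ that repeatedly either drop the first index by $2$ (weight equal to the current first index minus $1$) or drop both indices by $1$ (weight $\cos\theta$ times the current second index), terminating when the first index hits $0$ or $1$. A path terminating at first index $0$ contributes to some $J_{0,n}$ and one terminating at $1$ contributes to some $J_{1,n}$; counting down-by-one steps shows that a terminal row $J_{0,b-a+i}$ (resp.\ $J_{1,b-a+i}$) is reachable only through $a-i$ diagonal steps, forcing $i$ even (resp.\ odd) and producing the common factor $(b)_{a-i}(\cos\theta)^{a-i}$. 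The remaining weight---the sum over interleavings of the down-by-two step weights---is what the Bessel-number coefficients $P(a,a-i)-Q(a-1,a-1-i)$ and $Q(a-1,a-i)$ must encode. This picture tells me what to prove and explains why the parity constraints in the statement appear.

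For the formal argument I would first dispatch the base cases $a=1$ and $a=2$ by direct substitution: for $a=1$ the formula collapses to the single term $Q(0,0)J_{1,b}=J_{1,b}$, and for $a=2$ the recurrence gives $J_{2,b}=J_{0,b}+b\cos\theta\,J_{1,b-1}$, which matches since $P(2,0)=1$, $Q(1,1)=1$, and $Q(1,-1)=0$ under the convention that $Q$ vanishes for negative second argument. For $a\geq 3$ I would substitute the inductive hypothesis for $J_{a-2,b}$ and $J_{a-1,b-1}$ into the recurrence and collect the coefficient of each terminal row. Re-indexing the contribution of $J_{a-2,b}$ by $i\mapsto i-2$ and the contribution of $b\cos\theta\,J_{a-1,b-1}$ via the falling-factorial identity $b\,(b-1)_{a-1-i}=(b)_{a-i}$, I find that matching the coefficient of $J_{0,b-a+i}$ and of $J_{1,b-a+i}$ reduces (after cancelling the common factor $(b)_{a-i}(\cos\theta)^{a-i}$) to the two Bessel-number recurrences
\begin{align*}
P(a,k) &= (a-1)P(a-2,k) + P(a-1,k-1), \\
Q(m,c) &= m\,Q(m-2,c) + Q(m-1,c-1),
\end{align*}
where $k=a-i$ and $m=a-1$, the $P$ identity also being what lets me split the $J_0$ coefficient $P-Q$ into its two pieces so that only the $Q$ recurrence remains.

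These two recurrences are the technical heart, and the $Q$ recurrence is the main obstacle. The $P$ identity is easy: since $P(a,b)=\tfrac{a!}{b!\,(\frac{a-b}{2})!\,2^{(a-b)/2}}$ counts partitions of an $a$-element set into $b$ singletons and $\tfrac{a-b}{2}$ pairs, classifying by whether the last element is a singleton (giving $P(a-1,b-1)$) or paired with one of the other $a-1$ elements (giving $(a-1)P(a-2,b)$) proves it bijectively. The $Q$ recurrence has no such clean combinatorial proof because of the partial binomial sum $\sum_{j}\binom{a+1}{j}$ in its definition, so I would verify it algebraically: writing $m=c+2p$ and cancelling factorials, it reduces to
\begin{equation*}
(c+p)\sum_{j=0}^{p}\binom{m+1}{j} = 2m\sum_{j=0}^{p-1}\binom{m-1}{j} + c\sum_{j=0}^{p}\binom{m}{j}.
\end{equation*}
Applying Pascal's rule twice to re-express the partial sums of $\binom{m+1}{\cdot}$ and $\binom{m}{\cdot}$ in terms of $\sum_{j\le p}\binom{m}{j}$ and $\sum_{j<p}\binom{m-1}{j}$, the bulk sum terms match once $2(c+p)=c+m$ is used, and the leftover boundary terms collapse to the elementary identity $(c+p)\binom{m}{p}=m\binom{m-1}{p}$ (valid because $m-p=c+p$). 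Establishing these two recurrences closes the induction and proves the theorem; along the way I would check the degenerate-index conventions ($P$ and $Q$ vanishing outside their stated support) that keep the index shifts valid at the ends of the summation ranges.
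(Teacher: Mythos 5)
Your proof is correct, and it takes a genuinely different route from the paper's. The paper does not run an induction with coefficient matching: it recasts the recurrence of Proposition~\ref{prop:rec_relations} as a weighted directed graph (Definition~\ref{def:graph}), introduces an auxiliary recursion $J^\ast$ whose diagonal edges have weight $1$, identifies the weighted path sums of $J^\ast$ with the Bessel numbers through a path-flipping bijection onto the $P$ and $Q$ graphs (Lemma~\ref{lem:J_star}), and then converts back to $J$ (Proposition~\ref{prop:J_weights}): the factor $(b)_{a-i}(\cos\theta)^{a-i}$ arises because every path crosses each column exactly once, and the combination $P-Q$ in the $J_{0,n}$ coefficient arises from inclusion--exclusion over the diagonal edge that is missing at row $0$. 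Crucially, the paper does not prove the $P$ and $Q$ recurrences itself: Lemma~\ref{lem:PQ_rec} imports them from \citet{pq} --- the $Q$ recursion is \eqref{pq4} verbatim, and the $P$ recursion is assembled from \eqref{pq1} and \eqref{pq2}. Your induction hits exactly the same two recurrences as its crux, but proves them from the closed forms in Definition~\ref{defn:PQ}: combinatorially for $P$ (partitions into singletons and pairs, classified by the last element) and algebraically for $Q$; I checked that your binomial reduction goes through exactly as sketched, with Pascal's rule and the relation $m-p=c+p$ collapsing the leftover boundary terms. So your argument buys self-containedness (you give, in effect, an independent proof of \eqref{pq4}) and elementarity, at the cost of being a verification of a formula one must already have guessed; the paper's path bijection explains structurally where the coefficients come from, and outsources the Bessel identities to the literature instead of proving them. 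One detail you should make explicit in a full write-up: at the boundary term $i=2$ of your matching, the inductive hypothesis for $J_{a-2,b}$ contributes nothing (the needed index $j=i-2=0$ lies outside its summation range), and consistency there requires not only the vanishing conventions but also the diagonal values $P(n,n)=Q(n,n)=1$, so that $(a-1)\left(P(a-2,a-2)-Q(a-3,a-3)\right)=0$; this is immediate from the closed forms, but it is a different check from vanishing outside the support, which is the only convention you mention.
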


\begin{remark}
Since $J_{1,n}$ is also given in terms of $J_{0,n}$, one may further simplify the formula for $J_{a,b}$ to be in terms of only $J_{0,n}$, $J_{0,0}$ and $J_{0,1}$. This substitution yields the following formula. For notational convenience, we will let $\delta := b-a$,
\begin{align*}
    J_{a,b} &= \sum_{\substack{i \equiv 0(\bmod 2)\\0 < i\leq a}}  (b)_{a-i} (\cos\theta)^{a-i} (P(a,a-i) - Q(a-1, a-1-i))  J_{0,\delta+i} \\
    &+ \sum_{\substack{i \equiv 1(\bmod 2) \\ 0 < i \leq a}} (b)_{a-i} (\cos\theta)^{a-i} Q(a-1, a-i) (\delta+i-1)!! J_{(\delta+1)\bmod 2,1} \\ 
    &+  \cos \theta \sum_{\substack{i \equiv 1(\bmod 2) \\ 0 < i \leq a}} \sum_{\substack{j \equiv \delta(\bmod 2) \\ 0<j<\delta+i}} (b)_{a-i} (\cos\theta)^{a-i} Q(a-1, a-i) \frac{(\delta+i-1)!!}{j!!} J_{0,j}.
\end{align*}
\end{remark}



\subsection{Gaussian Integration-by-Parts Formulas}
\label{sec:G_I_P}

In this section, we state two important formulas that together give us the tools for computing the expectations that appear in $J_{a,b}$, \review{based on the well known Gaussian integration-by-parts trick; see for example Chapter 7.2 of the textbook \citep{vershynin}}.

\begin{fact}[Gaussian Integration by Parts]
\label{fact:G_I_P}
Let $G \sim \mathcal{N}(0,1)$ be a Gaussian variable and $f: \mathbb{R} \to \mathbb{R}$ be a differentiable function. Then,
\begin{align}
\label{eq:Gaussian_IBP}
    \mathbf{E}[G f(G)] = \mathbf{E}[f'(G)].
\end{align}
\end{fact}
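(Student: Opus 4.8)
The plan is to prove this by the classical density-differentiation argument, exploiting the key structural identity satisfied by the standard Gaussian density $\phi(x) = \tfrac{1}{\sqrt{2\pi}} e^{-x^2/2}$, namely $\phi'(x) = -x\,\phi(x)$. This one ordinary differential equation is really the whole engine of the formula: it lets me trade the factor of $x$ (equivalently the factor of $G$) for a derivative falling on the density, which I can then move onto $f$ by integration by parts in the ordinary calculus sense.

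Concretely, I would first write the expectation as an integral against the Gaussian density,
\begin{equation*}
\mathbf{E}[G f(G)] = \int_{-\infty}^{\infty} x\, f(x)\, \phi(x)\, dx,
\end{equation*}
and then substitute $x\,\phi(x) = -\phi'(x)$ to obtain $-\int_{-\infty}^{\infty} f(x)\,\phi'(x)\,dx$. Next I would apply the fundamental theorem of calculus / integration by parts to this last integral, which produces a boundary term $-\bigl[f(x)\phi(x)\bigr]_{-\infty}^{\infty}$ together with the interior term $\int_{-\infty}^{\infty} f'(x)\,\phi(x)\,dx$. Recognizing the interior term as exactly $\mathbf{E}[f'(G)]$ finishes the computation, provided the boundary term vanishes.

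The only real subtlety — and hence the main obstacle — is justifying that the boundary contribution $\bigl[f(x)\phi(x)\bigr]_{-\infty}^{\infty}$ is zero. The Gaussian density decays like $e^{-x^2/2}$, so this holds as soon as $f$ does not grow faster than any exponential of the form $e^{cx^2}$ with $c<\tfrac12$; in particular it holds for polynomially bounded $f$, which is all that is needed for the applications in this paper since the functions to which the formula will be applied (powers of $G$ multiplied by indicator functions) are controlled by the Gaussian tail. In a fully rigorous treatment I would therefore state the mild integrability/growth hypothesis explicitly, under which both the substitution step and the vanishing of the boundary term are valid and Fubini/dominated convergence pose no issue.

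I would close by remarking that the same identity can be read as a special case of Stein's characterization of the Gaussian law, but that the direct density-based derivation above is the most transparent route and is exactly the form needed to set up the recurrence relations for $J_{a,b}$ in Proposition \ref{prop:rec_relations}. The later use against indicator functions $1\{G>0\}$ is where the nondifferentiability must be handled separately, but for the smooth statement of Fact \ref{fact:G_I_P} itself the argument is the clean two-line integration by parts described here.
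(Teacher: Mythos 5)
Your proof is correct, and it is the standard argument. Note that the paper itself offers no proof of Fact~\ref{fact:G_I_P} at all: it is stated as a classical known result (Stein's lemma), and the paper only proves the companion Fact~\ref{fact:dirac_delta}, whose proof uses exactly the density-based integration by parts you describe, so your argument is entirely consistent in spirit with the paper's approach. Your one genuine addition is the explicit growth/integrability hypothesis needed to kill the boundary term $\bigl[f(x)\phi(x)\bigr]_{-\infty}^{\infty}$; the paper's statement of Fact~\ref{fact:G_I_P} omits any such condition (it imposes one only in Fact~\ref{fact:dirac_delta}), so the statement as printed is slightly informal, and your remark that polynomially bounded $f$ (and the indicator-times-power functions used later for $J_{a,b}$) satisfy the condition is exactly the right way to make it rigorous without weakening any downstream application.
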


Using this type of Gaussian integration by parts formula, we can generalize the expected value of Gaussians to derivatives of functions which are not necessarily differentiable. For example the indicator function $1\{x>a\}$ is not differentiable, but for the purposes of computing Gaussian expectation, we can use the following integration formula.

\begin{remark}
\review{An alternative way to view this kind of Gaussian expected value calculation is Wick's formula / Isserlis theorem. The Gaussian integration by parts trick can be thought of as the extension of those formulas from polynomials to arbitrary functions. } 
\end{remark}

\begin{fact}[Gaussian expectations involving $1^\prime\{x>a\}$]
\label{fact:dirac_delta} Let $G$ be a Gaussian variable and $a \in \mathbb{R}$. Let $f: \mathbb{R} \to \mathbb{R}$ such that $\lim_{g\to \infty} f(g)e^{\frac{-g^2}{2}} =0$. Then, using the Gaussian integration by parts formula to assign a meaning to expectations involving the ``derivative of the indicator function'', $1^\prime \{x>a\}$, we have
\begin{align}
\label{eq:E_deriv_one}
    \mathbf{E}[1'\{G>a\} f(G)] = f(a) \frac{e^{\frac{-a^2}{2}}}{\sqrt{2\pi}}.
\end{align}
\label{dirac2}
\end{fact}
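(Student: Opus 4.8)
The plan is to \emph{define} the symbol $\mathbf{E}[1'\{G>a\}f(G)]$ by insisting that the Gaussian integration-by-parts identity of Fact~\ref{fact:G_I_P}, together with the Leibniz product rule, continues to hold when one of the factors is the (non-differentiable) indicator $1\{x>a\}$. Concretely, I would apply Fact~\ref{fact:G_I_P} to the product $\phi(g):=1\{g>a\}f(g)$. Formally $\phi'(g)=1'\{g>a\}f(g)+1\{g>a\}f'(g)$, so the identity $\mathbf{E}[G\phi(G)]=\mathbf{E}[\phi'(G)]$ reads
\[
\mathbf{E}[G\,1\{G>a\}f(G)] = \mathbf{E}[1'\{G>a\}f(G)] + \mathbf{E}[1\{G>a\}f'(G)],
\]
which we take as the \emph{definition} of the first term on the right:
\[
\mathbf{E}[1'\{G>a\}f(G)] := \mathbf{E}[G\,1\{G>a\}f(G)] - \mathbf{E}[1\{G>a\}f'(G)].
\]
Both quantities on the right are now ordinary (Lebesgue) expectations with no distributional derivative, so the problem reduces to evaluating them.

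The second step is a classical integration by parts. I write $\mathbf{E}[G\,1\{G>a\}f(G)]=\int_a^\infty g f(g)\frac{e^{-g^2/2}}{\sqrt{2\pi}}\,dg$ and use that $-\frac{e^{-g^2/2}}{\sqrt{2\pi}}$ is an antiderivative of $g\frac{e^{-g^2/2}}{\sqrt{2\pi}}$. Integrating by parts gives
\[
\int_a^\infty g f(g)\frac{e^{-g^2/2}}{\sqrt{2\pi}}\,dg = \left[-f(g)\frac{e^{-g^2/2}}{\sqrt{2\pi}}\right]_a^\infty + \int_a^\infty f'(g)\frac{e^{-g^2/2}}{\sqrt{2\pi}}\,dg.
\]
The upper boundary term vanishes precisely because of the hypothesis $\lim_{g\to\infty}f(g)e^{-g^2/2}=0$, the lower boundary term contributes $+f(a)\frac{e^{-a^2/2}}{\sqrt{2\pi}}$, and the remaining integral is exactly $\mathbf{E}[1\{G>a\}f'(G)]$. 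Substituting into the definition from the first step, the two copies of $\mathbf{E}[1\{G>a\}f'(G)]$ cancel and leave $\mathbf{E}[1'\{G>a\}f(G)]=f(a)\frac{e^{-a^2/2}}{\sqrt{2\pi}}$, as claimed.

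The only real subtlety — and the step I would be most careful with — is the conceptual one in the first paragraph: $1\{x>a\}$ is not differentiable, so $1'\{x>a\}$ has no classical meaning, and the entire content of the statement is that the formal manipulation above is \emph{consistent}. To put this on firmer footing I would, in a remark, give the mollification argument that reaches the same answer: replace $1\{x>a\}$ by a smooth sigmoidal approximation $1_\epsilon$ whose derivative $1_\epsilon'$ is an approximate identity concentrating at $a$, and observe that $\mathbf{E}[1_\epsilon'(G)f(G)]=\int 1_\epsilon'(g)f(g)\frac{e^{-g^2/2}}{\sqrt{2\pi}}\,dg \to f(a)\frac{e^{-a^2/2}}{\sqrt{2\pi}}$ as $\epsilon\to 0$, since $1_\epsilon'$ converges weakly to the Dirac mass $\delta_a$. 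This recovers the same value and confirms that interpreting $1'\{x>a\}$ as $\delta(x-a)$ is the correct reading. Beyond this, the argument only requires the mild regularity on $f$ (differentiability, and local integrability of $f e^{-g^2/2}$ and $f' e^{-g^2/2}$ on $[a,\infty)$) needed to justify the ordinary integration by parts.
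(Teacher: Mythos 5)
Your proof is correct, and it reaches the identity by a route that is organized differently from the paper's. The paper works directly with the formal integral: it writes $\mathbf{E}[1'\{G>a\}f(G)]=\int_{-\infty}^{\infty}1'\{g>a\}f(g)\frac{e^{-g^2/2}}{\sqrt{2\pi}}\,dg$, integrates by parts once so that the derivative moves off the indicator and onto the product $f(g)\frac{e^{-g^2/2}}{\sqrt{2\pi}}$, kills the boundary term with the decay hypothesis, and then evaluates the remaining integral $-\int_a^\infty\frac{d}{dg}\bigl(f(g)\frac{e^{-g^2/2}}{\sqrt{2\pi}}\bigr)\,dg$ by the fundamental theorem of calculus, using the decay hypothesis a second time. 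You never manipulate $1'\{g>a\}$ under an integral sign at all: you take the Fact's phrase about ``assigning a meaning'' literally, define $\mathbf{E}[1'\{G>a\}f(G)]$ as the quantity forced by Fact \ref{fact:G_I_P} together with the product rule applied to $\phi(g)=1\{g>a\}f(g)$, and then compute the two honest expectations $\mathbf{E}[G\,1\{G>a\}f(G)]$ and $\mathbf{E}[1\{G>a\}f'(G)]$ by classical integration by parts, after which the $f'$ terms cancel. Algebraically this is the same integration by parts rearranged, but your organization buys something: every integral you actually evaluate is an ordinary one, so the only ``formal'' step is the definition itself, which makes the logical status of the Fact clearer and more self-contained. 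Both arguments implicitly require $f$ to be differentiable (the paper differentiates $f(g)e^{-g^2/2}$; you differentiate $1\{g>a\}f(g)$), a hypothesis not stated in the Fact, and you are right to flag it. Your closing mollification argument is not needed for the derivation; it duplicates the justification the paper gives in the remark immediately following the Fact, and correctly identifies $1'\{x>a\}$ with the Dirac mass at $a$.
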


\begin{remark}
The purpose of assigning a value to the expectation 
\eqref{eq:E_deriv_one} is to allow one to compute ``honest'' expectations of the form \eqref{eq:Gaussian_IBP} when $f(x)$ involves $1\{x>0\}$; see Lemma \ref{lem:powers_of_phi} for an illustrative example. The final result does \emph{not} require interpreting ``$1^\prime\{x>a\}$''; this is only a useful intermediate step in the sequence of calculations leading to the final result.

The formula can also be understood or proven in a number of different alternative ways. One is simply to say that $1^\prime\{x>a\} = \delta\{x=a\}$ is a ``Dirac delta function'' at $x=a$. A more rigorous way would be to take any differentiable family of functions $1_\epsilon\{x>a\}$ which suitably converge to $1\{x>a\}$ as $\epsilon \to 0$ and then interpret the result as the limit of the expectation $\lim_{\epsilon \to 0} \mathbf{E}[1_\epsilon'\{G>a\} f(G)]$. 
Here is the argument that is used to obtain Fact \ref{fact:dirac_delta}: Applying integration by parts, we formally have
\begin{align*}
    \mathbf{E}[1'\{G>a\} f(G)] &= \intop_{-\infty}^{\infty} 1'\{g>a\}f(g)\frac{e^{\frac{-g^2}{2}}}{\sqrt{2\pi}} dg \\
    &=  \left[1\{ g>a \} f(g)\frac{e^{\frac{-g^2}{2}}}{\sqrt{2\pi}}  \right]_{-\infty}^{\infty} - \intop_{-\infty}^{\infty} 1\{g>a\} \frac{d}{dg} \left( f(g)\frac{e^{\frac{-g^2}{2}}}{\sqrt{2\pi}} \right)dg.
\end{align*}
Note that the first term is 0 by the hypothesis $\displaystyle\lim_{g\to \infty} f(g) e^{\frac{-g^2}{2}} = 0$, and we have then
\begin{align*}
    \mathbf{E}[1'\{G>a\} f(G)]&= - \intop_{a}^{\infty}  \frac{d}{dg} \left( f(g)\frac{e^{\frac{-g^2}{2}}}{\sqrt{2\pi}} \right)dg = - \left[f(g) \frac{e^{\frac{-g^2}{2}}}{\sqrt{2\pi}} \right]_a^\infty = 0+ f(a) \frac{e^{\frac{-a^2}{2}}}{\sqrt{2\pi}}, 
\end{align*}
where we have used the hypothesis on $f$ once again.
\end{remark}

The two facts about Gaussian integration by parts can be combined to create recurrence relations for expectations involving $1\{G>a\}$. A simple example is the following lemma, which we will also use later in our derivation. The proof strategy of this lemma is a microcosm of the proof strategy we use to compute $J_{a,b}$ in general, namely to use Gaussian integration by parts to derive a recurrence relation and initial condition, and then solve.

\begin{lemma}[Moments of $\varphi(G)$] For $k\geq 0$, we have
\label{lem:powers_of_phi}
$$\mathbf{E}[\varphi(G)^k] = \mathbf{E}[G^k 1\{G > 0 \}]  = \begin{cases}
\frac{(k-1)!!}{2} & k\text{ is even} \\
\frac{(k-1)!!}{\sqrt{2\pi}} & k\text{ is odd} \\
\end{cases} = 
\sqrt{2\pi} \frac{(k-1)!!}{c_{ {k-1} \bmod 2}},$$
where $c_0 = 2\pi$ and $c_1 = 2\sqrt{2\pi}$.
\end{lemma}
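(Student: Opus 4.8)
The plan is to prove the two displayed equalities separately: first the pointwise identity $\mathbf{E}[\varphi(G)^k] = \mathbf{E}[G^k 1\{G>0\}]$, and then to evaluate this common quantity by deriving and solving a recurrence, exactly as advertised in the remark preceding the lemma. For the first equality I would invoke the elementary identity $\varphi(x)^k = x^k 1\{x>0\}$, which holds for every $k \geq 1$ and also for $k=0$ under the convention $0^0 = 0$ used throughout the paper (so that $\varphi(G)^0 = 1\{G>0\}$). Taking expectations of both sides gives the equality immediately. Writing $m_k := \mathbf{E}[G^k 1\{G>0\}]$, the remaining task is to show $m_k = \sqrt{2\pi}\,(k-1)!!/c_{(k-1)\bmod 2}$.

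Next I would record the two base cases by direct computation: $m_0 = \mathbf{E}[1\{G>0\}] = \tfrac12$, and $m_1 = \mathbf{E}[\varphi(G)] = \int_0^\infty g\, e^{-g^2/2}/\sqrt{2\pi}\,dg = 1/\sqrt{2\pi}$, both of which agree with the claimed formula at $k=0$ and $k=1$. The heart of the argument is a recurrence obtained from Gaussian integration by parts. Applying Fact \ref{fact:G_I_P} with $f(G) = G^{k-1}1\{G>0\}$ gives $m_k = \mathbf{E}[G f(G)] = \mathbf{E}[f'(G)]$, and formally differentiating the product yields $f'(G) = (k-1)G^{k-2}1\{G>0\} + G^{k-1}1'\{G>0\}$, so that
$$m_k = (k-1)\,\mathbf{E}[G^{k-2}1\{G>0\}] + \mathbf{E}[G^{k-1}1'\{G>0\}] = (k-1)m_{k-2} + \mathbf{E}[G^{k-1}1'\{G>0\}].$$
The last term is evaluated with Fact \ref{fact:dirac_delta} (whose decay hypothesis holds since $g^{k-1}e^{-g^2/2} \to 0$): it equals $0^{k-1}/\sqrt{2\pi}$, which vanishes for every $k \geq 2$. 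Hence $m_k = (k-1)m_{k-2}$ for $k \geq 2$.

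Finally I would solve the recurrence by iteration down to the appropriate base case according to the parity of $k$. Unwinding $m_k = (k-1)(k-3)\cdots$ gives $m_k = (k-1)!!\,m_0 = (k-1)!!/2$ for even $k$ and $m_k = (k-1)!!\,m_1 = (k-1)!!/\sqrt{2\pi}$ for odd $k$; a short parity check against $c_0 = 2\pi$ and $c_1 = 2\sqrt{2\pi}$ then confirms that both cases collapse into the single expression $\sqrt{2\pi}\,(k-1)!!/c_{(k-1)\bmod 2}$.

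\textbf{Main obstacle.} The only genuinely delicate point is the legitimacy of applying Fact \ref{fact:G_I_P} to the non-differentiable $f(G) = G^{k-1}1\{G>0\}$ and correctly accounting for the contribution of the ``derivative of the indicator'' via Fact \ref{fact:dirac_delta}. The bookkeeping must ensure that the boundary term $0^{k-1}/\sqrt{2\pi}$ is seen to vanish precisely when $k \geq 2$ (it is exactly this term that would otherwise spoil the clean recurrence $m_k = (k-1)m_{k-2}$), while remembering that at the base level $k=1$ it is this very same mechanism, rather than the recurrence, that supplies the value $1/\sqrt{2\pi}$.
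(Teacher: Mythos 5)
Your proposal is correct and follows essentially the same route as the paper's proof: Gaussian integration by parts with $f(x)=x^{k-1}1\{x>0\}$ to get the recurrence $m_k=(k-1)m_{k-2}$ (the boundary term vanishing via Fact \ref{fact:dirac_delta} since $0^{k-1}=0$ for $k\geq 2$), then induction down to the parity-dependent base cases. The only cosmetic differences are that you evaluate $m_1$ by a direct integral where the paper uses the integration-by-parts formulas, and you make the $0^0=0$ convention at $k=0$ explicit where the paper simply calls that case trivial.
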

\begin{proof}
We prove this for even and odd $k$ separately by induction on $k$. The base case for $k=0$ is trivial since $(0-1)!!=1$ is the empty product. The base case $k=1$ follows by first applying \eqref{eq:Gaussian_IBP} with $f(x)=1\{x>0\}$ and then applying \eqref{eq:E_deriv_one} with $f(x)\equiv 1$,
$$\mathbf{E}[\varphi(G)]=\mathbf{E}[G 1\{G>0\}]=\mathbf{E}[1^\prime\{G>0\}] = {\sqrt{2\pi}}^{-1}.$$
Now, to see the induction, we apply \eqref{eq:Gaussian_IBP} with $f(x)=x^{k-1}1\{x>0\}$, $k \geq 2$. Due to the product rule, there are two terms in the derivative,
\begin{align}
\mathbf{E}[\varphi(G)^k]&=\mathbf{E}[G \cdot G^{k-1} 1\{G>0\}]\\
&=(k-1)\mathbf{E}[G^{k-2} 1\{G>0\}]+\mathbf{E}[G^{k-1}1^\prime\{G>0\}] \notag \\
&= (k-1) 
\mathbf{E}[\varphi(G)^{k-2}] + 0, \notag
\end{align}
where we have recognized that the second term is $0$ by application of \eqref{eq:E_deriv_one} with $f(x)=x^{k-1}$ which has $f(0)=0$. The recurrence $\mathbf{E}[\varphi(G)^k] = (k-2)\mathbf{E}[\varphi(G)^{k-2}]$ along with initial condition leads to the stated result by induction.
\end{proof}

\subsection{Recursive Formulas for $J_{a,b}(\theta)$ - Proof of Proposition \ref{prop:rec_relations}}
\label{sec:recursive_formula}


\begin{proof}[Of Proposition \ref{prop:rec_relations}] To find a recursive formula for $J_{a,0}, a\geq 2$, we apply the Gaussian integration by parts formula \eqref{eq:Gaussian_IBP} to $f(x)=x^{a-1}1\{x > 0\} 1\{\cos \theta x + W \sin \theta >0\}$ to evaluate the expected value over $G$ first. When applying product rule there are three terms
\begin{align}
\label{eq:J_a0}
J_{a,0} =& \mathbf{E}[G \cdot G^{a-1} 1\{G>0\} 1\{G \cos \theta  + W \sin \theta > 0 \}] \\
    =& (a-1)\mathbf{E}[G^{a-2} 1\{G>0\} 1\{G \cos \theta  + W \sin \theta  > 0 \}] \notag \\
    &+ \mathbf{E}[G^{a-1} 1\{G>0\} 1'\{G \cos \theta  + W \sin \theta >0\}]\cos \theta \notag \\
    &+ \mathbf{E}[G^{a-1} 1^\prime\{G>0\} 1\{G \cos \theta  + W \sin \theta  > 0\} ]. \notag
\end{align}
The first term is simply $(a-1)J_{a-2,0}$. The last two terms can now be evaluated with the help of \eqref{eq:E_deriv_one}. The last term of \eqref{eq:J_a0} is \eqref{eq:E_deriv_one} with the  function $f(x)=x^{a-1}1\{x \cos \theta  + W \sin \theta > 0\}$ which has $f(0) = 0$ for $a\geq 2$. Therefore, this term simply vanishes.

To evaluate the middle term of \eqref{eq:J_a0}, we introduce a change of variables to express $G \cos \theta + W \sin \theta$ in terms of two other independent Gaussian  variables $Z,W \sim \mathcal{N}(0,1)$
\begin{align}
    Z = G \cos \theta  + W \sin \theta,  \quad \quad & G = Z \cos \theta  + Y \sin \theta,  \label{cov}\\
    Y = G \sin \theta  - W \cos \theta,  \quad \quad & W = Z \sin \theta  - Y \cos \theta , \notag
\end{align}
where $Y,Z$ iid $\mathcal{N}(0,1)$. Under this change of variables, $J_{a,0},\; a\geq 2$ is setup to apply \eqref{eq:E_deriv_one} with $f(x)=1\{x \cos\theta  + Y \sin\theta \}^{a-1} 1\{x \cos\theta  + Y \sin \theta  > 0\}$
\begin{align*}
    J_{a,0} &= (a-1)J_{a-2,0} + \mathbf{E}[G^{a-1} 1\{G>0\} 1'\{G \cos \theta  + W \sin \theta >0\}]\cos \theta \\
     &= (a-1)J_{a-2,0} + \mathbf{E}[(Z\cos \theta  + Y \sin \theta )^{a-1} 1\{Z \cos \theta  + Y \sin \theta  > 0 \} 1'\{Z>0\}]\cos \theta \\
     &= (a-1)J_{a-2,0} + \mathbf{E}[(0 + Y \sin \theta )^{a-1} 1\{0 + Y \sin \theta  > 0 \}]\frac{1}{\sqrt{2\pi}} \cos \theta \\
     &= (a-1) J_{a-2,0} + \frac{\sin^{a-1}\theta \cos \theta}{c_{a \bmod 2}}(a-2)!!,
\end{align*}
 where we have applied Lemma \ref{lem:powers_of_phi} to evaluate the last expectation.

 A similar argument is used to find the recursive formula for $J_{a,b},\; a\geq 2, b\geq 1$, by using \eqref{eq:Gaussian_IBP} with the function $f(x)=x^{a-1} (x\cos\theta + W \sin\theta)^b 1\{x > 0\} 1\{x\cos \theta + W \sin \theta >0\}$. There are 4 terms in the product rule derivative. Fortunately in this case, the last two terms are simply zero by application of \eqref{eq:E_deriv_one} since the expressions vanish when $G=0$, so we get
\begin{align*}
    J_{a,b} &= \mathbf{E}[G \cdot G^{a-1} (G \cos \theta + W \sin \theta)^b 1\{G>0\} 1\{G \cos \theta + W \sin \theta > 0 \}] \\ 
    &= \mathbf{E}[(a-1)G^{a-2} (G \cos \theta + W \sin \theta)^b 1\{G>0\} 1\{G \cos \theta + W \sin \theta > 0 \}] \\
    &\;\;\;\;\;\;+ \mathbf{E}[G^{a-1} b\cos \theta (G \cos \theta + W \sin \theta)^{b-1} 1\{G>0\} 1\{G \cos \theta + W \sin \theta > 0 \}] \\
    &\;\;\;\;\;\;+ \mathbf{E}[G^{a-1} (G \cos \theta + W \sin \theta)^b 1'\{G>0\} 1\{G \cos \theta + W \sin \theta > 0 \}] \\
    &\;\;\;\;\;\;+ \mathbf{E}[G^{a-1} (G \cos \theta + W \sin \theta)^b 1\{G>0\} 1'\{G \cos \theta + W \sin \theta > 0 \} \cos \theta] \\ 
    &=(a-1)J_{a-2, b} + b \cos \theta J_{a-1,b-1} +0 +0, 
\end{align*}
as desired.
\end{proof}


\subsection{Solving the Recurrence to get an Explicit Formula for $J_{a,b}(\theta)$ - Proof of Theorem \ref{thm:j_ab}}
\label{sec:explicit_formula}
 Solving the recurrence for the sequences $J_{a,0}$ and $J_{a,1}$ to get the claimed explicit formula for $J_{a,0}$ is a simple induction proof. We defer these to \Cref{app:j_proof}. More difficult and interesting is the 2D array $J_{a,b}$. To solve the recurrence 
\begin{equation}
\label{eq:J_rec}
    J_{a,b} = \textcolor{red}{(a-1)} J_{a-2,b} + \textcolor{blue}{b\cos\theta} J_{a-1,b-1}, \quad a \geq 2, b \geq 1,
\end{equation} we will apply the recursion repeatedly until $J_{a,b}$ can be expressed as a linear combination of $J_{0,n}$ and $J_{1,n}$ terms for which we already have an explicit formula developed. To determine the coefficients in front of $J_{0,n}$ and $J_{1,n}$, we take a combinatorial approach by thinking of the recurrence relation as a weighted directed graph as defined below.

\begin{definition}[Viewing a recursion as a directed weighted graph]
\label{def:graph}
We can view the recurrence relation for $J_{a,b}$ as a weighted directed graph on the vertex set $(a,b)\in \mathbb{Z}^2$ where vertices represent the values of $J_{a,b}$ and directed edges capture how values of $J_{a,b}$ are connected through the recurrence relation. To be precise, the graph edges and edge weights $w_e$ are defined so that the recursion \eqref{eq:J_rec} for $J_{a,b}$ can be expressed in the graph as a sum over incoming edges,
\begin{equation}
\label{eq:graph}
J_{a,b} = \sum_{e: (a^\prime,b^\prime) \to (a,b)} w^J_e J_{a^\prime, b^\prime},
\end{equation}
where the sum is over the edges $e$ with weight $w^J_e$ incoming to the vertex $(a,b)$. An example of the graph to calculate $J_{6,8}$ is illustrated in Figure \ref{j_jstar}.

By repeatedly applying the recursion, $J_{a,b}$ can be expressed as a linear combination of the values at the \emph{source vertices} of the graph (i.e. those with no incoming edges). For the recurrence $J_{a,b}$, the source vertices are $J_{0,n}$ and $J_{1,n}$. The coefficients in front of each source is simply the weighed sum over all paths from the source to the node, namely
\begin{align}
\label{eq:graph_sources}
J_{a,b} = \sum_{\text{source vertices }v}W^J_{v \to (a,b)}J_v  &= \sum_{n\geq 0} W^J_{(0,n)\to(a,b)} J_{0,n} + \sum_{n\geq 0} W^J_{(1,n)\to(a,b)} J_{1,n}, \\
W^J_{(a^\prime,b^\prime)\to(a,b)} &:= \sum_{\pi:(a^\prime,b^\prime)\to(a,b)} \prod_{e \in \pi} w^J_e,\label{eq:W_def}
\end{align}
where the sum is over all paths $\pi$ from the vertex $(a^\prime,b^\prime)$ to the vertex $(a,b)$ in the $J$ graph. 

\end{definition}

In light of \eqref{eq:graph_sources}, to prove Theorem \ref{thm:j_ab}, we have only to calculate the weighted sum of path $W^J_{(0,n)\to(a,b)}$ and $W^J_{(1,n)\to(a,b)}$. These weighted sums turn out to be given in terms of the $P$ and $Q$ numbers which were defined in Definition \ref{defn:PQ}.

\begin{figure}[h]
\centering
\scalebox{0.7}{
\hfill
  \begin{subfigure}[b]{0.55\textwidth}
    \begin{tikzpicture}
    
    \foreach \i in {6,...,0}{
        \pgfmathsetmacro{\ii}{int(\i)}
        \draw (-0.5,\i+0.5) node{\ii};
    }
    
    \foreach \i in {0,...,6}{
        \pgfmathsetmacro{\ii}{int(8-\i)}
        \draw (\i + 0.5, 7.5) node{\ii};
    }
    
    \foreach \j in {0,...,3}{
        \pgfmathsetmacro{\z}{3-\j}
        
        \foreach \i in {0,...,\z}{
        
            \pgfmathsetmacro{\k}{2*\i}
            \pgfmathsetmacro{\l}{2*\j}
            \pgfmathsetmacro{\ll}{int(\l)}
            \pgfmathsetmacro{\kk}{int(8-\k)}
            \fill[yellow!5] (\k,\l) rectangle (\k+1,\l+1);
            \draw (\k+ 0.5, \l + 0.5) node{$J_{\ll, \kk}$};
            
        }
    }
    
    \foreach \j in {0,...,2}{
        \pgfmathsetmacro{\z}{2-\j}
        
        \foreach \i in {0,...,\z}{
            \pgfmathsetmacro{\k}{2*\i+1}
            \pgfmathsetmacro{\l}{2*\j+1}
            \pgfmathsetmacro{\ll}{int(\l)}
            \pgfmathsetmacro{\kk}{int(8-\k)}
            \fill[yellow!5] (\k,\l) rectangle (\k+1,\l+1);
            \draw (\k+ 0.5, \l + 0.5) node{$J_{\ll, \kk}$};
        }
    }
    
    \draw[step = 1cm, black, thin] (0,0) grid (7,7);

    \foreach \j in {0,...,2}{
        \pgfmathsetmacro{\z}{2-\j}
        
        \foreach \i in {0,...,\z}{
            \pgfmathsetmacro{\k}{2*\i}
            \pgfmathsetmacro{\l}{2*\j}
            
            \pgfmathsetmacro{\n}{int(\l)}
            \pgfmathsetmacro{\m}{int(8-\k)}
            
            \draw[red,thick,->] (\k + 0.5, \l + 1)--(\k + 0.5, \l + 2);
            
            \pgfmathsetmacro{\r}{int(\l+1)}
            \draw (\k + 0.65, \l + 1.5) node[red, scale=0.75]{\r};
        }
    }
    
    \foreach \j in {0,...,1}{
        \pgfmathsetmacro{\z}{1-\j}
        
        \foreach \i in {0,...,\z}{
            \pgfmathsetmacro{\k}{2*\i+1}
            \pgfmathsetmacro{\l}{2*\j+1}
            
            \pgfmathsetmacro{\n}{int(\l)}
            \pgfmathsetmacro{\m}{int(8-\k)}
            
            \draw[red,thick,->] (\k + 0.5, \l + 1)--(\k + 0.5, \l + 2);
            
            \pgfmathsetmacro{\r}{int(\l+1)}
            \draw (\k + 0.65, \l + 1.5) node[red, scale=0.75]{\r};
        }
    }
    
    \foreach \j in {0,...,1}{
        \pgfmathsetmacro{\z}{1-\j}
        
        \foreach \i in {0,...,\z}{
            \pgfmathsetmacro{\k}{2*\i+2}
            \pgfmathsetmacro{\l}{2*\j+2}
            
            \pgfmathsetmacro{\n}{int(\l)}
            \pgfmathsetmacro{\m}{int(8-\k)}
            
            \draw[blue,thick,->] (\k + 0.25, \l + 0.75)--(\k - 0.25, \l + 1.25);
            
            \pgfmathsetmacro{\kk}{int(9-\k)}
            \draw (\k + 0.5, \l + 1.15) node[blue, scale=0.75]{$\kk \cos \theta$};
        }
    }
    
    \foreach \j in {0,...,2}{
        \pgfmathsetmacro{\z}{2-\j}
        
        \foreach \i in {0,...,\z}{
            \pgfmathsetmacro{\k}{2*\i+1}
            \pgfmathsetmacro{\l}{2*\j+1}
            
            \pgfmathsetmacro{\n}{int(\l)}
            \pgfmathsetmacro{\m}{int(8-\k)}
            
            \draw[blue,thick,->] (\k + 0.25, \l + 0.75)--(\k - 0.25, \l + 1.25);
            
            \pgfmathsetmacro{\kk}{int(9-\k)}
            \draw (\k + 0.5, \l + 1.15) node[blue, scale=0.75]{$\kk \cos \theta$};
        }
    }
    \end{tikzpicture}
    \caption{Directed graph associated with $J$}
    \label{jtrue}
  \end{subfigure}
  \hfill
  \begin{subfigure}[b]{0.55\textwidth}
    \begin{tikzpicture}

    \foreach \i in {0,1,...,6}{
        \draw (-0.5,\i+0.5) node{\i};
    }
    
    \foreach \i\j in {0/8,1/7,2/6,3/5,4/4,5/3,6/2}{
        \draw (\i + 0.5, 7.5) node{\j};
    }
    
    \foreach \j in {0,...,3}{
        \pgfmathsetmacro{\z}{3-\j}
        
        \foreach \i in {0,...,\z}{
        
            \pgfmathsetmacro{\k}{2*\i}
            \pgfmathsetmacro{\l}{2*\j}
            \fill[yellow!5] (\k,\l) rectangle (\k+1,\l+1);
            
            \pgfmathsetmacro{\n}{int(\l)}
            \pgfmathsetmacro{\m}{int(8-\k)}
            \draw (\k + 0.5, \l + 0.5) node{$J_{\n,\m}^*$};
            
        }
    }
    
    \foreach \j in {0,...,2}{
        \pgfmathsetmacro{\z}{2-\j}
        
        \foreach \i in {0,...,\z}{
            \pgfmathsetmacro{\k}{2*\i+1}
            \pgfmathsetmacro{\l}{2*\j+1}
            \fill[yellow!5] (\k,\l) rectangle (\k+1,\l+1);
            
            \pgfmathsetmacro{\n}{int(\l)}
            \pgfmathsetmacro{\m}{int(8-\k)}
            \draw (\k + 0.5, \l + 0.5) node{$J_{\n,\m}^*$};
        }
    }

    \draw[step = 1cm, black, thin] (0,0) grid (7,7);

    \foreach \j in {0,...,2}{
        \pgfmathsetmacro{\z}{2-\j}
        
        \foreach \i in {0,...,\z}{
            \pgfmathsetmacro{\k}{2*\i}
            \pgfmathsetmacro{\l}{2*\j}
            
            \pgfmathsetmacro{\n}{int(\l)}
            \pgfmathsetmacro{\m}{int(8-\k)}
            
            \draw[red,thick,->] (\k + 0.5, \l + 1)--(\k + 0.5, \l + 2);
            
            \pgfmathsetmacro{\r}{int(\l+1)}
            \draw (\k + 0.65, \l + 1.5) node[red, scale=0.75]{\r};
        }
    }
    
    \foreach \j in {0,...,1}{
        \pgfmathsetmacro{\z}{1-\j}
        
        \foreach \i in {0,...,\z}{
            \pgfmathsetmacro{\k}{2*\i+1}
            \pgfmathsetmacro{\l}{2*\j+1}
            
            \pgfmathsetmacro{\n}{int(\l)}
            \pgfmathsetmacro{\m}{int(8-\k)}
            
            \draw[red,thick,->] (\k + 0.5, \l + 1)--(\k + 0.5, \l + 2);
            
            \pgfmathsetmacro{\r}{int(\l+1)}
            \draw (\k + 0.65, \l + 1.5) node[red, scale=0.75]{\r};
        }
    }
    
    \foreach \j in {1,...,3}{
        \pgfmathsetmacro{\z}{3-\j}
        
        \foreach \i in {0,...,\z}{
            \pgfmathsetmacro{\k}{2*\i+2}
            \pgfmathsetmacro{\l}{2*\j-2}
            
            \pgfmathsetmacro{\n}{int(\l)}
            \pgfmathsetmacro{\m}{int(8-\k)}
            
            \draw[blue,thick,->] (\k + 0.25, \l + 0.75)--(\k - 0.25, \l + 1.25);
            
            \draw (\k + 0.15, \l + 1.15) node[blue, scale=0.75]{1};
        }
    }
    
    \foreach \j in {0,...,2}{
        \pgfmathsetmacro{\z}{2-\j}
        
        \foreach \i in {0,...,\z}{
            \pgfmathsetmacro{\k}{2*\i+1}
            \pgfmathsetmacro{\l}{2*\j+1}
            
            \pgfmathsetmacro{\n}{int(\l)}
            \pgfmathsetmacro{\m}{int(8-\k)}
            
            \draw[blue,thick,->] (\k + 0.25, \l + 0.75)--(\k - 0.25, \l + 1.25);
            
            \draw (\k + 0.15, \l + 1.15) node[blue, scale=0.75]{1};
        }
    }
    \end{tikzpicture}
    \caption{Directed graph associated with $J^*$}
    \label{jstar}
  \end{subfigure}
  }
  \caption{The graph associated with the recursions for $J$ in \eqref{eq:J_rec} (left) and $J^\ast$ in \eqref{newrecursion} (right). The graph is defined so that the recursion is given by a sum of incoming edges as in \eqref{eq:graph}. The edges are color coded red and blue to match the coefficients in the recursion.}
  \label{j_jstar}
\end{figure}
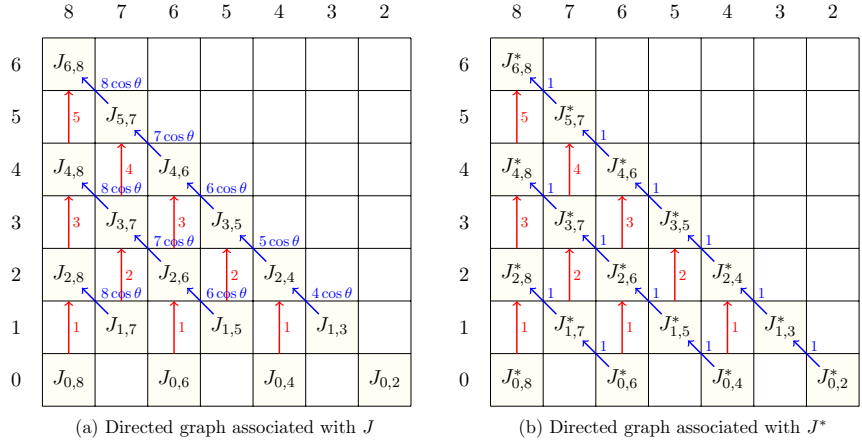

\begin{prop}[Weighted sums of paths for $J$]
\label{prop:J_weights}
In the graph for $J$, we have the following formulas for the sum over weighted paths $W^J$ defined in \eqref{eq:W_def},
\begin{align}
W^J_{(0,n)\to(a,b)} &= (b)_{b-n} (\cos\theta)^{b-n} (P(a,b-n) - Q(a-1,b-n-1)),  \\
W^J_{(1,n)\to(a,b)} &= (b)_{b-n} (\cos\theta)^{b-n} Q(a-1,b-n).
\end{align}
\end{prop}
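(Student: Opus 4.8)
The plan is to prove both formulas for $W^J$ by induction on the target vertex $(a,b)$, after first peeling off the contribution of the blue edges. The key structural fact is that the weighted path sum $W^J_{(a',n)\to(a,b)}$ from \eqref{eq:W_def} satisfies the \emph{same} recurrence \eqref{eq:J_rec} as $J_{a,b}$ itself, but read in the target variable: for $a\geq 2,\,b\geq 1$ every directed path into $(a,b)$ terminates either with the red edge from $(a-2,b)$ of weight $a-1$, or with the blue edge from $(a-1,b-1)$ of weight $b\cos\theta$, so splitting the sum over paths according to this last edge gives
\[
W^J_{(a',n)\to(a,b)} = (a-1)\,W^J_{(a',n)\to(a-2,b)} + b\cos\theta\; W^J_{(a',n)\to(a-1,b-1)},
\]
which I will call $(\star)$. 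The base cases are $W^J_{(a',n)\to(a',n)}=1$ (the empty path) and $W^J_{(a',n)\to(a,b)}=0$ when $(a,b)$ is unreachable from $(a',n)$. It then suffices to check that the two claimed closed forms are consistent with $(\star)$ together with these base values.

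Before inducting I would remove the blue weights. Since the second coordinate is changed only by blue edges, and increases by exactly one along each, every path from $(\,\cdot\,,n)$ to $(\,\cdot\,,b)$ uses exactly $b-n$ blue edges whose weights are $(n+1)\cos\theta,\dots,b\cos\theta$ in some order; their product equals $(b)_{b-n}(\cos\theta)^{b-n}$ regardless of how they interleave with the red edges. This is exactly the reduction recorded by the auxiliary graph $J^\ast$ in Figure \ref{j_jstar}, whose blue edges carry weight $1$. Writing $W^J=(b)_{b-n}(\cos\theta)^{b-n}W^{J^\ast}$ and substituting into $(\star)$ shows that $W^{J^\ast}$ obeys the translation-invariant recurrence $X(a,b)=(a-1)X(a-2,b)+X(a-1,b-1)$, depending on $b$ only through $\ell:=b-n$. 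The proposition thus reduces to the two identities $W^{J^\ast}_{(1,n)\to(a,b)}=Q(a-1,\ell)$ and $W^{J^\ast}_{(0,n)\to(a,b)}=P(a,\ell)-Q(a-1,\ell-1)$.

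Feeding these into the reduced recurrence turns the induction into two statements about the Bessel numbers of Definition \ref{defn:PQ}. For the source $(1,n)$ the claim is consistent with $(\star)$ precisely when $Q(u,\ell)=u\,Q(u-2,\ell)+Q(u-1,\ell-1)$ with $u=a-1$; for the source $(0,n)$ it holds precisely when $P(a,\ell)=(a-1)P(a-2,\ell)+P(a-1,\ell-1)$, because the $Q$-recurrence makes the cross terms $-Q(a-1,\ell-1)$ cancel automatically, so that $P-Q$ inherits the $P$-type recurrence. The $P$-recurrence is immediate from the set-partition meaning of $P(a,\ell)$ (partitions of $\{1,\dots,a\}$ into $\ell$ singletons and $(a-\ell)/2$ pairs): conditioning on whether the element $a$ is a singleton or is paired with one of the other $a-1$ elements yields $P(a-1,\ell-1)+(a-1)P(a-2,\ell)$.

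The hard part will be the recurrence for $Q$, which carries no such transparent combinatorial reading. Inserting the explicit form \eqref{q_explicit} and cancelling the factorial and power-of-two prefactors reduces it (with $m=2r+\ell$, $r=(u-\ell)/2$) to the binomial identity
\[
(r+\ell)\sum_{i=0}^{r}\binom{m+1}{i} = 2m\sum_{i=0}^{r-1}\binom{m-1}{i} + \ell\sum_{i=0}^{r}\binom{m}{i},
\]
which I would prove by Pascal's rule and partial-sum manipulations. Finally I would settle the base cases against the standard convention $1/(-1)!=0$, under which $P$ and $Q$ vanish off their support and give $P(0,0)=Q(0,0)=1$ at the sources; this convention also encodes the structural fact that a path leaving $(0,n)$ must begin with a red step, which shows up as the vanishing $P(\ell,\ell)-Q(\ell-1,\ell-1)=0$ for the forbidden all-blue path. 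Reassembling $W^J=(b)_{b-n}(\cos\theta)^{b-n}W^{J^\ast}$ then yields Proposition \ref{prop:J_weights}.
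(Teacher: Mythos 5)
Your proposal is correct, and it reaches Proposition \ref{prop:J_weights} by a genuinely different route than the paper's. The two arguments share the first move: peeling off the blue weights, since any path from column $n$ to column $b$ crosses each intermediate column exactly once and therefore contributes the path-independent factor $(b)_{b-n}(\cos\theta)^{b-n}$ (the paper's ``Difference \#1''). From there the paper proceeds bijectively: Lemma \ref{lem:J_star} identifies the unit-weight path sums with $P$ and $Q$ by flipping paths $180^\circ$ into the $P$- and $Q$-graphs, the missing blue edge at the sources $(0,n)$ (``Difference \#2'') is handled by decomposing $J^\ast$-paths according to their first step --- which is exactly where $P(a,b-n)-Q(a-1,b-n-1)$ appears --- and the recursions for $P$ and $Q$ themselves (Lemma \ref{lem:PQ_rec}) are imported from \citet{pq}. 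You instead verify the closed forms by induction on the target vertex: the path sums obey the recurrence $(\star)$ read at the target, the missing edge is absorbed into your base cases (row $1$ is unreachable from $(0,n)$, matching $P(1,1)-Q(0,0)=0$), and the combination $P-Q$ is preserved by the recurrence precisely because the coefficient $(a-1)$ in the $P$-recursion equals the coefficient in the recursion for $Q(a-1,\cdot)$, so the cross terms cancel as you claim. You also make self-contained what the paper cites: your set-partition proof of the $P$-recursion is clean and correct, and your reduction of the $Q$-recursion to the identity $(r+\ell)\sum_{i=0}^{r}\binom{m+1}{i}=2m\sum_{i=0}^{r-1}\binom{m-1}{i}+\ell\sum_{i=0}^{r}\binom{m}{i}$ is the right computation --- I checked that two applications of Pascal's rule collapse it to the absorption identity $r\binom{m-1}{r}=(m-r)\binom{m-1}{r-1}$, so that step goes through elementarily as you anticipated. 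The trade-off: the paper's bijection explains \emph{why} Bessel numbers appear (they are themselves weighted path counts in isomorphic graphs), at the price of outsourcing the $P$, $Q$ identities to the literature, whereas your induction is elementary and fully self-contained but replaces that structural insight with algebraic verification. One small caution: your $W^{J^\ast}$, defined by dividing $W^J$ by the blue factor, lives on the $J$-graph (which has no edge $(0,n)\to(1,n+1)$), whereas the paper's $J^\ast$-graph does have that extra edge; your argument is internally consistent, but the sentence identifying your reduction with ``the auxiliary graph $J^\ast$ in Figure \ref{j_jstar}'' should be reworded, since conflating the two graphs would make your claimed value of $W^{J^\ast}_{(0,n)\to(a,b)}$ disagree with Lemma \ref{lem:J_star}.
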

To prove this Proposition \ref{prop:J_weights}, we first create a simpler recursion, $J^\ast$, which we solve first and then slightly modify the solution to get the solution for $J$.

\begin{lemma}[Weighted sums of paths for $J^\ast$]
\label{lem:J_star}
Let $J^\ast_{a,b}$ be defined to be the recursion:
\begin{align}
    {J^*_{a,b}} &:= \textcolor{red}{(a-1)}J^*_{a-2, b} + \textcolor{blue}{1}J^*_{a-1,b-1}\text{  for }2\leq a\leq b, \\ 
    J^*_{1,b} &:= \textcolor{red}{0} + \textcolor{blue}{1}J^*_{0,b-1}\text{ for }1\leq b.
    \label{newrecursion}
\end{align}
Thinking of this recursion as a graph as in Definition \ref{def:graph} (see Figure \ref{j_jstar} for an illustration), we have that the sum of weighted paths $W^{J^\ast}$ defined analogously to those in \eqref{eq:W_def}, are given by $P$ and $Q$ numbers, namely
\begin{align}
W^{J^\ast}_{(0,n)\to(a,b)} =  P(a,b-n),   \quad
W^{J^\ast}_{(1,n)\to(a,b)} =  Q(a-1,b-n). \label{eq:jstar_PQ_paths}
\end{align}
\end{lemma}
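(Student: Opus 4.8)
The plan is to exploit the defining property of the path weights from Definition \ref{def:graph}: since every path terminating at a vertex $(a,b)$ must arrive along one of its incoming edges, the quantities $W^{J^\ast}_{(0,n)\to(a,b)}$ and $W^{J^\ast}_{(1,n)\to(a,b)}$ automatically satisfy the recursion defining $J^\ast$ in \eqref{newrecursion}, now read as a recursion \emph{in the target index} $(a,b)$ rather than in $J^\ast$ itself. The lemma therefore reduces to an induction on $a$: I will verify that the claimed closed forms $P(a,b-n)$ and $Q(a-1,b-n)$ satisfy this same recursion together with the correct initial data at the source, after which equality is forced.

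First I would pin down the initial/boundary data, where the source value $W^{J^\ast}_{v\to v}=1$ overrides the recursion. For the source $(0,n)$ the only path remaining in row $a=0$ is the empty one, so $W^{J^\ast}_{(0,n)\to(0,b)}=\mathbf{1}\{b=n\}$; and since $a$ strictly increases along every edge, row $0$ is unreachable from $(1,n)$, giving $W^{J^\ast}_{(1,n)\to(0,b)}=0$ and $W^{J^\ast}_{(1,n)\to(1,b)}=\mathbf{1}\{b=n\}$. These match $P(0,k)=\mathbf{1}\{k=0\}$, $Q(-1,k)=0$, and $Q(0,k)=\mathbf{1}\{k=0\}$, which are immediate from Definition \ref{defn:PQ}. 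Writing $k:=b-n$ and $m:=\tfrac{a-k}{2}$, the two substitutions $W^{J^\ast}_{(0,n)\to(a,b)}=P(a,k)$ and $W^{J^\ast}_{(1,n)\to(a,b)}=Q(a-1,k)$ turn \eqref{newrecursion} into the pair of identities
$$P(a,k)=(a-1)P(a-2,k)+P(a-1,k-1),\qquad Q(p,k)=p\,Q(p-2,k)+Q(p-1,k-1),$$
the second with $p:=a-1\ge 1$. The $P$-identity is short: inserting $P(a,k)=\tfrac{a!}{k!\,m!\,2^{m}}$ and collecting terms reduces everything to the trivial arithmetic fact $k+2m=a$, so it falls out cleanly.

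The main obstacle is the $Q$-recurrence, since $Q$ carries the extra partial binomial sum $\sum_{i=0}^{m}\binom{p+1}{i}$. Inserting the explicit form of Definition \ref{defn:PQ} and cancelling the common factor $\tfrac{(k+m-1)!}{k!\,2^{m}}$ from both sides reduces the recurrence to the binomial identity
$$2p\sum_{i=0}^{m-1}\binom{p-1}{i}+k\sum_{i=0}^{m}\binom{p}{i}=(k+m)\sum_{i=0}^{m}\binom{p+1}{i},\qquad p=k+2m.$$
I would prove this by repeatedly applying Pascal's rule $\binom{p+1}{i}=\binom{p}{i}+\binom{p}{i-1}$ to rewrite the right-hand partial sum in terms of $\sum\binom{p}{\cdot}$ and $\sum\binom{p-1}{\cdot}$, and then matching coefficients using the relation $p=k+2m$ to absorb the boundary terms of each truncated sum. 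This coefficient-matching is the single genuinely delicate calculation; everything else is bookkeeping.

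With both recurrences verified against the initial data of the previous paragraph, a joint induction on $a$ (with $b$, equivalently $k$, as a secondary parameter) shows that $W^{J^\ast}_{(0,n)\to(a,b)}$ and $P(a,b-n)$ agree, and likewise $W^{J^\ast}_{(1,n)\to(a,b)}$ and $Q(a-1,b-n)$ agree, for all $(a,b)$, which is exactly \eqref{eq:jstar_PQ_paths}. I would remark that the vanishing cases ($a<k$, or $a\not\equiv k \pmod 2$) are handled uniformly, since both the path sums and the $P,Q$ values are zero there, so no separate argument is needed for the support of the formulas.
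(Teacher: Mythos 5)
Your proposal is correct, but it follows a genuinely different route from the paper's proof. The paper argues bijectively: it first establishes the recursions $P(a,b)=(a-1)\,P(a-2,b)+P(a-1,b-1)$ and $Q(a,b)=a\,Q(a-2,b)+Q(a-1,b-1)$ (Lemma \ref{lem:PQ_rec}) by quoting identities of \citet{pq}; these exhibit $P$ and $Q$ themselves as weighted path sums in their own graphs with the single unit source $(0,0)$, and the lemma then follows by identifying paths in the $J^\ast$ graph with paths in the $P$ and $Q$ graphs via the $180^\circ$ ``flip'' bijection detailed in Appendix \ref{app:bijection}. You bypass both ingredients: you observe that the path sums $W^{J^\ast}_{(0,n)\to(a,b)}$ and $W^{J^\ast}_{(1,n)\to(a,b)}$ satisfy the $J^\ast$ recursion read in the target vertex (decomposing paths by their last edge, mirroring \eqref{eq:graph}), pin down the boundary data at the sources, and then check directly from the explicit formulas of Definition \ref{defn:PQ} that $P(a,b-n)$ and $Q(a-1,b-n)$ obey the same recursion and boundary data --- in effect re-proving Lemma \ref{lem:PQ_rec} by hand instead of citing it. Your two verifications are sound: the $P$ case collapses to $k+2m=a$ exactly as you say (using the corrected reading $P(a,k)=a!/\left(k!\,m!\,2^{m}\right)$, which supplies the factorial missing from \eqref{p_explicit} as printed), and your binomial identity for $Q$ is true --- applying Pascal's rule once to $\sum_{i\le m}\binom{p+1}{i}$ and once to $\sum_{i\le m-1}\binom{p}{i}$ reduces it to the absorption identity $m\binom{p}{m}=p\binom{p-1}{m-1}$ --- so the one delicate step does go through. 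The trade-off: your argument is self-contained and purely computational, needing neither the bijection nor the external reference, at the price of a binomial manipulation; the paper's argument keeps the combinatorics transparent (it explains \emph{why} Bessel numbers arise as path counts) and outsources the $Q$ recursion to the literature.
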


The connection between the $J^\ast$ and the $P$, $Q$ numbers is through the following recursion for the $P$, $Q$ numbers.


\begin{lemma}(Recursion for $P$ and $Q$ numbers)
\label{lem:PQ_rec}
The $P$ numbers, defined in Definition \ref{defn:PQ}, satisfy $P(0,0) = 1$, $P(n,n) = P(n-1,n-1)$ for $n \geq 1$, and the recursion
\begin{align*}
    P(a,b) = \textcolor{red}{(a-1)}\cdot P(a-2,b) + \textcolor{blue}{1}\cdot P(a-1,b-1), \;\; \textrm{for} \;\; a \geq 2,\; 0\leq b \leq a-2,
\end{align*}
under the convention that $P(a, -1)=0$. The $Q$ numbers satisfy the same recursion as the $P$ numbers, with a coefficient of $\textcolor{red}{a}$ rather than $\textcolor{red}{(a-1)}$.
\end{lemma}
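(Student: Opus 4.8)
The plan is to verify all three claims by direct substitution of the closed forms \eqref{p_explicit} and \eqref{q_explicit}, reducing each to an elementary factorial or binomial identity. Throughout I write $k := \frac{a-b}{2}$ and treat only the case $a \equiv b \pmod 2$: when $a \not\equiv b \pmod 2$ every term appearing in either recursion vanishes by definition (since $a-2 \equiv a \not\equiv b$, and $(a-1)-(b-1)=a-b$ is odd so that $P(a-1,b-1)=Q(a-1,b-1)=0$), and the recursions hold trivially. The boundary facts for $P$ are immediate: plugging $a=b$ (so $k=0$) into \eqref{p_explicit} gives $P(n,n)=\frac{n!}{n!\,0!\,2^0}=1$ for every $n\geq 0$, which yields both $P(0,0)=1$ and $P(n,n)=P(n-1,n-1)$.

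For the $P$ recursion with $a\geq 2$, $0\leq b\leq a-2$, I would substitute the formula in the form $P(a,b)=\frac{a!}{b!\,k!\,2^k}$, $P(a-2,b)=\frac{(a-2)!}{b!\,(k-1)!\,2^{k-1}}$ and $P(a-1,b-1)=\frac{(a-1)!}{(b-1)!\,k!\,2^k}$ (the shifts of $k$ follow from $\frac{(a-2)-b}{2}=k-1$ and $\frac{(a-1)-(b-1)}{2}=k$). Using $(a-1)(a-2)!=(a-1)!$ and placing both terms on the right over the common denominator $b!\,k!\,2^k$, the numerator becomes $(a-1)!\,(2k+b)$; since $2k+b=a$ this equals $a!$, which is exactly $b!\,k!\,2^k\,P(a,b)$. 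The endpoint $b=0$ is handled by the convention $P(a,-1)=0$, so that the second term drops and the identity becomes $P(a,0)=(a-1)P(a-2,0)$, confirmed by the same algebra.

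For $Q$ I would carry out the analogous substitution and reduce the target $Q(a,b)=a\,Q(a-2,b)+Q(a-1,b-1)$ to a statement about the partial binomial sums $S_m(n):=\sum_{i=0}^{m}\binom{n}{i}$. Writing $\frac{a+b}{2}=b+k$ and $\frac{b-a}{2}=-k$, expressing $Q(a,b)$, $Q(a-2,b)$, $Q(a-1,b-1)$ through \eqref{q_explicit}, and dividing through by the common factor $\frac{(b+k-1)!}{b!}2^{-k}$ (using $\frac{1}{(b-1)!}=\frac{b}{b!}$ and then $b=a-2k$), the recursion becomes the purely binomial identity
\begin{equation*}
2a\,S_{k-1}(a-1)+(a-2k)\,S_k(a)=(a-k)\,S_k(a+1).
\end{equation*}
I would prove this by telescoping with Pascal's rule in the two forms $S_m(a+1)=S_m(a)+S_{m-1}(a)$ and $S_m(a)=S_m(a-1)+S_{m-1}(a-1)$. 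Substituting the first on the right and cancelling the $S_k(a)$ contributions reduces the claim to $2a\,S_{k-1}(a-1)=a\,S_{k-1}(a)+k\binom{a}{k}$; applying the second form to $S_{k-1}(a)$ then collapses this to $a\,S_{k-1}(a-1)=a\,S_{k-2}(a-1)+k\binom{a}{k}$, i.e.\ $a\binom{a-1}{k-1}=k\binom{a}{k}$, the standard absorption identity.

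The main obstacle is the $Q$ computation. Whereas the $P$ recursion closes under a one-line factorial cancellation, $Q$ carries the partial sums $S_m(n)$, and the bookkeeping of telescoping these via Pascal's rule — while correctly tracking the coefficients $a$, $b=a-2k$, and $a-k$ and not conflating shifts in the upper versus the cutoff index — is where the care is required. Everything else is routine substitution, and I expect the parity mismatch and the $b=0$ boundary cases to each take only a line.
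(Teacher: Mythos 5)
Your proposal is correct, and it takes a genuinely different route from the paper. The paper does not touch the closed forms \eqref{p_explicit}--\eqref{q_explicit} at all: it imports four identities for the $P$ and $Q$ numbers from the cited prior work \citet{pq}, then obtains the $P$ recursion by chaining two of them (substituting \eqref{pq1} into \eqref{pq2} to eliminate the $P(a-1,b+1)$ term) and reads the $Q$ recursion directly off \eqref{pq4}. Your proof instead verifies both recursions from Definition \ref{defn:PQ} by direct substitution: for $P$ this is the one-line factorial cancellation $(a-1)!\,(2k+b)=a!$ that you identify, and for $Q$ it reduces to the partial-binomial-sum identity $2a\,S_{k-1}(a-1)+(a-2k)\,S_k(a)=(a-k)\,S_k(a+1)$, which your two applications of Pascal's rule plus the absorption identity $k\binom{a}{k}=a\binom{a-1}{k-1}$ do establish (I checked the telescoping; the bookkeeping is right, and the coefficient of $S_k(a)$ collapses as you claim). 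What each approach buys: the paper's argument is a few lines but is only as strong as the cited identities, and as literally derived it covers $a\geq 3$, $1\leq b\leq a-2$, leaving the $b=0$ and $a=2$ boundary instances of the stated lemma to the reader; your verification is self-contained, covers exactly the claimed range including the $b=0$ convention (the offending term carries a factor of $b$ and vanishes) and the parity-mismatch case, and incidentally certifies that the explicit formulas in Definition \ref{defn:PQ} are consistent with the recursion the rest of Section \ref{sec:explicit_formula} relies on --- at the cost of the heavier algebra in the $Q$ computation.
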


The proof of Lemma \ref{lem:PQ_rec} is an easy consequence of known results from \cite{pq} and is deferred to \Cref{app:P_Q}.

\begin{proof}[Of Lemma \ref{lem:J_star}] Using the same idea of recursions expressed as graphs as in Definition \ref{def:graph}, the recursion from Lemma \ref{lem:PQ_rec} means that $P$ and $Q$ can be expressed as weighted directed graphs. These are displayed in Figure \ref{pqj}. Since the $P$ and $Q$ graphs have only one single unit valued source vertex at $(0,0)$,  \eqref{eq:graph_sources} shows that the $P$ and $Q$ numbers are actually themselves equal to sums over weighted paths in their respective graphs $$P(a,b) = W^P_{(0,0)\to(a,b)}, \quad Q(a,b) = W^Q_{(0,0)\to(a,b)}. $$
 \begin{figure}[h]
  \begin{subfigure}[b]{0.31\textwidth}
    \resizebox{5cm}{5cm}{
        \begin{tikzpicture}
    
    \foreach \i in {6,...,0}{
        \pgfmathsetmacro{\ii}{int(6-\i)}
        \draw (-0.5,\i+0.5) node{\ii};
    }
    
    \foreach \i in {0,...,6}{
        \draw (\i + 0.5, 7.5) node{\i};
    }
    
    \foreach \j in {0,...,3}{
        \pgfmathsetmacro{\z}{3-\j}
        
        \foreach \i in {0,...,\z}{
        
            \pgfmathsetmacro{\k}{2*\i}
            \pgfmathsetmacro{\l}{2*\j}
            \fill[yellow!5] (\k,\l) rectangle (\k+1,\l+1);
            
        }
    }
    
    \foreach \j in {0,...,2}{
        \pgfmathsetmacro{\z}{2-\j}
        
        \foreach \i in {0,...,\z}{
            \pgfmathsetmacro{\k}{2*\i+1}
            \pgfmathsetmacro{\l}{2*\j+1}
            \fill[yellow!5] (\k,\l) rectangle (\k+1,\l+1);
        }
    }
    
    \foreach \j in {0,...,2}{
        \pgfmathsetmacro{\z}{2-\j}
        
        \foreach \i in {0,...,\z}{
            
            \pgfmathsetmacro{\k}{2*\i}
            \pgfmathsetmacro{\l}{2*\j+1}
            
            \pgfmathsetmacro{\ll}{int(6-\l)}
            \pgfmathsetmacro{\kk}{int(\k)}
            \draw (\k + 0.5, \l + 0.5) node[black!10]{0};
            
        }
    }
    
    \foreach \j in {0,...,2}{
        \pgfmathsetmacro{\z}{2-\j}
        
        \foreach \i in {0,...,\z}{
            \pgfmathsetmacro{\k}{2*\i+1}
            \pgfmathsetmacro{\l}{2*\j}
            \pgfmathsetmacro{\ll}{int(6-\l)}
            \pgfmathsetmacro{\kk}{int(\k)}
            \draw (\k + 0.5, \l + 0.5) node[black!10]{0};
        }
    }
    
    \foreach \i in {1,...,6}{
        \foreach \j in {1,...,\i}{
            \pgfmathsetmacro{\jj}{7-\j}
            \draw (\i+0.5, \jj+0.5) node[black!10]{0};
        }
    }
    
    \draw[step = 1cm, black, thin] (0,0) grid (7,7);

    \foreach \j in {0,...,2}{
        \pgfmathsetmacro{\z}{2-\j}
        
        \foreach \i in {0,...,\z}{
            \pgfmathsetmacro{\k}{2*\i}
            \pgfmathsetmacro{\l}{2*\j}
            
            \pgfmathsetmacro{\n}{int(\l)}
            \pgfmathsetmacro{\m}{int(8-\k)}
            
            \draw[red,thick,->] (\k + 0.5, \l + 2) -- (\k + 0.5, \l + 1);
            
            \pgfmathsetmacro{\r}{int(5-\l)}
            \draw (\k + 0.65, \l + 1.5) node[red, scale=0.75]{\r};
        }
    }
    
    \foreach \j in {0,...,2}{
        \pgfmathsetmacro{\z}{2-\j}
        
        \foreach \i in {0,...,\z}{
            \pgfmathsetmacro{\k}{2*\i+1}
            \pgfmathsetmacro{\l}{2*\j-1}
            
            \pgfmathsetmacro{\n}{int(\l)}
            \pgfmathsetmacro{\m}{int(8-\k)}
            
            \draw[red,thick,->] (\k + 0.5, \l + 2) -- (\k + 0.5, \l + 1);
            
            \pgfmathsetmacro{\r}{int(5-\l)}
            \draw (\k + 0.65, \l + 1.5) node[red, scale=0.75]{\r};
        }
    }
    
    \foreach \j in {0,...,2}{
        \pgfmathsetmacro{\z}{2-\j}
        
        \foreach \i in {0,...,\z}{
            \pgfmathsetmacro{\k}{2*\i+2}
            \pgfmathsetmacro{\l}{2*\j}
            
            \pgfmathsetmacro{\n}{int(\l)}
            \pgfmathsetmacro{\m}{int(8-\k)}
            
            \draw[blue,thick,->] (\k - 0.25, \l + 1.25) -- (\k + 0.25, \l + 0.75);
            
            \draw (\k + 0.15, \l + 1.15) node[blue, scale=0.75]{1};
        }
    }
    
    \foreach \j in {0,...,2}{
        \pgfmathsetmacro{\z}{2-\j}
        
        \foreach \i in {0,...,\z}{
            \pgfmathsetmacro{\k}{2*\i+1}
            \pgfmathsetmacro{\l}{2*\j+1}
            
            \pgfmathsetmacro{\n}{int(\l)}
            \pgfmathsetmacro{\m}{int(8-\k)}
            
            \draw[blue,thick,->] (\k - 0.25, \l + 1.25) -- (\k + 0.25, \l + 0.75);
            
            \draw (\k + 0.15, \l + 1.15) node[blue, scale=0.75]{1};
        }
    }

    \draw (0.5, 6.5) node{1};
    \draw (0.5, 4.5) node{1};
    \draw (0.5, 2.5) node{3};
    \draw (0.5, 0.5) node{15};
    
    \draw (1.5, 5.5) node{1};
    \draw (1.5, 3.5) node{3};
    \draw (1.5, 1.5) node{15};
    
    \draw (2.5, 4.5) node{1};
    \draw (2.5, 2.5) node{6};
    \draw (2.5, 0.5) node{45};
    
    \draw (3.5, 3.5) node{1};
    \draw (3.5, 1.5) node{10};
    
    \draw (4.5, 2.5) node{1};
    \draw (4.5, 0.5) node{15};
    
    \draw (5.5, 1.5) node{1};
    
    \draw (6.5, 0.5) node{1};
    
    \end{tikzpicture}
    }
    \caption{$P$}
    \label{p}
  \end{subfigure}
  \hfill
  \begin{subfigure}[b]{0.31\textwidth}
    \resizebox{5cm}{5cm}{
        \begin{tikzpicture}
    
    \foreach \i in {6,...,0}{
        \pgfmathsetmacro{\ii}{int(6-\i)}
        \draw (-0.5,\i+0.5) node{\ii};
    }
    
    \foreach \i in {0,...,6}{
        \draw (\i + 0.5, 7.5) node{\i};
    }
    
    \foreach \j in {0,...,3}{
        \pgfmathsetmacro{\z}{3-\j}
        
        \foreach \i in {0,...,\z}{
        
            \pgfmathsetmacro{\k}{2*\i}
            \pgfmathsetmacro{\l}{2*\j}
            \fill[yellow!5] (\k,\l) rectangle (\k+1,\l+1);
            
        }
    }
    
    \foreach \j in {0,...,2}{
        \pgfmathsetmacro{\z}{2-\j}
        
        \foreach \i in {0,...,\z}{
            \pgfmathsetmacro{\k}{2*\i+1}
            \pgfmathsetmacro{\l}{2*\j+1}
            \fill[yellow!5] (\k,\l) rectangle (\k+1,\l+1);
        }
    }
    
    \foreach \j in {0,...,2}{
        \pgfmathsetmacro{\z}{2-\j}
        
        \foreach \i in {0,...,\z}{
            
            \pgfmathsetmacro{\k}{2*\i}
            \pgfmathsetmacro{\l}{2*\j+1}
            
            \pgfmathsetmacro{\ll}{int(6-\l)}
            \pgfmathsetmacro{\kk}{int(\k)}
            \draw (\k + 0.5, \l + 0.5) node[black!10]{0};
            
        }
    }
    
    \foreach \j in {0,...,2}{
        \pgfmathsetmacro{\z}{2-\j}
        
        \foreach \i in {0,...,\z}{
            \pgfmathsetmacro{\k}{2*\i+1}
            \pgfmathsetmacro{\l}{2*\j}
            \pgfmathsetmacro{\ll}{int(6-\l)}
            \pgfmathsetmacro{\kk}{int(\k)}
            \draw (\k + 0.5, \l + 0.5) node[black!10]{0};
        }
    }
    
    \foreach \i in {1,...,6}{
        \foreach \j in {1,...,\i}{
            \pgfmathsetmacro{\jj}{7-\j}
            \draw (\i+0.5, \jj+0.5) node[black!10]{0};
        }
    }
    
    \draw[step = 1cm, black, thin] (0,0) grid (7,7);

    \foreach \j in {0,...,2}{
        \pgfmathsetmacro{\z}{2-\j}
        
        \foreach \i in {0,...,\z}{
            \pgfmathsetmacro{\k}{2*\i}
            \pgfmathsetmacro{\l}{2*\j}
            
            \pgfmathsetmacro{\n}{int(\l)}
            \pgfmathsetmacro{\m}{int(8-\k)}
            
            \draw[red,thick,->] (\k + 0.5, \l + 2) -- (\k + 0.5, \l + 1);
            
            \pgfmathsetmacro{\r}{int(6-\l)}
            \draw (\k + 0.65, \l + 1.5) node[red, scale=0.75]{\r};
        }
    }

    \foreach \j in {0,...,2}{
        \pgfmathsetmacro{\z}{2-\j}
        
        \foreach \i in {0,...,\z}{
            \pgfmathsetmacro{\k}{2*\i+1}
            \pgfmathsetmacro{\l}{2*\j-1}
            
            \pgfmathsetmacro{\n}{int(\l)}
            \pgfmathsetmacro{\m}{int(8-\k)}
            
            \draw[red,thick,->] (\k + 0.5, \l + 2) -- (\k + 0.5, \l + 1);
            
            \pgfmathsetmacro{\r}{int(6-\l)}
            \draw (\k + 0.65, \l + 1.5) node[red, scale=0.75]{\r};
        }
    }
    
    \foreach \j in {0,...,2}{
        \pgfmathsetmacro{\z}{2-\j}
        
        \foreach \i in {0,...,\z}{
            \pgfmathsetmacro{\k}{2*\i+2}
            \pgfmathsetmacro{\l}{2*\j}
            
            \pgfmathsetmacro{\n}{int(\l)}
            \pgfmathsetmacro{\m}{int(8-\k)}
            
            \draw[blue,thick,->] (\k - 0.25, \l + 1.25) -- (\k + 0.25, \l + 0.75);
            
            \draw (\k + 0.15, \l + 1.15) node[blue, scale=0.75]{1};
        }
    }
    
    \foreach \j in {0,...,2}{
        \pgfmathsetmacro{\z}{2-\j}
        
        \foreach \i in {0,...,\z}{
            \pgfmathsetmacro{\k}{2*\i+1}
            \pgfmathsetmacro{\l}{2*\j+1}
            
            \pgfmathsetmacro{\n}{int(\l)}
            \pgfmathsetmacro{\m}{int(8-\k)}
            
            \draw[blue,thick,->] (\k - 0.25, \l + 1.25) -- (\k + 0.25, \l + 0.75);
            
            \draw (\k + 0.15, \l + 1.15) node[blue, scale=0.75]{1};
        }
    }

    \draw (0.5, 6.5) node{1};
    \draw (0.5, 4.5) node{2};
    \draw (0.5, 2.5) node{8};
    \draw (0.5, 0.5) node{$\vdots$}; 
    
    \draw (1.5, 5.5) node{1};
    \draw (1.5, 3.5) node{5};
    \draw (1.5, 1.5) node{33};
    
    \draw (2.5, 4.5) node{1};
    \draw (2.5, 2.5) node{9};
    \draw (2.5, 0.5) node{$\vdots$}; 
    
    \draw (3.5, 3.5) node{1};
    \draw (3.5, 1.5) node{14};
    
    \draw (4.5, 2.5) node{1};
    \draw (4.5, 0.5) node{$\vdots$}; 
    
    \draw (5.5, 1.5) node{1};
    
    \draw (6.5, 0.5) node{$\vdots$}; 
    
    \end{tikzpicture}
    }
    \caption{$Q$}
    \label{q}
  \end{subfigure}
  \hfill
  \begin{subfigure}[b]{0.31\textwidth}
    \resizebox{5cm}{5cm}{
        \begin{tikzpicture}
    
    \foreach \i in {0,1,...,6}{
        \draw (-0.5,\i+0.5) node{\i};
    }
    
    \foreach \i\j in {0/8,1/7,2/6,3/5,4/4,5/3,6/2}{
        \draw (\i + 0.5, 7.5) node{\j};
    }
    
    \foreach \j in {0,...,3}{
        \pgfmathsetmacro{\z}{3-\j}
        
        \foreach \i in {0,...,\z}{
        
            \pgfmathsetmacro{\k}{2*\i}
            \pgfmathsetmacro{\l}{2*\j}
            \fill[yellow!5] (\k,\l) rectangle (\k+1,\l+1);
            
            \pgfmathsetmacro{\n}{int(\l)}
            \pgfmathsetmacro{\m}{int(8-\k)}
            \draw (\k + 0.5, \l + 0.5) node{$J_{\n,\m}^*$};
            
        }
    }
    
    \foreach \j in {0,...,2}{
        \pgfmathsetmacro{\z}{2-\j}
        
        \foreach \i in {0,...,\z}{
            \pgfmathsetmacro{\k}{2*\i+1}
            \pgfmathsetmacro{\l}{2*\j+1}
            \fill[yellow!5] (\k,\l) rectangle (\k+1,\l+1);
            
            \pgfmathsetmacro{\n}{int(\l)}
            \pgfmathsetmacro{\m}{int(8-\k)}
            \draw (\k + 0.5, \l + 0.5) node{$J_{\n,\m}^*$};
        }
    }

    \draw[step = 1cm, black, thin] (0,0) grid (7,7);

    \foreach \j in {0,...,2}{
        \pgfmathsetmacro{\z}{2-\j}
        
        \foreach \i in {0,...,\z}{
            \pgfmathsetmacro{\k}{2*\i}
            \pgfmathsetmacro{\l}{2*\j}
            
            \pgfmathsetmacro{\n}{int(\l)}
            \pgfmathsetmacro{\m}{int(8-\k)}
            
            \draw[red,thick,->] (\k + 0.5, \l + 1) -- (\k + 0.5, \l + 2);
            
            \pgfmathsetmacro{\r}{int(\l+1)}
            \draw (\k + 0.65, \l + 1.5) node[red, scale=0.75]{\r};
        }
    }
    
    \foreach \j in {1,...,2}{
        \pgfmathsetmacro{\z}{2-\j}
        
        \foreach \i in {0,...,\z}{
            \pgfmathsetmacro{\k}{2*\i+1}
            \pgfmathsetmacro{\l}{2*\j-1}
            
            \pgfmathsetmacro{\n}{int(\l)}
            \pgfmathsetmacro{\m}{int(8-\k)}
            
            \draw[red,thick,->] (\k + 0.5, \l + 1) -- (\k + 0.5, \l + 2);
            
            \pgfmathsetmacro{\r}{int(\l+1)}
            \draw (\k + 0.65, \l + 1.5) node[red, scale=0.75]{\r};
        }
    }
    
    \foreach \j in {1,...,3}{
        \pgfmathsetmacro{\z}{3-\j}
        
        \foreach \i in {0,...,\z}{
            \pgfmathsetmacro{\k}{2*\i+2}
            \pgfmathsetmacro{\l}{2*\j-2}
            
            \pgfmathsetmacro{\n}{int(\l)}
            \pgfmathsetmacro{\m}{int(8-\k)}
            
            \draw[blue,thick,->] (\k + 0.25, \l + 0.75) -- (\k - 0.25, \l + 1.25);
            
            \draw (\k + 0.15, \l + 1.15) node[blue, scale=0.75]{1};
        }
    }
    
    \foreach \j in {0,...,2}{
        \pgfmathsetmacro{\z}{2-\j}
        
        \foreach \i in {0,...,\z}{
            \pgfmathsetmacro{\k}{2*\i+1}
            \pgfmathsetmacro{\l}{2*\j+1}
            
            \pgfmathsetmacro{\n}{int(\l)}
            \pgfmathsetmacro{\m}{int(8-\k)}
            
            \draw[blue,thick,->] (\k + 0.25, \l + 0.75) -- (\k - 0.25, \l + 1.25);
            
            \draw (\k + 0.15, \l + 1.15) node[blue, scale=0.75]{1};
        }
    }
    \end{tikzpicture}
    }
    \caption{$J^*_{6,8}$}
    \label{j}
  \end{subfigure}
  \caption{Graphs associated with the recursions for the $P$ numbers (left), $Q$ numbers (middle), and $J^*$ (right). The weighted edges indicate the coefficients in the recursions for $P,Q,J^\ast$ respectively. The diagrams are lined up so that the sum of weighted paths in from $J^*_{6,8}$ can be directly read from the $P$ and $Q$ entries in the same location. By reading from the bottom displayed row of $P$ we see that the weighted sum over paths $W^{J^\ast}_{(0,n)\to(6,8)}$ are $15$, $45$, $15$ and $1$ for $n=8,6,4$ and $2$ respectively. (Since these are the source vertices, this shows that $J^*_{6,8}=15J^*_{0,8}+45J^*_{0,6}+15J^*_{0,4}+1 J^*_{0,2}$.) From the bottom displayed row of $Q$ we see that the values for sums of weighted paths from vertices $W^{J^\ast}_{(1,n)\to(6,8)}$ are $33$, $14$ and $1$ for $n=7,5$ and $3$ respectively. }
  \label{pqj}
\end{figure}
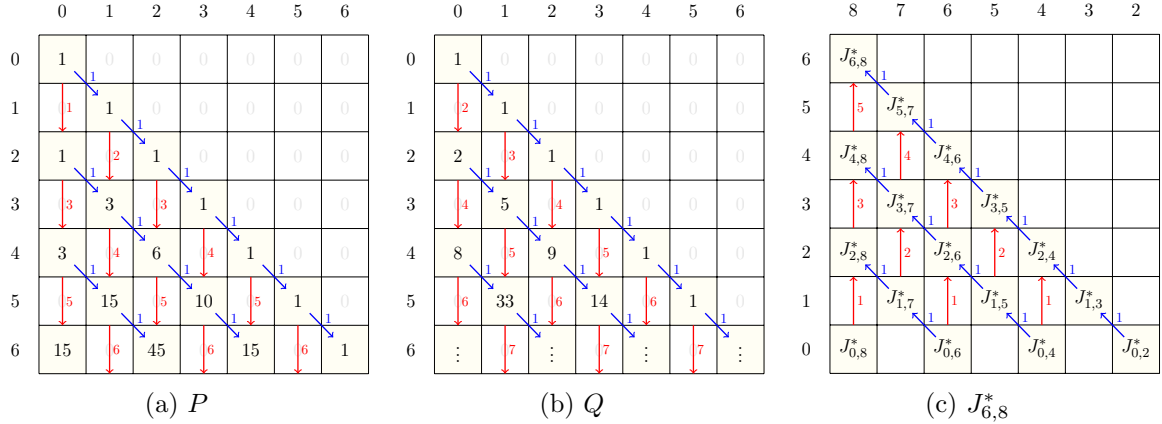
Therefore the statement of the lemma is that sum over weighted paths in the $J^\ast$ graph are the same as other sums over weighted paths in the $P$ graph/$Q$ graphs,
\begin{equation}
W^{J^\ast}_{(0,n)\to (a,b)} = W^P_{(0,0) \to (a,b-n)}, \quad W^{J^\ast}_{(1,n)\to(a,b)} = W^Q_{(0,0)\to (a-1,b-n)}. \label{eq:path_equality}
\end{equation}

The fact that these are equal is demonstrated by establishing a simple bijection between weighted paths in the $P$ graph/$Q$ graph, and weighted paths in the $J^\ast$ graph. For example, in Figure \ref{pqj}, there is a bijection between the weighted paths in the $P$ graph which connect $P(0,0)$ to $P(6,2)$, to the paths which connect $J^*_{6,8}$ to $J^*_{0,6}$ in the $J^\ast$ graph. The bijection is simply to \emph{flip} any path in the $P$-graph by rotating it by $180^\circ$ to get a valid path in the $J^\ast$-graph. Moreover, the edge weights for $J^\ast$ and $P$ are precisely set up so that under this bijection, the paths will have the same set of weighted edges in the same order. A full, more detailed, explanation of this bijection is given in \Cref{app:bijection}. This argument shows that $W^{J^\ast}_{(0,n) \to (a,b)}=P(a,b-n)$ as desired.


%


%
The $P$ numbers do not apply for paths between $J^*_{1,n}$ and $J^*_{a,b}$ because we are starting one row higher so the first vertical upward edge is weight $2$. In this case, there is a bijection to the $Q$-graph after flipping the path. For any path which runs from a node in row 1 to the top left corner of the $J^*$-graph, we can find the same ``flipped" path in the graph of the $Q$, running from the top left entry to the corresponding node in row $a-1$. (The bijection is explained in detail in \Cref{app:bijection}.) Hence $W^{J^\ast}_{(1,n) \to (a,b)}=Q(a-1,b-n)$ as desired.
\end{proof}



Having solved for $J^\ast$ in terms of $P$ and $Q$, it remains to translate these into the weights for $J$ to obtain Proposition \ref{prop:J_weights}.

\begin{proof}[Of Proposition \ref{prop:J_weights}]
    
\noindent The proof follows by relating the weighted sum of paths for $J$ in terms of $J^\ast$ and then applying the result of Lemma \ref{lem:J_star}. There are two differences between the formula for $J_{a,b}$ compared to $J^\ast_{a,b}$, which can both be seen in Figure \ref{j_jstar}. We handle both differences as follows:
 
\noindent \textbf{\emph{Difference \#1:}} $J$ has a weight of $b\cos\theta$ on the blue diagonal edges $(a,b) \to (a+1,b+1)$ vs $J^\ast$ has a weight of 1.

This difference is handled by the following observation: any path from $(a,b) \to (a^\prime,b^\prime)$ in the graph goes through each column between $b$ and $b^\prime$ exactly once. This means that the contribution of the edge weights from these edges do not depend on the details of which path was taken, only the starting and ending points. They always contribute the same factor, $(b)_{b^\prime - b} (\cos\theta)^{b^\prime - b}$. (here $(b)_k=b(b-1)\cdots(b-k+1)$ is the falling factorial with $k$ terms). This argument shows that the weighted sum of paths in $J$ and $J^\ast$ are related by

\begin{equation}
    \label{eq:WJ_vs_WJast}
    W^J_{(a,b)\to(a^\prime,b^\prime)} = (b)_{b^\prime - b} (\cos \theta)^{b^\prime - b} W^{J^\ast}_{(a,b)\to(a^\prime,b^\prime)}.
\end{equation}
By the result of Lemma \ref{lem:J_star}, this shows that $ W^J_{(1,n)\to(a,b)} = (b)_{b-n} (\cos\theta)^{b-n} Q(a-1,b-n)$ as desired. Equation \eqref{eq:WJ_vs_WJast} holds for all paths with starting point $a\geq 1$. When $a=0$, there is one additional difference between $J$ and $J^\ast$ which is accounted for below.

\noindent \textbf{\emph{Difference \#2:}} $J_{0,n}$ has \emph{no} diagonal edge vs $J^\ast_{0,n}$ has a diagonal blue edge of weight 1.

Because of this ``missing edge'', the only choice in $J$ for paths starting from $(0,n)$ is to first go vertically up by 2 units to $(2,n)$. Hence $W^{J}_{(0,n)\to(a,b)}=W^{J}_{(2,n)\to(a,b)}$. To evaluate this, we use the decomposition of paths in $J^\ast$ by what their first step is, either a diagonal blue step or a red vertical up step, to see that
\begin{align}
  W^{J^\ast}_{(0,n)\to(a,b)} &= W^{J^\ast}_{(1,n+1)\to(a,b)}+W^{J^\ast}_{(2,n)\to(a,b)},\\
  \implies W^{J^\ast}_{(2,n)\to(a,b)} &= W^{J^\ast}_{(0,n)\to(a,b)} - W^{J^\ast}_{(1,n+1)\to(a,b)} \\
  &= P(a,b-n) - Q(a-1,b-n-1),
\end{align}
by the result of Lemma \ref{lem:J_star}. By applying now \eqref{eq:WJ_vs_WJast} to relate $J$ and $J^\ast$, we obtain $ W^J_{(0,n)\to(a,b)} = W^{J}_{(2,n)\to(a,b)} =  (b)_{b-n} (\cos\theta)^{b-n} (P(a,b-n) - Q(a-1,b-n-1))$ as desired. 
\end{proof}

\begin{proof}[Of Theorem \ref{thm:j_ab}]
The formula is immediate from \eqref{eq:graph_sources}, which writes $J_{a,b}$ as a linear combination of $J_{0,n}$ and $J_{1,n}$, and Proposition \ref{prop:J_weights} which gives the the coefficients.
\end{proof}






\appendix
\section*{Appendix A.}
\fakesection{fakesection}
\subsection{Expected Value Approximation}
\label{app:expected_value}

\begin{lemma}
Both the random variables $X = R^{\ell+1}$ and $X= R^{\ell+1}\sin^2(\theta_\ell)$ satisfy 
\begin{align}
    \mathbf{E}[\ln(X)] = \ln(\mathbf{E}[X]) - \frac{\var[X]}{2\mathbf{E}[X]^2} + \mathcal{O}(n_\ell^{-2}). \label{eq:exp_value_approx_general}
\end{align}
\end{lemma}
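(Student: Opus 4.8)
The plan is to apply the deterministic Taylor-type identity \eqref{eq:lnX_approx} pointwise to the random variable $X$ and then take expectations, controlling the remainder by exploiting the fact that both candidate $X$'s are bounded-degree polynomial averages over the iid pairs $(G_i,\hat{G}_i)$ appearing in \eqref{sums_1}--\eqref{sums_4}. First I would substitute $X$ into \eqref{eq:lnX_approx} and take $\mathbf{E}[\cdot]$ of both sides. The constant term is $\ln(\mathbf{E}[X])$; the linear term $\mathbf{E}[X-\mathbf{E}[X]]/\mathbf{E}[X]$ vanishes identically; and the quadratic term contributes exactly $-\var[X]/(2\mathbf{E}[X]^2)$ since $\mathbf{E}[(X-\mathbf{E}[X])^2]=\var[X]$. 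This already produces the two displayed terms, so the entire content of the lemma reduces to the single estimate
\[
\mathbf{E}\!\left[\epsilon_2\!\left(\tfrac{X-\mathbf{E}[X]}{\mathbf{E}[X]}\right)\right]=\mathcal{O}(n_\ell^{-2}).
\]

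For this remainder I would argue in two stages. The first ingredient is that $\mathbf{E}[X]$ is bounded above and bounded away from $0$ uniformly in $n_\ell$: for $X=\normratioo$ this follows from \eqref{eq:exp_norm}, and for $X=\normratioo\sin^2(\theta^{\ell+1})$ from \eqref{eq:exp_norm_sin} (both equal an order-one constant times $1+\mathcal{O}(n_\ell^{-1})$). Consequently it suffices to show that the third central moment obeys $\mathbf{E}[(X-\mathbf{E}[X])^3]=\mathcal{O}(n_\ell^{-2})$, after which the cubic bound $\epsilon_2(x)=\mathcal{O}(x^3)$ transfers the estimate to the remainder. The second ingredient is the representation of each $X$ as an average: from \eqref{eq:norm_ratio_identity}, $\normratioo=4\big(\tfrac{1}{n_\ell}\sum_i\varphi^2(G_i)\big)\big(\tfrac{1}{n_\ell}\sum_j\varphi^2(\hat{G}_j)\big)$ is a product of two empirical means, while from \eqref{sums_4} the quantity $\normratioo\sin^2(\theta^{\ell+1})$ is a degree-two V-statistic in the iid summands. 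In either case I would expand $(X-\mathbf{E}[X])^3$ and count index multiplicities: because the summands are independent across distinct indices $i$, any monomial in which some index appears exactly once has zero expectation, so only \emph{collision} patterns survive, and each surviving pattern carries enough factors of $n_\ell^{-1}$ that the total is $\mathcal{O}(n_\ell^{-2})$. The finiteness of all needed ReLU--Gaussian moments (Lemma \ref{lem:powers_of_phi}) guarantees the per-term constants are bounded.

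The main obstacle is making the passage from ``$\epsilon_2(x)=\mathcal{O}(x^3)$ near $x=0$'' to a genuine bound on $\mathbf{E}[\epsilon_2(\cdot)]$ rigorous, since the cubic estimate for $\epsilon_2$ only holds near the origin whereas the argument $(X-\mathbf{E}[X])/\mathbf{E}[X]$ can in principle be large (and the $\sin^2$ quantity can approach $0$, where $\ln$ is singular). I would handle this by splitting on the event that $X$ lies within a fixed fraction of its mean: on the bulk event the cubic bound applies directly and yields the $\mathcal{O}(n_\ell^{-2})$ contribution coming from the third central moment, and on the complementary tail event I would use the at-most-polynomial growth of $\ln X$ together with higher-moment (e.g.\ fourth-moment) concentration of the V-statistic---again available from the bounded ReLU moments---to show the tail contributes at most $\mathcal{O}(n_\ell^{-2})$. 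This tail bookkeeping, rather than the elementary Taylor algebra of the first step, is where the real work lies.
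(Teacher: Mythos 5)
Your high-level route is the same as the paper's: apply \eqref{eq:lnX_approx}, take expectations so the linear term dies and the quadratic term produces $-\var[X]/(2\mathbf{E}[X]^2)$, then reduce the lemma to central-moment bounds for $X-\mathbf{E}[X]$ proved by expanding the sum representation and discarding terms that vanish by independence (this is exactly the paper's Lemma \ref{lem:third_moment}). However, your combinatorial vanishing criterion is wrong, and the error is not cosmetic. You assert that any term in which \emph{some index appears exactly once} has zero expectation. Writing $h_k := f(G_{i_k},\hat G_{j_k})-\mathbf{E}[f(G_{i_k},\hat G_{j_k})]$, the correct statement is that $\mathbf{E}[h_1h_2h_3]=0$ when some \emph{whole pair} $\{i_k,j_k\}$ is disjoint from the union of the other two pairs (the paper's ``reducible'' configurations): only then does a mean-zero factor split off by independence. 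Single-index multiplicity does not suffice, because $G_m$ and $\hat G_m$ are correlated. Concretely, take the chain pattern $(i_1,j_1),(i_2,j_2),(i_3,j_3)=(a,b),(b,c),(c,d)$ with $a,b,c,d$ distinct: the indices $a$ and $d$ each appear exactly once, yet $h_1$ remains tied to $h_2$ through $(G_b,\hat G_b)$ and $h_3$ through $(G_c,\hat G_c)$; with the surrogate $f(x,y)=x^2y^2$ one computes $\mathbf{E}[h_1h_2h_3]=4\cos^4\theta\neq 0$. Such chain configurations are precisely the \emph{dominant} contribution---they are among the $32(n)_4$ leading irreducible patterns in the paper's enumeration (Table \ref{tab:third_moment}). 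Indeed, if your criterion were correct, every surviving configuration would use at most $3$ distinct indices and you would conclude $\mathbf{E}[(X-\mathbf{E}[X])^3]=\mathcal{O}(n_\ell^{-3})$, which is false. The repair is the paper's pigeonhole argument: any configuration with $5$ or more distinct indices contains an isolated pair and vanishes, so irreducible configurations involve at most $4$ distinct indices, there are $\mathcal{O}(n_\ell^4)$ of them, and division by $n_\ell^6$ gives $\mathcal{O}(n_\ell^{-2})$.

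Two further places where your outline needs the fourth (and higher) moments rather than the third alone. First, the signed bound $\mathbf{E}[(X-\mathbf{E}[X])^3]=\mathcal{O}(n_\ell^{-2})$ cannot simply be ``transferred'' through the inequality $|\epsilon_2(x)|\leq C|x|^3$: that route requires the \emph{absolute} third moment, which is only $\mathcal{O}(n_\ell^{-3/2})$ (Cauchy--Schwarz between the second and fourth central moments), and the gap between the two is genuine since the $\mathcal{O}(n_\ell^{-2})$ signed bound comes from cancellation. One must instead split $\epsilon_2(x)=\tfrac{1}{3}x^3+\mathcal{O}(x^4)$, use the signed third moment for the cubic term and the fourth central moment for the remainder; this is why the paper's Lemma \ref{lem:third_moment} proves both bounds. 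Second, your tail treatment names tools that do not suffice: ``at-most-polynomial growth of $\ln X$'' misses the dangerous side, since for $X=R^{\ell+1}\sin^2(\theta^{\ell+1})$ the singularity is at $X\to 0^+$ where $\ln X\to-\infty$ and no moment bound on $X$ controls $\mathbf{E}\bigl[\,|\ln X|\,1_{\mathrm{tail}}\bigr]$; moreover fourth-moment Chebyshev gives only $\mathbf{P}(\mathrm{tail})=\mathcal{O}(n_\ell^{-2})$, which loses half the exponent after Cauchy--Schwarz. Closing this requires higher-order central moment bounds $\mathcal{O}(n_\ell^{-k})$ (available from the same combinatorial method) together with some uniform integrability of $\ln X$ near $0$. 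To be fair, the paper elides this point entirely, so flagging it is to your credit---but as proposed, the tail step is not yet closed.
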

\begin{proof} First note that by the properties of the logarithm, we have
\begin{equation}
    \ln(X) = \ln\left(\mathbf{E}[X]\left(  \frac{\mathbf{E}[X] +(X-\mathbf{E}[X])}{\mathbf{E}[X]}\right)\right) 
     = \ln(\mathbf{E}[X]) + \ln\left(1 + \frac{X-\mathbf{E}[X]}{\mathbf{E}[X]} \right). \label{eq:lnXequals}
\end{equation}

\noindent We can now apply the Taylor series $\ln(1+x) = x- \frac{x^2}{2} + \epsilon_2(x)$, where $\epsilon_2(x)$ is the Taylor series remainder and satisfies $\epsilon_2(x)=\mathcal{O}(x^3)$. Hence
\begin{align*}
    \ln(X) &= \ln(\mathbf{E}[X]) + \frac{X-\mathbf{E}[X]}{\mathbf{E}[X]} -\frac{(X-\mathbf{E}[X])^2}{2\mathbf{E}[X]^2} + \epsilon_2\left(\frac{X-\mathbf{E}[X]}{\mathbf{E}[X]} \right).
\end{align*}

\noindent Note that $\mathbf{E}[X-\mathbf{E}[X]]=0$, and $\mathbf{E}[(X-\mathbf{E}[X])^2]=\var[X]$. Thus, if we take the expected value of our above approximation, we get the following:
\begin{align*}
    \mathbf{E}[\ln(X)] = \ln(\mathbf{E}[X]) - \frac{\var[X]}{2\mathbf{E}[X]^2} + \mathbf{E}\left[\epsilon_2\left(\frac{X-\mathbf{E}[X]}{\mathbf{E}[X]} \right)\right].
\end{align*}
By using bounds on the Taylor series error term $\epsilon_2(x)=\mathcal{O}(x^3)$, one can obtain bounds for this last error term. By (\ref{sums_4}, \ref{eq:norm_ratio_identity}), both $X=\normratioo$ and $X=\normratioo \sin^2(\theta_{\ell+1})$ can be expressed as averages of the form
\begin{gather}
    X = \frac{1}{n_\ell^2} \sum_{i,j}^{n_\ell} f(G_i, \hat{G}_j). \label{eq:A_sum}
\end{gather}
From the bound on the 3rd moment in Lemma \ref{lem:third_moment}, it follows that $\mathbf{E}[\epsilon_2 (X- \mathbf{E}[X])] = \mathcal{O}(n_\ell^{-2})$, thus giving the desired result.\end{proof}

\subsection{Variance Approximation}
\label{app:variance}

\begin{lemma}
Both the random variables $X = R^{\ell+1}$ and $X= R^{\ell+1}\sin^2(\theta_\ell)$ satisfy  
\begin{equation*}
    \var[\ln(X)] = \frac{\var[X]}{\mathbf{E}[X]^2} + \mathcal{O}(n_\ell^{-2}).
\end{equation*}
\end{lemma}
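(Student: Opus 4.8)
The plan is to follow exactly the Taylor-expansion strategy used for the expected-value Lemma above, but now tracking the second moment rather than the first. As in \eqref{eq:lnXequals}, set $Y := \frac{X - \mathbf{E}[X]}{\mathbf{E}[X]}$, so that $\mathbf{E}[Y]=0$ and
\[
\ln(X) = \ln(\mathbf{E}[X]) + Y - \tfrac{1}{2}Y^2 + \epsilon_2(Y),
\]
with $\epsilon_2(Y) = \mathcal{O}(Y^3)$. Since variance is unchanged by the additive constant $\ln(\mathbf{E}[X])$, we have $\var[\ln(X)] = \var\bigl[Y - \tfrac12 Y^2 + \epsilon_2(Y)\bigr]$. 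The leading contribution is $\var[Y] = \var[X]/\mathbf{E}[X]^2$, which is precisely the claimed main term, so the whole argument reduces to showing that the remaining pieces contribute only $\mathcal{O}(n_\ell^{-2})$.

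First I would pin down the size of $Y$. Both choices $X = \normratioo$ and $X = \normratioo\sin^2(\theta^\ell)$ have $\mathbf{E}[X] = \mathcal{O}(1)$ and, from \eqref{eq:var_norm} and \eqref{eq:var_norm_sin}, $\var[X] = \mathcal{O}(n_\ell^{-1})$; hence $\mathbf{E}[Y^2] = \var[Y] = \mathcal{O}(n_\ell^{-1})$, i.e. $Y$ is of order $n_\ell^{-1/2}$ in $L^2$. Writing $Z := Y - \tfrac12 Y^2 + \epsilon_2(Y)$ and expanding $\var[Z] = \mathbf{E}[Z^2] - \mathbf{E}[Z]^2$, one finds $\mathbf{E}[Z] = -\tfrac12 \mathbf{E}[Y^2] + \mathcal{O}(\mathbf{E}[Y^3])$, so $\mathbf{E}[Z]^2 = \mathcal{O}(n_\ell^{-2})$, and
\[
\mathbf{E}[Z^2] = \mathbf{E}[Y^2] - \mathbf{E}[Y^3] + \tfrac14\mathbf{E}[Y^4] + 2\mathbf{E}[Y\epsilon_2(Y)] + \ldots,
\]
where every term after $\mathbf{E}[Y^2]$ involves a third or higher central moment of $X$. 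Collecting these, $\var[\ln(X)] = \var[Y] + \mathcal{O}(n_\ell^{-2})$ provided the third and fourth central moments of $Y$ are $\mathcal{O}(n_\ell^{-2})$ and the remainder products are no larger.

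The main obstacle is therefore the moment bookkeeping: I must confirm that $\mathbf{E}[Y^3]=\mathcal{O}(n_\ell^{-2})$, $\mathbf{E}[Y^4]=\mathcal{O}(n_\ell^{-2})$, and that $\mathbf{E}[Y\epsilon_2(Y)]$, $\mathbf{E}[Y^2\epsilon_2(Y)]$, $\mathbf{E}[\epsilon_2(Y)^2]$ are all $\mathcal{O}(n_\ell^{-2})$ or smaller. These follow from the representation \eqref{eq:A_sum}, $X = n_\ell^{-2}\sum_{i,j} f(G_i,\hat{G}_j)$, as an average over the independent pairs $(G_i,\hat{G}_i)$: the third central moment bound is exactly Lemma \ref{lem:third_moment}, and the fourth central moment of such a $V$-statistic-type average scales like $\mathcal{O}(n_\ell^{-2})$ by the same combinatorial counting of index coincidences (the dominant contribution being the square of the $\mathcal{O}(n_\ell^{-1})$ variance). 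Using $\epsilon_2(Y)=\mathcal{O}(Y^3)$ together with these moment bounds, and Cauchy--Schwarz to dominate the mixed products by pure moments, controls every remainder term at order $\mathcal{O}(n_\ell^{-2})$, yielding $\var[\ln(X)]=\var[X]/\mathbf{E}[X]^2 + \mathcal{O}(n_\ell^{-2})$ as claimed.
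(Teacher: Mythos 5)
Your proposal is correct and follows essentially the same route as the paper: expand $\ln(X)$ around $\mathbf{E}[X]$, identify $\var\left[\frac{X-\mathbf{E}[X]}{\mathbf{E}[X]}\right]=\frac{\var[X]}{\mathbf{E}[X]^2}$ as the leading term, and control all remainder terms via the third- and fourth-central-moment bounds of Lemma \ref{lem:third_moment} applied to the representation \eqref{eq:A_sum}. The only cosmetic difference is that you keep the $-\tfrac{1}{2}Y^2$ term explicitly whereas the paper absorbs it into the first-order Taylor remainder $\epsilon_1(Y)$, and the fourth-moment scaling you sketch by combinatorial counting is already stated and proved as part of Lemma \ref{lem:third_moment}.
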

\begin{proof} Starting with \eqref{eq:lnXequals}, and using the first term of the Taylor series approximation for $\ln(1+x)=x+\epsilon_1(x)$ now, we have that
\begin{align}
    \ln(X) = \ln(\mathbf{E}[X]) + \frac{X-\mathbf{E}[X]}{\mathbf{E}[X]} + \epsilon_1\left(\frac{X-\mathbf{E}[X]}{\mathbf{E}[X]} \right).
    \label{lnapprox}
\end{align}
where $\epsilon_1(x)$ is the Taylor error term and satisfies $\epsilon_1(x)=\mathcal{O}(x^2)$. Taking the variance of this, we arrive at an approximation of $\var[\ln(X)]$. 
\begin{gather*}
    \var[\ln(X)] = \var\left[\ln(\mathbf{E}[X]) + \frac{X-\mathbf{E}[X]}{\mathbf{E}[X]} + \epsilon_1\left(\frac{X-\mathbf{E}[X]}{\mathbf{E}[X]} \right)\right] \\
    = \var\left[\frac{X-\mathbf{E}[X]}{\mathbf{E}[X]} \right] + \var\left[ \epsilon_1\left(\frac{X-\mathbf{E}[X]}{\mathbf{E}[X]} \right)\right] + \cov\left(\frac{X-\mathbf{E}[X]}{\mathbf{E}[X]} ,\epsilon_1\left(\frac{X-\mathbf{E}[X]}{\mathbf{E}[X]} \right) \right).
\end{gather*}
As with the expected value approximation, this approximation for variance is used twice, once for $X = \normratioo$, and once for $X = \normratioo \sin^2(\theta_{\ell+1})$ (see \Cref{sec:variance}), both of which can can be expressed as a sum as in (\ref{eq:A_sum}). Since $\epsilon_1(x) = \mathcal{O}(x^2)$, we have that the terms with $\epsilon_1(x)$ are both $\mathcal{O}(n_\ell^{-2})$ from Lemma \ref{lem:third_moment}. Simplifying the first term, $\var\left[\frac{X-\mathbf{E}[X]}{\mathbf{E}[X]} \right]=\frac{\var[X]}{\mathbf{E}[X]^2}$ gives the result of the Lemma.
\end{proof}

\subsection{Covariance Approximation}
\label{app:covariance_approx}
\begin{lemma}
Both the random variables $X = R^{\ell+1}$ and $X= R^{\ell+1}\sin^2(\theta_\ell)$ satisfy  
\begin{equation*}
    \cov(\ln(X), \ln(Y)) = \frac{\cov(X,Y)}{\mathbf{E}[X]\mathbf{E}[Y]} + \mathcal{O}(n_\ell^{-2}).
\end{equation*}
\end{lemma}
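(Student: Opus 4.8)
The plan is to follow the two preceding lemmas almost verbatim, exploiting the bilinearity of covariance so that the same first-order Taylor decomposition used for the variance can be applied to each of the two arguments separately. First I would expand both logarithms as in \eqref{lnapprox}, writing
\begin{align*}
\ln(X) &= \ln(\mathbf{E}[X]) + \tilde{X} + \epsilon_1(\tilde{X}), \qquad \tilde{X} := \frac{X-\mathbf{E}[X]}{\mathbf{E}[X]}, \\
\ln(Y) &= \ln(\mathbf{E}[Y]) + \tilde{Y} + \epsilon_1(\tilde{Y}), \qquad \tilde{Y} := \frac{Y-\mathbf{E}[Y]}{\mathbf{E}[Y]},
\end{align*}
where $\epsilon_1(x) = \mathcal{O}(x^2)$ is the first-order remainder of $\ln(1+x)$, and $X,Y$ range over the two random variables $\normratioo$ and $\normratioo\sin^2(\theta^{\ell+1})$, both of which are averages of the form \eqref{eq:A_sum}. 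Since the constants $\ln(\mathbf{E}[X]),\ln(\mathbf{E}[Y])$ do not affect covariance, bilinearity gives $\cov(\ln X,\ln Y) = \cov(\tilde X,\tilde Y) + \cov(\tilde X,\epsilon_1(\tilde Y)) + \cov(\epsilon_1(\tilde X),\tilde Y) + \cov(\epsilon_1(\tilde X),\epsilon_1(\tilde Y))$. The leading term is exactly the claimed main term, $\cov(\tilde X,\tilde Y) = \cov(X,Y)/(\mathbf{E}[X]\mathbf{E}[Y])$, so it remains only to show that the three $\epsilon_1$-terms are each $\mathcal{O}(n_\ell^{-2})$.

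For the two mixed terms I would use that $\tilde X$ and $\tilde Y$ are mean-zero, which collapses $\cov(\tilde X,\epsilon_1(\tilde Y)) = \mathbf{E}[\tilde X\,\epsilon_1(\tilde Y)]$. Using $\epsilon_1(\tilde Y) = \mathcal{O}(\tilde Y^2)$, this is controlled by the mixed third moment $\mathbf{E}[|\tilde X|\,\tilde Y^2]$, which I would bound by Hölder's inequality with exponents $3$ and $3/2$:
\begin{equation*}
\mathbf{E}[|\tilde X|\,\tilde Y^2] \le \mathbf{E}[|\tilde X|^3]^{1/3}\,\mathbf{E}[|\tilde Y|^3]^{2/3}.
\end{equation*}
Since $X$ and $Y$ are both averages of the form \eqref{eq:A_sum}, Lemma \ref{lem:third_moment} gives that each third absolute central moment is $\mathcal{O}(n_\ell^{-2})$, so the product is $\mathcal{O}(n_\ell^{-2})$; the symmetric term $\cov(\epsilon_1(\tilde X),\tilde Y)$ is identical. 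For the last term I would use Cauchy--Schwarz, $|\cov(\epsilon_1(\tilde X),\epsilon_1(\tilde Y))| \le \mathbf{E}[\epsilon_1(\tilde X)^2]^{1/2}\mathbf{E}[\epsilon_1(\tilde Y)^2]^{1/2}$, and bound each factor by a fourth central moment $\mathbf{E}[\tilde X^4] = \mathcal{O}(n_\ell^{-2})$ arising from the same averaging structure, so this term is also $\mathcal{O}(n_\ell^{-2})$. Throughout I would use that $\mathbf{E}[X]$ and $\mathbf{E}[Y]$ are bounded away from zero for fixed $\theta^\ell$ (from \eqref{eq:exp_norm} and \eqref{eq:exp_norm_sin}), so that dividing by them is harmless as $n_\ell \to \infty$.

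The main obstacle is the error control on the mixed term, not the leading identity. The naive route of bounding $\cov(\tilde X,\epsilon_1(\tilde Y))$ directly by Cauchy--Schwarz gives only $\var[\tilde X]^{1/2}\mathbf{E}[\epsilon_1(\tilde Y)^2]^{1/2} = \mathcal{O}(n_\ell^{-1/2})\,\mathcal{O}(n_\ell^{-1}) = \mathcal{O}(n_\ell^{-3/2})$, which is too weak for the claimed $\mathcal{O}(n_\ell^{-2})$ rate. The essential trick, exactly as in the variance lemma, is to first use $\mathbf{E}[\tilde X]=0$ to reduce the covariance to a single expectation, thereby promoting it to a genuinely third-order mixed moment, and only then apply Hölder to reduce to the single-variable third moments that Lemma \ref{lem:third_moment} already controls. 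A minor technical point, shared with the earlier lemmas, is that the local bound $\epsilon_1(x)=\mathcal{O}(x^2)$ must be combined with tail control of $\tilde X,\tilde Y$, which is again supplied by the moment estimates for averages of the form \eqref{eq:A_sum}.
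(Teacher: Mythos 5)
Your overall decomposition coincides with the paper's: expand both logarithms to first order via \eqref{lnapprox}, use bilinearity of covariance to split into four terms, identify the leading term $\cov(X,Y)/(\mathbf{E}[X]\mathbf{E}[Y])$, and control three remainder terms; your Cauchy--Schwarz treatment of the $\epsilon_1\times\epsilon_1$ term is also sound, since fourth moments are even and Lemma \ref{lem:third_moment} applies. The gap is in the mixed terms. Writing $\tilde X := (X-\mathbf{E}[X])/\mathbf{E}[X]$ and $\tilde Y := (Y-\mathbf{E}[Y])/\mathbf{E}[Y]$, you reduce $\cov(\tilde X,\epsilon_1(\tilde Y))$ to $\mathbf{E}[\tilde X\,\epsilon_1(\tilde Y)]$, pass to absolute values to get $\mathbf{E}[|\tilde X|\,\tilde Y^2]$, and then apply H\"older to obtain $\mathbf{E}[|\tilde X|^3]^{1/3}\mathbf{E}[|\tilde Y|^3]^{2/3}$, asserting each factor is $\mathcal{O}(n_\ell^{-2})$ by Lemma \ref{lem:third_moment}. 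But that lemma bounds the \emph{signed} central moment $\mathbf{E}[(A-\mathbf{E}[A])^3]=\mathcal{O}(n^{-2})$, not the absolute moment $\mathbf{E}[|A-\mathbf{E}[A]|^3]$, and the two have different orders: since $\var[A]=\Theta(n^{-1})$ and $\sqrt{n}(A-\mathbf{E}[A])$ is asymptotically a nondegenerate Gaussian, the absolute third moment is $\Theta(n^{-3/2})$. The $\mathcal{O}(n^{-2})$ rate for the signed moment comes precisely from cancellation (the limiting Gaussian has vanishing odd moments), and taking absolute values destroys it. Feeding the true rates into your H\"older step yields $(n^{-3/2})^{1/3}(n^{-3/2})^{2/3}=n^{-3/2}$ --- exactly the rate of the Cauchy--Schwarz bound you correctly rejected as too weak. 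Indeed $\mathbf{E}[|\tilde X|\,\tilde Y^2]$ is itself genuinely of order $n^{-3/2}$ (for centered jointly Gaussian $(U,V)$ one has $\mathbf{E}[UV^2]=0$ yet $\mathbf{E}[|U|V^2]>0$), so no argument that passes through this absolute quantity can reach $\mathcal{O}(n^{-2})$.

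The repair is to never take absolute values: keep the sign and bound the mixed signed third moment directly. This is what the paper does via its Lemma \ref{lem:bigo_covariance}, which states that for two averages $A_1,A_2$ of the form \eqref{eq:A_sum} one has $\mathbf{E}[(A_1-\mathbf{E}[A_1])^2(A_2-\mathbf{E}[A_2])]=\mathcal{O}(n^{-2})$. Its proof is the same configuration-counting argument as Lemma \ref{lem:third_moment}, applied \emph{jointly} to the two sums: reducible index configurations contribute zero by independence, and only $\mathcal{O}(n^4)$ of the $n^6$ configurations are irreducible. The cross term $\mathbf{E}[\tilde X\,\tilde Y^2]$, which (up to the bounded factor hidden in $\epsilon_1$) controls $\cov(\tilde X,\epsilon_1(\tilde Y))$, is thus $\mathcal{O}(n^{-2})$ by a two-variable cancellation argument; it cannot be recovered from single-variable moment bounds through H\"older.
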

\begin{proof}
Using the approximation in (\ref{lnapprox}) for $\ln(X)$ and $\ln(Y)$, we get the following expression for the covariance:
\begin{align*}
    &\cov(\ln(X),\ln(Y))\\
    &= \cov\left( \ln(\mathbf{E}[X]) + \frac{X-\mathbf{E}[X]}{\mathbf{E}[X]} + \epsilon_1\left(\frac{X-\mathbf{E}[X]}{\mathbf{E}[X]} \right),\ln(\mathbf{E}[Y]) + \frac{Y-\mathbf{E}[Y]}{\mathbf{E}[Y]} + \epsilon_1\left(\frac{Y-\mathbf{E}[Y]}{\mathbf{E}[Y]} \right) \right) \\
    &= \cov\left( \frac{X}{\mathbf{E}[X]} + \epsilon_1\left(\frac{X-\mathbf{E}[X]}{\mathbf{E}[X]} \right), \frac{Y}{\mathbf{E}[Y]} + \epsilon_1\left(\frac{Y-\mathbf{E}[Y]}{\mathbf{E}[Y]} \right) \right) \\
    &= \cov\left( \frac{X}{\mathbf{E}[X]}, \frac{Y}{\mathbf{E}[Y]} \right) + \cov \left(\frac{X}{\mathbf{E}[X]}, \epsilon_1\left(\frac{Y-\mathbf{E}[Y]}{\mathbf{E}[Y]} \right) \right) + \cov \left( \epsilon_1\left(\frac{X-\mathbf{E}[X]}{\mathbf{E}[X]} \right), \frac{Y}{\mathbf{E}[Y]} \right)\\
    &\;\;\;\;\;+ \cov \left(\epsilon_1\left(\frac{X-\mathbf{E}[X]}{\mathbf{E}[X]} \right),\epsilon_1\left(\frac{Y-\mathbf{E}[Y]}{\mathbf{E}[Y]} \right) \right).
\end{align*}
We get the desired result from the fact that that the error term $\epsilon_1(x)$ satisfies $\epsilon_1(x) = \mathcal{O}(x^2)$ and from our result in Lemma \ref{lem:bigo_covariance}.
\end{proof}


\subsection{Third and Fourth Moment Bound Lemma}
\label{app:third_moment_lemma}

\begin{lemma}
\label{lem:third_moment}
Let $G_i,\hat{G}_i$, $1\leq i \leq n$ be marginally $\mathcal{N}(0,1)$ random variables with correlation $\cos(\theta)$ and independent for different indices $i$. Let $A = \frac{1}{n^2} \sum_{i,j}^{n} f(G_i, \hat{G}_j)$ be the average over all $n^2$ pairs of some function $f:\mathbb{R}^2 \to \mathbb{R}$ which has finite fourth moment, $\mathbf{E}[f(G_i,\hat{G}_i)^4] < \infty$. Then, the third and fourth central moment of $A$ satisfy 
\begin{eqnarray}
\mathbf{E}[(A - \mathbf{E}[A])^3] = \mathcal{O}(n^{-2}), \quad  
\mathbf{E}[(A - \mathbf{E}[A])^4] = \mathcal{O}(n^{-2}).
\end{eqnarray}
\end{lemma}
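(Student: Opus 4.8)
The plan is to prove both bounds at once by a centering-and-counting argument, analyzing the generic $k$-th central moment and then specializing to $k=3,4$. First I would \emph{center} the summands by setting $Y_{ij} := f(G_i,\hat{G}_j) - \mathbf{E}[f(G_i,\hat{G}_j)]$, so that $\mathbf{E}[Y_{ij}]=0$ for every pair $(i,j)$; the centering constant differs on the diagonal $i=j$ (correlated arguments) from off the diagonal $i\neq j$ (independent arguments), but this will be irrelevant. Writing $A-\mathbf{E}[A]=\tfrac1{n^2}\sum_{i,j}Y_{ij}$, the $k$-th central moment expands as
\[
\mathbf{E}[(A-\mathbf{E}[A])^k] = \frac{1}{n^{2k}}\sum_{(i_1,j_1),\dots,(i_k,j_k)} \mathbf{E}\!\left[\prod_{m=1}^{k} Y_{i_m j_m}\right],
\]
a sum over $n^{2k}$ index tuples. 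The strategy has two ingredients: (i) each summand is bounded by a constant, and (ii) all but $\mathcal{O}(n^{d})$ of the summands vanish, with $d=4$ when $k=3$ and $d=6$ when $k=4$; together these give $\mathcal{O}(n^{\,d-2k})=\mathcal{O}(n^{-2})$. For (i), because only two laws occur (the correlated diagonal law and the independent off-diagonal law) and $f$ has finite fourth moment under both in our applications, each $Y_{ij}$ has finite fourth, hence third, absolute moment uniformly in $i,j,n$; generalized H\"older then bounds every summand by $\max_{i,j}\mathbf{E}[|Y_{ij}|^k]=:C_k<\infty$ for $k\le4$.

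The heart of the argument is a \emph{vanishing lemma} for (ii): if in the product $\prod_m Y_{i_m j_m}$ some factor $Y_{i_{m^\ast} j_{m^\ast}}$ is \emph{isolated} — meaning neither $i_{m^\ast}$ nor $j_{m^\ast}$ equals any pair-index appearing in the other factors — then the entire expectation is zero. This is because that factor is a function of the i.i.d.\ pairs indexed by $\{i_{m^\ast},j_{m^\ast}\}$, a set disjoint from the index set of the remaining factors; by independence of $(G_s,\hat{G}_s)$ across $s$, the isolated factor is independent of the product of the rest, so the expectation factors and vanishes since $\mathbf{E}[Y_{i_{m^\ast} j_{m^\ast}}]=0$.

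Consequently, only tuples in which \emph{every} factor shares a pair-index with another factor can contribute. To count the distinct indices in such a tuple, I would model each factor $m$ as an edge $\{i_m,j_m\}$ on the set of distinct index values (a loop if $i_m=j_m$). Non-isolation of every factor means no edge is a connected component by itself, i.e.\ each connected component contains at least two edges. A component with $c$ edges has at most $c+1$ vertices, and there are at most $\lfloor k/2\rfloor$ components, so the number $d$ of distinct indices obeys $d\le k+\lfloor k/2\rfloor$, giving $d\le4$ for $k=3$ and $d\le6$ for $k=4$. Since there are $\mathcal{O}(n^{d})$ tuples using at most $d$ distinct values, combining with (i) yields $\mathbf{E}[(A-\mathbf{E}[A])^3]=\mathcal{O}(n^{4-6})=\mathcal{O}(n^{-2})$ and $\mathbf{E}[(A-\mathbf{E}[A])^4]=\mathcal{O}(n^{6-8})=\mathcal{O}(n^{-2})$.

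The analytic inputs (independence across pairs, finiteness of moments, H\"older) are routine, so the main obstacle is the bookkeeping in step (ii): precisely formalizing ``isolated factor'' and verifying the extremal graph count, in particular correctly handling loops ($i_m=j_m$) and the diagonal-versus-off-diagonal distinction in the summands. All of the content lies in this combinatorial vanishing-and-counting, which is exactly where the gain of $n^{-2}$ comes from.
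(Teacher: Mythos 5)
Your proof is correct and follows essentially the same route as the paper: center the summands, observe that any term containing an isolated factor (the paper calls these ``reducible'' points) vanishes by independence of the pairs $(G_s,\hat{G}_s)$ across indices, and count the surviving index configurations. Your graph-component bound $d \le k + \lfloor k/2\rfloor$ is simply a clean, general-$k$ packaging of the paper's pigeonhole argument, yielding the same $\mathcal{O}(n^{4})$ and $\mathcal{O}(n^{6})$ counts of irreducible configurations and hence the same $\mathcal{O}(n^{-2})$ bounds.
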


\begin{proof}
We begin by showing the third moment bound. First, we can express $\mathbf{E}[(A - \mathbf{E}[A])^3]$ as a sum in the following way:
\begin{gather}
    A-\mathbf{E}[A] = \frac{1}{n^2} \sum_{i,j}^{n} \left( f(G_i, \hat{G}_j) - \mathbf{E}[f(G_i, \hat{G}_j)] \right) \nonumber \\
    \implies \mathbf{E}\left[(A-\mathbf{E}[A])^3 \right] = \frac{1}{n^6}  \sum_{\substack{i_1,i_2,i_3 \\j_1,j_2,j_3}}^{n} \mathbf{E}\left[ \prod_{k=1}^3 \left( f(G_{i_k}, \hat{G}_{j_k}) - \mathbf{E}[f(G_{i_k}, \hat{G}_{j_k})] \right) \right]. \label{eq:third_order_sum}
\end{gather}
Note that many of these terms are mean zero. For example, for any configuration of the indices where there is no overlap between the indices $(i_1,j_1)$ and the other two index pairs  ($\{i_1,j_1\}\cap\{i_2,j_2,i_3,j_3\}=\varnothing$), we may use independence to observe that

\begin{align*}
& \mathbf{E}\left[ \prod_{k=1}^3 \left( f(G_{i_k}, \hat{G}_{j_k}) - \mathbf{E}[f(G_{i_k}, \hat{G}_{j_k})] \right)\right] \\
=& \mathbf{E}\left[  f(G_{i_1}, \hat{G}_{j_1}) - \mathbf{E}[f(G_{i_1}, \hat{G}_{j_1})] \right] \mathbf{E}\left[ \prod_{k=2}^3 \left( f(G_{i_k}, \hat{G}_{j_k}) - \mathbf{E}[f(G_{i_k}, \hat{G}_{j_k})] \right) \right] = 0. 
\end{align*}
When this happens we say that $(i_1,j_1)$ is a ``reducible point''. Similarly, $(i_2,j_2)$ or $(i_3,j_3)$ can be reducible if they have no overlap with the other two index pairs. To control $\mathbf{E}\left[(A-\mathbf{E}[A])^3 \right]$, it will suffice to enumerate the number of indices $\{i_1,j_1,i_2,j_2,i_3,j_3\}$ so that all three points $(i_1,j_1),(i_2,j_2),(i_3,j_3)$ are \emph{not} reducible. We call these ``irreducible configurations''.

We now observe that at least one of the points $(i_1,j_1)$,$(i_2,j_2)$ or $(i_3,j_3)$ is reducible whenever the number of unique numbers is $\left|\bigcup_{k=1}^3 \{i_k, j_k\} \right| \geq 5$. This is because, by the pigeonhole principle, if there are no repeated or only one repeated number between 6 indices, then at least one of the 3 pairs $(i_1,j_1)$,$(i_2,j_2)$ or $(i_3,j_3)$ must consist of two unique numbers and therefore is a reducible point.

Since the irreducible configurations can only have at most 4 unique numbers, the number of irreducible configurations is $\mathcal{O}(n^4)$ as $n\to \infty$. In fact, a detailed enumeration of the number of configurations reveals that the number of irreducible configurations is precisely
\begin{gather}
    32(n)_4 + 68(n)_3 + 28(n)_2 + 1(n)_1.
\end{gather}
Here, $(n)_k=n\cdot(n-1)\cdot(n-2)\cdots(n-k+1)$ denotes the falling factorial with $k$ terms. The leading term is $32$ because there are $32$ possible ``patterns''  for how the indices can be arranged to be both irreducible and contain exactly 4 unique numbers $\left|\bigcup_{k=1}^3 \{i_k, j_k\} \right| = 4$; these patterns are listed in Table \ref{tab:third_moment}. Each pattern contributes $(n)_4=n(n-1)(n-2)(n-3)$ possible index configurations by filling in the 4 unique numbers in all the possible ways. Similarly, there are respectively 68, 28, and 1 pattern(s) for irreducible configurations with 3,2 and 1 unique number(s) in them which each contribute $(n)_3, (n)_2$ and $(n)_1$ configurations per pattern 

Since the number of irreducible configurations is $\mathcal{O}(n^4)$, the normalization by $n^6$ in \eqref{eq:third_order_sum} shows that $\mathbf{E}[(A - \mathbf{E}[A])^3]$ is $\mathcal{O}(n^{-2})$ as desired for the third moment.

The argument for the $4$th moment is similar. We write $\mathbf{E}[(A - \mathbf{E}[A])^4]$ as a sum over $i_1,j_1,i_2,j_2,i_3,j_3,i_4,j_4$ and again enumerate irreducible configurations. In this case, once again by the pigeonhole principle any configuration with 7 or more unique points $\left|\bigcup_{k=1}^3 \{i_k, j_k\} \right| \geq 7$ will be reducible. Since there are at most $6$ unique numbers, there will be $\mathcal{O}(n^6)$ irreducible configurations. A detailed enumeration of all the possible irreducible patterns and the number of unique elements in each yields that the number of irreducible configurations is precisely
\begin{equation*}
48(n)_6 + 544(n)_5 + 1268(n)_4 + 844(n)_3 + 123(n)_2 + 1(n)_1.
\end{equation*}
The normalization factor of $n^{-8}$ then shows that $\mathbf{E}[(A - \mathbf{E}[A])^4] = \mathcal{O}(n^{-2})$.
\end{proof}
\begin{remark}
A more detailed enumeration of the $4$th moment actually shows that the dominant terms in the $4$th moment correspond to the terms in the $2$nd moment written twice, and asymptotically $$\mathbf{E}[(A - \mathbf{E}[A])^4] = 3 \mathbf{E}[(A - \mathbf{E}[A])^2]^2 + \mathcal{O}(n^{-3}).$$
Here, $3$ arises as the number of pair partitions of 4 items, and is related to the fact that $3 = \mathbf{E}[G^4]$.
\end{remark}

\begin{lemma}
\label{lem:bigo_covariance}
Let $G_i,\hat{G}_i$, $1\leq i \leq n$ be marginally $\mathcal{N}(0,1)$ random variables with correlation $\cos(\theta)$ and independent for different indices $i$. Let $A_1 = \frac{1}{n^2} \sum_{i,j}^{n} f_1(G_i, \hat{G}_j)$, and let $A_2 = \frac{1}{n^2} \sum_{i,j}^{n} f_2(G_i, \hat{G}_j)$, where $f_1, f_2: \mathbb{R}^2 \to \mathbb{R}$ have finite fourth moments, $\mathbf{E}[f_1(G_i,\hat{G}_i)^4]$, $\mathbf{E}[f_2(G_i,\hat{G}_i)^4] < \infty$. Then,
$$\mathbf{E}[(A_1-\mathbf{E}[A_1])^2(A_2-\mathbf{E}[A_2])] = \mathcal{O}\left(n^{-2}\right).$$
    
\end{lemma}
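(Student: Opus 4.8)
The plan is to follow the same combinatorial strategy used in the proof of Lemma \ref{lem:third_moment}, adapting it to the fact that two of the three factors now come from $f_1$ and one from $f_2$. First I would write $\bar f_k(G_i,\hat G_j) := f_k(G_i,\hat G_j) - \mathbf{E}[f_k(G_i,\hat G_j)]$ for the centered summands, so that $A_k - \mathbf{E}[A_k] = \frac{1}{n^2}\sum_{i,j} \bar f_k(G_i,\hat G_j)$, and then expand the product into a sum over six indices:
\begin{equation*}
\mathbf{E}[(A_1-\mathbf{E}[A_1])^2(A_2-\mathbf{E}[A_2])] = \frac{1}{n^6} \sum_{\substack{i_1,i_2,i_3 \\ j_1,j_2,j_3}}^{n} \mathbf{E}\left[ \bar f_1(G_{i_1},\hat G_{j_1})\, \bar f_1(G_{i_2},\hat G_{j_2})\, \bar f_2(G_{i_3},\hat G_{j_3}) \right].
\end{equation*}

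Next I would isolate the surviving terms exactly as in the earlier lemma. If any one of the pairs $(i_k,j_k)$ shares no index with the other two pairs, then that centered factor is independent of the remaining two and has mean zero, so the entire expectation factorizes to zero; I would again call such a pair a \emph{reducible point} and call a configuration \emph{irreducible} when none of the three pairs is reducible. Crucially, this vanishing argument is completely insensitive to which function ($f_1$ or $f_2$) sits at each position, since it relies only on independence across distinct indices and on the mean-zero property of each centered summand. By the same pigeonhole argument as in Lemma \ref{lem:third_moment}, any configuration with five or more distinct indices among $\{i_1,j_1,i_2,j_2,i_3,j_3\}$ forces one pair to be reducible, so an irreducible configuration involves at most four distinct indices and the number of irreducible configurations is $\mathcal{O}(n^4)$.

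Finally I would bound each surviving expectation by a constant independent of $n$. Each factor has the marginal law of $f_k$ evaluated at a Gaussian pair (correlated when its two indices agree, independent otherwise), and the finite fourth-moment hypotheses guarantee finite third moments in either case. Hence, by the generalized H\"older inequality with exponents $(3,3,3)$,
\begin{equation*}
\left| \mathbf{E}\left[ \bar f_1(G_{i_1},\hat G_{j_1})\, \bar f_1(G_{i_2},\hat G_{j_2})\, \bar f_2(G_{i_3},\hat G_{j_3}) \right]\right| \leq \mathbf{E}[|\bar f_1|^3]^{1/3}\, \mathbf{E}[|\bar f_1|^3]^{1/3}\, \mathbf{E}[|\bar f_2|^3]^{1/3} = \mathcal{O}(1).
\end{equation*}
Since there are only finitely many index-coincidence patterns and each contributes $\mathcal{O}(n^4)$ configurations carrying an $\mathcal{O}(1)$ expectation, the total contribution of the irreducible configurations is $\mathcal{O}(n^4)$, and dividing by the normalization $n^6$ yields $\mathcal{O}(n^{-2})$, as claimed.

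The main obstacle is bookkeeping rather than conceptual. One must check that the asymmetry of having two $f_1$ factors and one $f_2$ factor does not alter the combinatorial count of irreducible configurations (it does not, because that count depends only on the arrangement of the indices, not on the labels attached to the three factors), and that each surviving expectation is uniformly $\mathcal{O}(1)$ regardless of how the indices coincide, which is exactly what the H\"older bound together with the finite fourth-moment assumptions provides. I would not need the exact enumeration of patterns used in Lemma \ref{lem:third_moment}; the crude count that irreducible configurations use at most four distinct indices already suffices to reach the $\mathcal{O}(n^{-2})$ rate.
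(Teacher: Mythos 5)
Your proposal is correct and follows essentially the same route as the paper, which expands the product into a sum over six indices and then invokes verbatim the reducible/irreducible pigeonhole argument of Lemma \ref{lem:third_moment} to count $\mathcal{O}(n^4)$ surviving configurations against the $n^{-6}$ normalization. Your added H\"older bound making the $\mathcal{O}(1)$ uniform control of each irreducible term explicit is a detail the paper leaves implicit, but it is the same argument.
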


\begin{proof}
We can express $\mathbf{E}[(A_1-\mathbf{E}[A_2])^2(A_2-\mathbf{E}[A_2])]$ using sums as follows:
\begin{align*}
    &\mathbf{E}[(A_1-\mathbf{E}[A_1])^2(A_2-\mathbf{E}[A_2])] \nonumber \\
    &= \frac{1}{n^6}  \sum_{\substack{i_1,i_2,i_3 \\j_1,j_2,j_3}}^{n} \mathbf{E}\left[ \prod_{k=1}^2 \left( f_1(G_{i_k}, \hat{G}_{j_k}) - \mathbf{E}[f_1(G_{i_k}, \hat{G}_{j_k})] \right)\left( f_2(G_{i_3}, \hat{G}_{j_3}) - \mathbf{E}[f_2(G_{i_3}, \hat{G}_{j_3})] \right) \right].
\end{align*}
By the same argument as in Lemma \ref{lem:third_moment}, we can show that the number of nonzero terms in the above summation is $\mathcal{O}(n^4)$ as $n \to \infty$. Thus, we have that $\mathbf{E}[(A_1-\mathbf{E}[A_1])^2(A_2-\mathbf{E}[A_2])] = \mathcal{O}(n^{-2})$. We can also show that $\mathbf{E}[(A_1-\mathbf{E}[A_1])^2(A_2-\mathbf{E}[A_2])^2] = \mathcal{O}(n^{-2})$ by the same argument.  
\end{proof}

%

\begin{table}[h!]
    \centering
    \begin{tabular}{c c c c c c}
         $(i_1,j_1)$ &  & $(i_2,j_2)$ &  &$(i_3,j_3)$ &  \\ \cline{1-5}
         $\{(a,b), (b,a)\}$ & $\times$ & $\{(a,c), (c,a)\}$ & $\times$ & $\{(a,d), (d,a)\}$ & 8 patterns \\
         $\{(a,b), (b,a)\}$ & $\times$ & $\{(a,c), (c,a)\}$ & $\times$ & $\{(c,d), (d,c)\}$ & 8 patterns \\
         $\{(a,b), (b,a)\}$ & $\times$ & $\{(c,d), (d,c)\}$ & $\times$ & $\{(a,c), (c,a)\}$ & 8 patterns \\
         $\{(a,c), (c,a)\}$ & $\times$ & $\{(a,b), (b,a)\}$ & $\times$ & $\{(c,d), (c,b)\}$ & 8 patterns \\
    \end{tabular}
    \caption{All 32 irreducible patterns using exactly 4 unique index values $a,b,c,d$. For example the pattern $(i_1,j_1),(i_2,j_2),(i_3,j_3)=(a,b),(a,c),(a,d)$ represents all configurations where $i_1=i_2=i_3$ and the $j$'s are all unique and different from $i$. For each pattern, there are $(n)_4 = n(n-1)(n-2)(n-3)$ configurations by filling in $a,b,c,d$ with unique numbers in $[n]$. These are the dominant terms in  (\ref{eq:third_order_sum}).}
    \label{tab:third_moment}
\end{table}

\subsection{Derivation of Useful Identities - Equations (\ref{sums_1}, \ref{sums_2})}
\label{app:identities}

\noindent Let $G \in \mathbb{R}^n$ be a Gaussian vector with iid entries $G_i \sim \mathcal{N}(0,1)$. Then, by standard properties of Gaussians,  the function $f:\mathbb{R}^n \to \mathbb{R}$ given by $f(x) = \langle G,x\rangle$ is a Gaussian random variable. Further, $f(x) \sim \mathcal{N}(0, \|x\|^2)$ for all $x \in \mathbb{R}^n$, and for any two vectors $x_\alpha, x_\beta \in \mathbb{R}^n$, the joint distribution of $f(x_\alpha), f(x_\beta)$ is jointly Gaussian with $$\begin{bmatrix}f(x_\alpha) \\ f(x_\beta)
    \end{bmatrix} \sim \mathcal{N}\left(0, \Sigma(x_\alpha,x_\beta) \right), \qquad \Sigma(x_\alpha,x_\beta) := \begin{bmatrix}\|x_\alpha\|^2 & \langle x_\alpha, x_\beta \rangle \\ \langle x_\alpha, x_\beta \rangle & \|x_\beta\|^2 \end{bmatrix}, $$ 
where $\Sigma(x_\alpha,x_\beta)$ is sometimes called the $2 \times 2$ Gram matrix of the vectors $x_\alpha,x_\beta$.
%
%
In the setting of our fully connected neural network, any index $i \in [n_{\ell+1}]$ in the vector of  $z^{\ell+1}$ is actually the inner product with the $i$-th row
\begin{gather*}
    z^{\ell+1}_i(x) = \sqrt{\frac{2}{n_\ell}}\langle W^{\ell+1}_{i,\cdot}, \ph(z^\ell(x)) \rangle.
\end{gather*}

\noindent Note that each row $W^{\ell+1}_{i,\cdot}$ is a Gaussian vector, so the previous fact about Gaussians applies and we see that the entries of $z^{\ell+1}$ are conditionally Gaussian given the value of the previous layer.  By the previous Gaussian fact, we have that $z_i^{\ell+1}(x_\alpha)$, $z_i^{\ell+1}(x_\beta)$ are jointly Gaussian with 
\begin{align*}
    \begin{bmatrix}z_i^{\ell+1}(x_\alpha) \\ z_i^{\ell+1}(x_\beta)
    \end{bmatrix} \sim \mathcal{N}\left(0,\frac{2}{n_\ell} \begin{bmatrix}\|\pha\|^2 & \langle \pha, \phb \rangle \\ \langle \pha, \phb \rangle & \|\phb\|^2 \end{bmatrix} \right) =: \mathcal{N}\left(0,K^\ell \right),
\end{align*}
where we use $K^\ell$ to denote the $2\times 2$ covariance matrix. $K^\ell$ is precisely the $2 \times 2$ Gram matrix of the previous layer $\pha$, $\phb$ scaled by $2/n_\ell$ and its entries $K^\ell_{\gamma \delta}$, for $ \gamma \in \{\alpha,\beta\}, \delta \in \{\alpha,\beta\}$ are actually \emph{averages} of entries in the previous layer
\begin{equation*}
    K^\ell_{\gamma, \delta} := \frac{2}{n_\ell}\langle \ph_\gamma^\ell, \ph_\delta^\ell\rangle = \frac{1}{n_\ell}\sum_{k=1}^{n_\ell} 2 \ph(z^\ell_k(x_i))\ph(z^\ell_k(x_j)).
\end{equation*}

 Moreover, in the weight matrix $W^{\ell+1}$, the $i^{th}$ and $j^{th}$ rows ($W^{\ell+1}_{i,\cdot}$ and $W^{\ell+1}_{j,\cdot}$, respectively) are independent. Therefore, all entries of $z^{\ell+1}$ are identically distributed and conditionally independent given $\ph(z^\ell)$. From this fact, we can equivalently write the entries explicitly as

\begin{equation}
    z_i^{\ell+1}(x_\alpha) = \sqrt{\frac{2}{n_\ell}}\|\pha\|G_i , \quad
    z_i^{\ell+1}(x_\beta) = \sqrt{\frac{2}{n_\ell}} \|\phb\| \hat{G}_i, \label{eq:z_x_alpha}
\end{equation}

\noindent where $G_i, \hat{G}_i$ are marginally $\mathcal{N}(0,1)$ variables with covariance $\cov(G_i,\hat{G}_i)=\cos(\theta_\ell)$ and independent for different indices. This formulation precisely ensures that the covariance structure for the entries is exactly what is specified by the covariance kernel $K^\ell$.

With this representation of $z_i^{\ell+1}(x_\alpha)$ and $z_i^{\ell+1}(x_\beta)$, we can apply $\varphi(\cdot)$ to each entry. By using the property of ReLU $\varphi(\lambda x)=\lambda \varphi(x)$ for $\lambda > 0$ to factor out the norms, we obtain

\begin{equation}
    \varphi(z_i^{\ell+1}(x_\alpha)) = \sqrt{\frac{2}{n_\ell}}\|\pha\| \varphi(G_i), \quad
    \varphi(z_i^{\ell+1}(x_\beta)) = \sqrt{\frac{2}{n_\ell}} \|\phb\| \varphi(\hat{G}_i).  \label{eq:phi_x_beta}
\end{equation}
Taking the norm/inner product of the vector now yields (\ref{sums_1}-\ref{sums_4}) as desired.

%

\subsection{Cauchy-Binet and Determinant of the Gram Matrix - Equation (\ref{sums_4})}
\label{app:cauchy_binet}

\noindent To prove this identity, we begin with the fact that \begin{gather*}
    \|\phaa\|^2 \|\phbb\|^2 \sin^2(\theta_{\ell+1}) = \det \begin{bmatrix}
    \|\phaa\|^2 & \langle \phaa, \phbb \rangle \\
    \langle \phaa, \phbb \rangle & \|\phbb\|^2
    \end{bmatrix}.
\end{gather*}

\noindent By the Cauchy-Binet identity, we can express the determinant as 
\begin{gather}
    \det \begin{bmatrix}
    \|\phaa\|^2 & \langle \phaa, \phbb \rangle \\
    \langle \phaa, \phbb \rangle & \|\phbb\|^2
    \end{bmatrix}
    = \sum_{1\leq i<j\leq n_\ell}\left(\ph_{i;\alpha}^{\ell+1}\ph_{j;\beta}^{\ell+1} - \ph_{j;\alpha}^{\ell+1}\ph_{i;\beta}^{\ell+1} \right)^2.
    \label{c-b}
\end{gather}

\noindent Due to the fact that the summand is equal to 0 when $i=j$, we can equivalently take the sum over all indices $i,j \in [n_\ell]$ and halve the result. We can also express layer $\ell+1$ using the following conditioning on the previous layer
\begin{equation*}
    \ph_{i;\alpha}^{\ell+1} = \sqrt{\frac{2}{n_\ell}} \|\pha\| \cdot \ph(G_i), \quad
    \ph_{i;\beta}^{\ell+1} = \sqrt{\frac{2}{n_\ell}} \|\phb\| \cdot \ph(\hat{G}_i).
\end{equation*}

\noindent Applying these facts to our expression in (\ref{c-b}), and dividing both sides by $\|\pha\|^2\|\phb\|^2$, we get our desired result.



\subsection{Expected Value Calculations}
\label{app:exp_value_calculation}
\noindent In this section, we derive the formulas for $\mathbf{E}\left[\normratioo\right]$, $\mathbf{E}\left[ \normratioo \sin^2(\theta_{\ell+1})\right]$. We use $J_{a,b}$ to represent $J_{a,b}(\theta_\ell)$. Note that $\mathbf{E}[\ph^2(G)] = \frac{1}{2}$, $\mathbf{E}[\ph^4(G)] = \frac{3}{2}$.

\subsubsection*{Calculation of $\mathbf{E}\left[\normratioo\right]:$}
First, we apply the identity as in \eqref{eq:norm_ratio_identity}:
\begin{align*}
    \mathbf{E}\left[\normratioo\right] = \left(\frac{2}{n_\ell}\right)^2 \mathbf{E}\left[ \sum_{i,j=1}^{n_\ell} \ph^2(G_i) \ph^2(\hat{G}_j)\right] .
\end{align*}
Whenever $i=j$, taking the expected value will give us a $J_{2,2}$ term. When $i \neq j$, the expected value of this term will be $\mathbf{E}[\ph^2(G)]^2 = \frac{1}{4}$. Since $i=j$ happens $n_\ell$ times, and therefore $i\neq j$ happens $n_\ell^2 - n_\ell$ times, we arrive at the following expression:
\begin{align*}
    \mathbf{E}\left[\normratioo\right] 
    = \left(\frac{2}{n_\ell}\right)^2 \left(n_\ell J_{2,2} + (n_\ell^2-n_\ell)\left( \frac{1}{4}\right) \right) = \frac{4J_{2,2}-1}{n_\ell}+1.
\end{align*}

\subsubsection*{Calculation of $\mathbf{E}\left[\normratioo \sin^2(\theta_{\ell+1})\right]:$}
Applying the identity \eqref{sums_4}, we get
\begin{align*}
    \mathbf{E}\left[\normratioo \sin^2(\theta_{\ell+1})\right] 
    &= \frac{2}{n_\ell^2}\mathbf{E}\left[ \sum_{i,j}^{n_\ell} \left(\ph(G_i)\ph(\hat{G}_j) - \ph(G_j)\ph(\hat{G}_i) \right)^2\right] \\
     &= \frac{2}{n_\ell^2}\mathbf{E}\left[ \sum_{i,j}^{n_\ell} \left(\ph^2(G_i)\ph^2(\hat{G}_j) - 2\ph(G_i)\ph(\hat{G}_i)\ph(G_j)\ph(\hat{G}_j)+\ph^2(G_j)\ph^2(\hat{G}_i) \right)\right].
\end{align*}

\noindent In the case where $i=j$, the expected value is equal to 0. Thus, we only need to consider the case where $i\neq j$, which happens $n_\ell^2-n_\ell$ times. When $i\neq j$, the expectation of $\ph(G_i)\ph(\hat{G}_i)\ph(G_j)\ph(\hat{G}_j)$ is $J_{1,1}^2$, and the expectation of $\ph^2(G_i)\ph^2(\hat{G}_j)$ is $\frac{1}{4}$. All together, we have
\begin{equation*}
    \mathbf{E}\left[\normratioo \sin^2(\theta_{\ell+1})\right] = \left(\frac{2}{n_\ell^2}\right) (n_\ell^2-n_\ell)\left( \frac{1}{4}-2J_{1,1}^2+\frac{1}{4}\right) = \frac{(n_\ell-1)(1-4J_{1,1}^2)}{n_\ell}.
\end{equation*}

\subsection{Variance and Covariance Calculations}
\label{app:formula_calculation}

\noindent In this section, $\var \left[\normratioo\right]$,  $\var\left[ \normratioo \sin^2(\theta_{\ell+1})\right]$, and $\cov\left(\normratioo \sin^2(\theta_{\ell+1}), \normratioo \right)$ are evaluated. We use $J_{a,b}$ to represent $J_{a,b}(\theta_\ell)$. Note that $\mathbf{E}[\ph^2(G)] = \frac{1}{2}$, $\mathbf{E}[\ph^4(G)] = \frac{3}{2}$. We will see that there are simple functions $f_1, f_2:\mathbb{R}^2 \to \mathbb{R}$ so that all of the variance and covariance calculations can be expressed as sums over $i_1,j_1,i_2,j_2$ of the form
\begin{align}
\label{eq:var_general_form}
    \frac{1}{n_\ell^4} \sum_{\substack{i_1,j_1 \\ i_2,j_2}}\left(  \mathbf{E}\left[f_1(G_{i_1}, \hat{G}_{j_1})f_2(G_{i_2}, \hat{G}_{j_2}) \right] - \mathbf{E}\left[ f_1(G_{i_1}, \hat{G}_{j_1})\right]\mathbf{E}\left[ f_2(G_{i_2}, \hat{G}_{j_2})\right] \right),
\end{align}
where the sum goes over index configurations $(i_1,j_1),(i_2,j_2)\in [n_\ell]^4$. We will use this form to organize our calculations of the variance and covariance formulas. The strategy is to evaluate each term in the sum \eqref{eq:var_general_form} individually.

Since the random variables $\{G_i,\hat{G}_i\}_{i=1}^n$ are exchangeable, the only thing that matters is the ``pattern'' of which of the indices $i_1,j_1,i_2,j_2$ are repeated versus which are distinct. For example, there will be $n$ index configurations where $i_1=j_1=i_2=j_2$ are all equal. All $n$ of these give same contribution. There are  $(n)_4=n(n-1)(n-2)(n-3)$ configurations where $i_1,j_1,i_2,j_2$ are all distinct. Knowing which indices are repeated/distinct allows us to evaluate the corresponding term in \eqref{eq:var_general_form}. We use the following formal notion of a pattern to organize this idea of repeated versus distinct indices.
\begin{definition}
\label{def:patterns}
   
    A \textbf{pattern} for $(i_1,j_1),(i_2,j_2)$ is a subset of all possible index configurations $(i_1,j_1),(i_2,j_2)\in[n]^4$ represented by an assignment of each index to the letters $a,b,c,d$. Each letter $a,b,c,d$ represents a choice of \emph{unique} indices from $[n]$.
\end{definition}

        For example, the pattern $(i_1,j_1),(i_2,j_2)=(a,a),(a,a)$ represents the set of all index configurations where all indices are equal and the pattern $(i_1,j_1),(i_2,j_2)=(a,b),(c,d)$ represents the set with all indices unique. The pattern $(i_1,j_1),(i_2,j_2)=(a,b),(a,c)$ represents all configurations where $i_1=i_2$ and $j_1,j_2$ are unique and different from $i_1=i_2$. For this pattern, there are $(n)_3=n(n-1)(n-2)$ configurations by filling in $a,b,c$ with unique numbers in $[n]$. More generally, for a pattern with $k$ letters, there are $(n)_k$ configurations that fall into that pattern.

        Fortunately, when enumerating \eqref{eq:var_general_form}, many patterns have \emph{no} contribution and can be ignored. We formalize this in the following definition.
\begin{definition}
\label{def:reducible}
    We say that the configuration of indices $(i_1,j_1),(i_2,j_2)$ is \textbf{reducible} if $\{i_1,j_1\}\cap\{i_2,j_2\}=\varnothing$. Otherwise, the index configuration is called \textbf{irreducible}. A pattern is called reducible if all index configuration in that pattern are reducible. 
\end{definition}

By the independence of the random variables $f_1(G_{i_1},G_{j_1})$ and $f_2(G_{i_2},G_{j_2})$, whenever $(i_1,j_1),(i_2,j_2)$ is reducible, we see that the corresponding term in \eqref{eq:var_general_form} completely vanishes! Therefore, to evaluate \eqref{eq:var_general_form}, we have only to understand the contribution of \emph{irreducible configurations}. The irreducible configurations can be organized into \emph{irreducible patterns}. For example, the pattern $(a,b),(c,c)$ is reducible (since formally $\{a,b\}\cap\{c\}=\varnothing$) and so any configuration from this pattern has \emph{no} contribution in the expectation. 

There are 11 irreducible patterns. (All these patterns are listed as part of Table \ref{tab:var_R}.) The expected value of the terms for each pattern will give a contribution that is expressed in terms of the $J_{a,b}$ depending on the details of exactly which indices are repeated. Then by enumerating the number of configurations in each pattern, we can evaluate \eqref{eq:var_general_form}.  This strategy is precisely how we evaluate each variance/covariance in this section.

\subsubsection*{Calculation of $\var\left[\normratioo\right]:$}
First, applying the identity in \eqref{eq:norm_ratio_identity}, we get
\begin{align*}
    \var\left[\normratioo\right] &= \left(\frac{2}{n_\ell}\right)^4 \var\left[ \sum_{i,j=1}^{n_\ell} \ph^2(G_i) \ph^2(\hat{G}_j)\right]\\
    &= \frac{16}{n_\ell^4} \left( \mathbf{E}\left[\sum_{\substack{i_1,j_1 \\i_2,j_2}} \ph^2(G_{i_1}) \ph^2(\hat{G}_{j_1}) \ph^2(G_{i_2}) \ph^2(\hat{G}_{j_2}) \right]
    - \mathbf{E}\left[ \sum_{i,j=1}^{n_\ell} \ph^2(G_i) \ph^2(\hat{G}_j)\right]^2 \right).
\end{align*}
 $\var[\normratioo]$ follows the form of \eqref{eq:var_general_form}, with $f_1(G_{i}, \hat{G}_{i}) =  f_2(G_{i}, \hat{G}_{i}) = \ph^2(G_{i}) \ph^2(\hat{G}_{j})$. We then evaluate the contribution from each irreducible pattern in  Table \ref{tab:var_R}.
 Combining all these cases and simplifying based on powers of $\frac{1}{n_\ell}$, we arrive at the following expression for $\var\left[\normratioo\right]$:
 \begin{align*}
     \frac{4}{n_\ell}(J_{2,2} +1)+  \frac{16}{n_\ell^2}\left(2J_{4,2} - \frac{5}{2}J_{2,2} + J_{2,2}^2 + \frac{5}{8} \right)+ \frac{16}{n_\ell^3}\left(J_{4,4} - 2J_{4,2} -2J_{2,2}^2 + 2J_{2,2} -\frac{9}{8} \right).
\end{align*}

\begin{table}[ht]
    \centering
    \begin{tabular}{|c|c|c|c|c|c|}
        \multicolumn{6}{c}{\large $\var[\normratioo]$ Calculation} \\ 
        
        \hline \# & $(i_1,j_1)$ & $(i_2, j_2)$ & $\mathbf{E}[f_1(G_{i_1}, \hat{G}_{j_1})]$ & $\mathbf{E}[f_2(G_{i_2}, \hat{G}_{j_2})]$ & $\mathbf{E}[f_1(G_{i_1}, \hat{G}_{j_1})f_2(G_{i_2}, \hat{G}_{j_2})]$  \\ \hline\hline
         
         $(n)_1$ & $(a,a)$ & $(a,a)$ & $J_{2,2}$ & $J_{2,2}$ & $J_{4,4}$ \\ \hline
         
         \multirow{6}{*}{$(n)_2$} & $(a,b)$ & $(a,b)$ & \multirow{2}{*}{$\left( \frac{1}{2}\right)^2$} & \multirow{2}{*}{$\left( \frac{1}{2}\right)^2$} & $\left( \frac{3}{2}\right)^2$ \\ \cline{2-3} \cline{6-6}
         
         & $(a,b)$ & $(b,a)$ &  &  & $J_{2,2}^2$ \\ \cline{2-6}
         
         & $(a,a)$ & $(a,b)$ & \multirow{2}{*}{$J_{2,2}$} & \multirow{2}{*}{$\left( \frac{1}{2}\right)^2$} & \multirow{4}{*}{$\frac{1}{2}J_{4,2}$} \\ \cline{2-3}
         
         & $(a,a)$ & $(b,a)$ &  &  & \\ \cline{2-5}
         
         & $(a,b)$ & $(a,a)$ & \multirow{2}{*}{$\left( \frac{1}{2}\right)^2$} & \multirow{2}{*}{$J_{2,2}$} &  \\ \cline{2-3}
         
         & $(b,a)$ & $(a,a)$ &  &  &  \\ \hline
         
        \multirow{4}{*}{$(n)_3$} & $(a,b)$ & $(a,c)$ & \multirow{4}{*}{$\left( \frac{1}{2}\right)^2$} & \multirow{4}{*}{$\left( \frac{1}{2}\right)^2$} & \multirow{2}{*}{$\frac{3}{2} \left(\frac{1}{2}\right)^2$} \\ \cline{2-3}
        
         & $(a,b)$ & $(c,b)$ &  &  & \\ \cline{2-3} \cline{6-6}
         
         & $(a,b)$ & $(c,a)$ &  &  & \multirow{2}{*}{$\left(\frac{1}{2}\right)^2J_{2,2}$} \\ \cline{2-3}
         
         & $(a,b)$ & $(b,c)$ &  &  &  \\ \hline
    \end{tabular}
    \caption{$\var[\normratioo]$ calculated in the form of \eqref{eq:var_general_form} with $f_1(G_{i}, \hat{G}_{j}) = f_2(G_{i}, \hat{G}_{j}) = \ph^2(G_{i}) \ph^2(\hat{G}_{j})$. The contribution from all 11 possible \emph{irreducible} patterns of the indices are shown. }
    \label{tab:var_R}
\end{table}


\subsubsection*{Calculation of $\var\left[\normratioo \sin^2(\theta_{\ell+1})\right]:$}
Applying identity \eqref{sums_4}, we can express $\var[\normratioo \sin^2(\theta_{\ell+1})]$ as
\begin{align*}
    &\var\left[\normratioo \sin^2(\theta_{\ell+1})\right] = \frac{1}{4}\left(\frac{2}{n_\ell} \right)^4\var\left[\sum_{i,j=1}^{n_\ell} \left(\ph(G_i)\ph(\hat{G}_j) - \ph(G_j)\ph(\hat{G}_i) \right)^2 \right]. 
\end{align*}
Note that we can express $\var[\normratioo \sin^2(\theta_{\ell+1})]$ as in \eqref{eq:var_general_form} by letting $f_1(G_{i}, \hat{G}_{j}) = f_2(G_{i}, \hat{G}_{j}) = ( \ph(G_{i})\ph(\hat{G}_{j})-\ph(G_{j})\ph(\hat{G}_{i}))^2$.We then evaluate the contribution from each irreducible pattern in  Table \ref{tab:var_R_sin}.
 Combining all these cases and simplifying based on powers of $\frac{1}{n_\ell}$, we arrive at the following expression:
%
%
\begin{align*}
    &\var\left[\normratioo \sin^2(\theta_{\ell+1})\right]= \frac{8}{n_\ell}\left( -8J_{1,1}^4 + 8J_{1,1}^2J_{2,2} + 4J_{1,1}^2 - 8J_{1,1}J_{3,1} + J_{2,2} + 1\right)\\
    &+ \frac{2}{n_\ell^2}\left(80J_{1,1}^4 - 96J_{1,1}^2J_{2,2} - 40J_{1,1}^2 + 96J_{1,1}J_{3,1} + 24J_{2,2}^2 - 12J_{2,2} - 32J_{3,1}^2 + 5 \right) \\
    &+ \frac{2}{n_\ell^3} \left( -48J_{1,1}^4 + 64J_{1,1}^2J_{2,2} + 24J_{1,1}^2 - 64J_{1,1}J_{3,1} - 24J_{2,2}^2 + 8J_{2,2} + 32J_{3,1}^2 -9\right).
\end{align*}

\begin{table}[ht]
    \centering
    \begin{tabular}{|c|c|c|c|c|c|}
        \multicolumn{6}{c}{\large $\var[\normratioo \sin^2(\theta_{\ell+1})]$ Calculation} \\  
    
        \hline \# & $(i_1,j_1)$ & $(i_2, j_2)$ & $\mathbf{E}[f(G_{i_1}, \hat{G}_{j_1})]$ & $\mathbf{E}[f(G_{i_2}, \hat{G}_{j_2})]$ & $\mathbf{E}[f(G_{i_1}, \hat{G}_{j_1})f(G_{i_2}, \hat{G}_{j_2})]$  \\ \hline\hline
         
         \multirow{2}{*}{$(n)_2$} & $(a,b)$ & $(a,b)$ & \multirow{6}{*}{$\frac{1}{2}-2J_{1,1}^2$} & \multirow{6}{*}{$\frac{1}{2}-2J_{1,1}^2$} & \multirow{2}{*}{$6J_{2,2}^2 - 8J_{3,1}^2 + \frac{9}{2}$} \\ \cline{2-3} 
         
         & $(a,b)$ & $(b,a)$ &  &  &  \\ \cline{1-3} \cline{6-6}
         
         \multirow{4}{*}{$(n)_3$} & $(a,b)$ & $(a,c)$ &  &  & \multirow{4}{*}{$4 J_{2,2} J_{1,1}^2 - 4 J_{3,1}J_{1,1} + \frac{1}{2}J_{2,2} + \frac{3}{4}$} \\ \cline{2-3}
         
         & $(a,b)$ & $(c,a)$ &  &  &  \\ \cline{2-3}
         
         & $(a,b)$ & $(c,b)$ &  &  &  \\ \cline{2-3}
         
         & $(a,b)$ & $(b,c)$ &  &  &  \\ \hline
    \end{tabular}
    \caption{$\var[\normratioo\sin^2(\theta_{\ell+1})]$ calculated in the form of \eqref{eq:var_general_form} with $f_1(G_{i}, \hat{G}_{j}) = f_2(G_{i}, \hat{G}_{j})$ $= ( \ph(G_{i})\ph(\hat{G}_{j})-\ph(G_{j})\ph(\hat{G}_{i}))^2$. The \emph{non-zero} contribution  \emph{irreducible} patterns of the indices are shown. Note  that because $f_1(G_{i_1},G_{j_1})=0$ when $i_1=j_1$ and $f_2(G_{i_2},G_{j_2})=0$ when $i_2=j_2$, there are 5  irreducible patterns (of the possible 11) that have zero contribution and are not displayed in this table.}
    \label{tab:var_R_sin}
\end{table}

\subsubsection*{Calculation of $\cov\left(\normratioo \sin^2(\theta_{\ell+1}),\normratioo \right)$:}

\begin{table}[ht]
    \centering
    \begin{tabular}{|c|c|c|c|c|c|}
        \multicolumn{6}{c}{\large $\cov(\normratioo, \normratioo \sin^2(\theta_{\ell+1}))$ Calculation} \\
        
        \hline \# & $(i_1,j_1)$ & $(i_2, j_2)$ & $\mathbf{E}[f_1(G_{i_1}, \hat{G}_{j_1})]$ & $\mathbf{E}[f_2(G_{i_2}, \hat{G}_{j_2})]$ & $\mathbf{E}[f_1(G_{i_1}, \hat{G}_{j_1})f_2(G_{i_2}, \hat{G}_{j_2})]$  \\ \hline\hline
         
         \multirow{4}{*}{$(n)_2$} & $(a,b)$ & $(b,b)$ & \multirow{8}{*}{$\frac{1}{2} - 2J_{1,1}^2$} & \multirow{2}{*}{$J_{2,2}$} & \multirow{2}{*}{$J_{4,2} - 2J_{1,1}J_{3,3}$}  \\ \cline{2-3}
         
          & $(a,b)$ & $(a,a)$ &  &  &  \\ \cline{2-3} \cline{5-6}
         
          & $(a,b)$ & $(a,b)$ &  & \multirow{6}{*}{$\left(\frac{1}{2}\right)^2$} & \multirow{6}{*}{$J_{2,2}^2 - 2J_{3,1}^2 + \left( \frac{3}{2} \right)^2$}  \\ \cline{2-3}
         
          & $(a,b)$ & $(b,a)$ &  &  &   \\ \cline{1-3}
         
         \multirow{4}{*}{$(n)_3$} & $(a,b)$ & $(a,c)$ &  &  &   \\ \cline{2-3}
         
          & $(a,b)$ & $(c,a)$ &  &  &   \\ \cline{2-3}
         
          & $(a,b)$ & $(b,c)$ &  &  &   \\ \cline{2-3}
         
          & $(a,b)$ & $(c,b)$ &  &  &   \\ \hline
    \end{tabular}
    \caption{$\cov\left(\normratioo \sin^2(\theta_{\ell+1}),\normratioo \right)$ calculated in the form of \eqref{eq:var_general_form} with $f_1(G_i, \hat{G}_j) = (\ph(G_i)\ph(\hat{G}_j) - \ph(G_j)\ph(\hat{G}_i))^2$, and $f_2(G_i, \hat{G}_j) = \ph^2(G_i) \ph^2(\hat{G}_j)$. The \emph{non-zero} contribution from  \emph{irreducible} patterns of the indices are shown. Note  that because $f_1(G_{i_1},G_{j_1})=0$ when $i_1=j_1$, there are 3  irreducible patterns (of the possible 11) that have zero contribution which are \emph{not} displayed in this table.}
    \label{tab:cov_R_sin}
\end{table}


\begin{gather*}
    \cov\left( \normratioo \sin^2(\theta_{\ell+1}),\normratioo \right) = \mathbf{E}\left[ \left(\normratioo \right)^2 \sin^2(\theta_{\ell+1}) \right] - \mathbf{E}\left[\normratioo \sin^2(\theta_{\ell+1}) \right] \mathbf{E}\left[ \normratioo \right].
\end{gather*}

\noindent Applying known identities (\ref{sums_4}, \ref{eq:norm_ratio_identity}) derived in \Cref{app:identities} and \Cref{app:cauchy_binet}, we can express this in the form of \eqref{eq:var_general_form}, where $f_1(G_i, \hat{G}_j) = (\ph(G_i)\ph(\hat{G}_j) - \ph(G_j)\ph(\hat{G}_i))^2$, and $f_2(G_i, \hat{G}_j) = \ph^2(G_i) \ph^2(\hat{G}_j)$. 
%
%
%
Table \ref{tab:cov_R_sin} shows the calculation of all the irreducible patterns. Collecting all cases and simplifying based on powers of $\frac{1}{n_\ell}$ gives:
\begin{gather*}
    \cov\left( \normratioo \sin^2(\theta_{\ell+1}),\normratioo \right) = \frac{1}{n_\ell}\left(16J_{1,1}^2 - 32J_{1,1}J_{3,1} + 8J_{2,2} +8 \right)\\
    + \frac{1}{n_\ell^2}\left(32J_{1,1}^2J_{2,2} -40J_{1,1}^2 + 96J_{1,1}J_{3,1} - 32J_{1,1}J_{3,3} + 16J_{2,2}^2 - 32J_{2,2} - 32J_{3,1}^2 + 16J_{4,2} + 10 \right) \\
    + \frac{1}{n_\ell^3}\left(24J_{1,1}^2 - 32J_{1,1}^2J_{2,2} - 64J_{1,1}J_{3,1} +32J_{1,1}J_{3,3} - 16J_{2,2}^2 + 24J_{2,2} + 32J_{3,1}^2 - 16J_{4,2} - 18 \right).
\end{gather*}


\subsection{Infinite Width Update Rule}
\label{app:infinite_width}
\begin{lemma}
Let $f(x)$ be a feed forward neural network as defined in \ref{tbl:notation}. Conditional on the value of $\theta_\ell$ in layer $\ell$, the angle $\theta_\ell$ between inputs at layer $\ell$ of $f$ follows the following deterministic update rule in the limit $n_\ell \to \infty$.
$$ \cos (\theta_{\ell+1}) = 2 J_{1,1}(\theta_\ell).$$
\end{lemma}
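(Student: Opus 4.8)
The plan is to pass to the limit in the exact finite-width identities (\ref{sums_1})--(\ref{sums_2}) using the law of large numbers. First I would write the cosine directly from its definition,
\begin{equation*}
\cos(\theta^{\ell+1}) = \frac{\langle \phaa, \phbb\rangle}{\Vert \phaa \Vert \, \Vert \phbb \Vert},
\end{equation*}
and substitute the three representations from (\ref{sums_1})--(\ref{sums_2}). The prefactors $\Vert \pha \Vert$ and $\Vert \phb \Vert$ appear in the numerator as a product and in the denominator as $\sqrt{\Vert \pha \Vert^2}\sqrt{\Vert \phb \Vert^2}$, so they cancel exactly. After this cancellation, conditionally on $\theta^\ell$, the cosine is a ratio of three empirical averages over the iid pairs $(G_i,\hat{G}_i)$:
\begin{equation*}
\cos(\theta^{\ell+1}) = \frac{\frac{1}{n_\ell}\sum_{i=1}^{n_\ell} 2\ph(G_i)\ph(\hat{G}_i)}{\sqrt{\frac{1}{n_\ell}\sum_{i=1}^{n_\ell} 2\ph^2(G_i)}\,\sqrt{\frac{1}{n_\ell}\sum_{i=1}^{n_\ell} 2\ph^2(\hat{G}_i)}}.
\end{equation*}

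Next I would identify the three limits. Since the summands are iid with finite variance (the relevant fourth moments of $\ph(G)$ are finite), the law of large numbers gives that each average converges to its expectation as $n_\ell \to \infty$. For the numerator, $\mathbf{E}[2\ph(G)\ph(\hat{G})] = 2J_{1,1}(\theta^\ell)$ directly by the definition \eqref{eq:J_ab_def}. For each factor in the denominator, $\mathbf{E}[2\ph^2(G)] = 2\cdot\tfrac{1}{2} = 1$, using the marginal moment $\mathbf{E}[\ph^2(G)] = \tfrac12$ (equivalently Lemma \ref{lem:powers_of_phi} with $k=2$), and identically for $\hat{G}$. Because the denominator converges to the nonzero constant $1$, the continuous mapping theorem (or Slutsky's lemma) lets me pass the limit through the ratio, yielding
\begin{equation*}
\cos(\theta^{\ell+1}) \xrightarrow[n_\ell\to\infty]{} \frac{2J_{1,1}(\theta^\ell)}{\sqrt{1}\,\sqrt{1}} = 2J_{1,1}(\theta^\ell),
\end{equation*}
which is the claimed deterministic update rule.

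The only delicate point is the rigorous justification of convergence of the \emph{ratio} rather than the individual sums, but this is routine: the denominator is a fixed continuous function of two averages that converge in probability to a strictly positive limit, so the continuous mapping theorem applies and no further work is needed. I would also remark, as a consistency check, that this matches the $\cos(\theta^{\ell+1}) = J_1(\theta^\ell)/\pi$ form in \eqref{eq:infinte_width_update} via the translation $J_1 = 2\pi J_{1,1}$ noted in the footnote, and that substituting the explicit formula for $J_{1,1}$ from \eqref{eq:J_small} reproduces the trigonometric expression $\cos(\theta^{\ell+1}) = \big(\sin(\theta^\ell)+(\pi-\theta^\ell)\cos(\theta^\ell)\big)/\pi$.
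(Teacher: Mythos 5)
Your proof is correct and follows essentially the same route as the paper's: start from the identities (\ref{sums_1})--(\ref{sums_2}), cancel the norms $\Vert\pha\Vert,\Vert\phb\Vert$, apply the law of large numbers to the three empirical averages, and use $\mathbf{E}[\ph^2(G)]=\tfrac12$ together with the definition of $J_{1,1}$ to conclude. Your explicit appeal to the continuous mapping theorem to handle the ratio is a slightly more careful phrasing of the limit step the paper performs implicitly, but it is the same argument.
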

\begin{remark}
    Note that a more general proof of this result  appears in prior work \cite{boris_correlation_functions} which allows one to take the layer sizes $n_1,n_2,\ldots,n_\ell \to \infty$ in any order, rather than one layer at a time as we prove here. 
\end{remark}
\begin{proof}
We begin with the identity (\ref{sums_2}), and use the inner product to introduce $\cos(\theta_{\ell+1})$, 
\begin{gather*}
    \frac{\|\pha\| \|\phb\| }{n_\ell} \sum_{i=1}^{n_\ell} 2\ph(G_i) \ph(\hat{G}_i) = \langle \phaa, \phbb \rangle = \|\phaa\| \|\phbb\| \cos(\theta_{\ell+1}).
\end{gather*}
Applying the identities in \eqref{sums_1} to $\|\phaa\|$ and $\|\phbb\|$, we get
\begin{align*}
    \frac{\|\pha\| \|\phb\| }{n_\ell} \sum_{i=1}^{n_\ell} 2\ph(G_i) \ph(\hat{G}_i) &= \sqrt{\frac{\|\pha\|^2}{n_\ell} \sum_{i=1}^{n_\ell}2\ph^2(G_i)} \sqrt{\frac{\|\phb\|^2}{n_\ell} \sum_{i=1}^{n_\ell}2\ph^2(\hat{G}_i)} \cos(\theta_{\ell+1}), \\
    \implies \frac{1}{n_\ell} \sum_{i=1}^{n_\ell} \ph(G_i) \ph(\hat{G}_i) &= \sqrt{\frac{1}{n_\ell} \sum_{i=1}^{n_\ell}\ph^2(G_i)} \sqrt{\frac{1}{n_\ell} \sum_{i=1}^{n_\ell}\ph^2(\hat{G}_i)} \cos(\theta_{\ell+1}).
\end{align*}
Now, in the limit $n_\ell \to \infty$ we have by application of the Law of Large Numbers,
\begin{align*}
    \lim_{n_\ell \to \infty} \left( \frac{1}{n_\ell} \sum_{i=1}^{n_\ell} \ph(G_i) \ph(\hat{G}_i) \right) &= \lim_{n_\ell \to \infty} \left( \sqrt{\frac{1}{n_\ell} \sum_{i=1}^{n_\ell}\ph^2(G_i)} \sqrt{\frac{1}{n_\ell} \sum_{i=1}^{n_\ell}\ph^2(\hat{G}_i)} \cos(\theta_{\ell+1}) \right) \\
    \implies \mathbf{E}\left[\ph(G_i) \ph(\hat{G}_i) \right] &= \sqrt{\mathbf{E}\left[ \ph^2(G_i)\right]} \sqrt{\mathbf{E}[ \ph^2(\hat{G}_i)]} \cos(\theta_{\ell+1}) \\
    \implies J_{1,1}(\theta_\ell) &= \frac{1}{2} \cos(\theta_{\ell+1}),
\end{align*}
where we have used the definition of $J_{1,1}(\theta)$ and the fact that $\mathbf{E}[\ph^2(G)]=\frac{1}{2}$. 
\end{proof}


\section*{Appendix B.}

\fakesection{fakesection2}
\subsection{Derivation of Lower-Order $\mathbf{J}$ Functions - Proof of Proposition \ref{prop:j_small}}
\label{app:low_order}



\begin{proof} [Of formula for $J_{0,0}$]
We find a differential equation that $J_{0,0}$ satisfies and solve it to obtain the formula. First note the initial condition $J_{0,0}(0) = \mathbf{E}[1\{G>0\}] = \frac{1}{2}$. To find $J_{0,0}^\prime(\theta)$, we take the derivative inside the expectation and have by the chain rule that 
\begin{align*}
    J_{0,0}'(\theta) &= \mathbf{E}[1\{G>0\} 1'\{G \cos \theta + W \sin \theta >0 \}G](-\sin \theta) \\
    &+ \mathbf{E}[1\{G>0\} 1'\{G \cos \theta + W \sin \theta >0 \}W]\cos \theta.
\end{align*}

\noindent Applying the change of variables as in \eqref{cov}, we have 
\begin{align*}
     J_{0,0}'(\theta) &= \mathbf{E}[(Z\cos \theta  + Y \sin \theta ) 1\{Z \cos \theta  + Y\sin \theta >0\} 1'\{Z >0 \}](-\sin \theta) \\
    &+ \mathbf{E}[(Z \sin \theta  - Y \cos \theta ) 1\{Z \cos \theta  + Y \sin \theta >0\} 1'\{Z >0 \}]\cos \theta \\
    &= \mathbf{E}[(Y \sin \theta ) 1\{ Y \sin \theta >0\}]\frac{-\sin \theta}{\sqrt{2\pi}} + \mathbf{E}[(-Y \cos \theta ) 1\{ Y \sin \theta >0\}]\frac{\cos \theta}{\sqrt{2\pi}} \\
    &= (-\sin^2 \theta - \cos^2 \theta)\mathbf{E}[Y 1\{Y>0\}]\frac{1}{\sqrt{2\pi}} = -\frac{1}{2\pi},
\end{align*}
where we have used \eqref{eq:E_deriv_one} to evaluate the integrals involving $1^\prime\{Z>0\}$ and $\mathbf{E}[Y1\{Y>0\}] = (\sqrt{2\pi})^{-1}$ from Lemma \ref{lem:powers_of_phi}. We now have $J'_{0,0}(\theta) = -\frac{1}{2\pi}$ with initial condition given by $J_{0,0}(0) = 0$. Solving this differential equation gives the desired result. 
\end{proof}

\begin{proof}[$J_{1,0}$ and $J_{1,1}$] Here we use the Gaussian integration-by-parts strategy (\ref{eq:Gaussian_IBP} -\ref{eq:E_deriv_one}).\\

\noindent \textbf{Formula for $J_{1,0}$:}
\begin{align*}
    J_{1,0}(\theta) &= \mathbf{E}[G 1\{G>0\}\; 1\{G \cos \theta + W \sin \theta >0 \}] \\
    &= \mathbf{E}\left[\frac{d}{dg} \left( \; 1\{G>0\}\; 1\{G \cos \theta + W \sin \theta >0 \}\right)\right] \\
    &= \mathbf{E}\left[  1'\{G>0\}\; 1\{G \cos \theta + W \sin \theta >0 \}\right] + \mathbf{E}\left[  1\{G>0\}\; 1'\{G \cos \theta + W \sin \theta >0 \}\right]\cos \theta.
\end{align*}
By using the change of variables as in (\ref{cov}) on the second term, we arrive at
\begin{align*}
     J_{1,0}(\theta) &= \mathbf{E}[1\{W \sin \theta >0 \}]\frac{1}{\sqrt{2 \pi}} + \cos \theta \mathbf{E}[1 \{Z \cos \theta  + Y \sin \theta  > 0\} 1'\{Z>0\}] \\ 
     &= \frac{1}{2} \frac{1}{\sqrt{2 \pi}} + \cos \theta \mathbf{E}[1\{ Y \sin \theta  >0 \}]\frac{1}{\sqrt{2 \pi}} = \frac{1}{2} \frac{1}{\sqrt{2 \pi}} + \frac{\cos \theta}{2} \frac{1}{\sqrt{2 \pi}} = \frac{1+\cos \theta}{2 \sqrt{2 \pi}}.
\end{align*}
\noindent \textbf{Formula for $J_{1,1}$:}
\begin{align*}
    J_{1,1}(\theta) &= \mathbf{E}[G (G \cos \theta + W \sin \theta) 1\{G>0\} 1\{G \cos \theta + W \sin \theta > 0 \}] \\ 
    &= \mathbf{E}[\cos\theta \;1\{G>0\} 1\{G \cos \theta + W \sin \theta\}] \\
    &+ \mathbf{E}[(G \cos \theta + W \sin \theta) 1'\{G>0\} 1\{G \cos \theta + W \sin \theta >0\}] \\
    &+ \mathbf{E}[(G \cos \theta + W \sin \theta) 1\{G>0\} 1'\{G \cos \theta + W \sin \theta >0\}]\cos \theta \\
    &= \cos \theta J_{0,0} + \mathbf{E}[W \sin \theta \;1\{W \sin \theta > 0\}]\frac{1}{\sqrt{2 \pi}} + \mathbf{E}[Z 1\{Z \cos \theta + Y \sin \theta \} 1'\{z>0\}] \\
    &= \cos \theta J_{0,0} + \sin \theta \mathbf{E}[\varphi(W)]\frac{1}{\sqrt{2 \pi}} + 0  = \frac{\sin \theta + (\pi - \theta)\cos \theta }{2 \pi} .
\end{align*} 
\end{proof}


\subsection{Proof of Explicit Formulas for $J_{n,0}$ and $J_{n,1}$}
\label{app:j_proof}
Once the recursion is established, the formula for both $J_{n,0}$ and $J_{n,1}$ is a simple proof by induction. We provide a detailed proof for $J_{n,0}$ here; $J_{n,1}$ is similar.
\begin{lemma} Let $J_{n,0}^{rec}$ be the recursively defined formula, and let $J_{n,0}^{exp}$ be the explicitly defined formula for $J_{n,0}$, namely
\begin{gather*}
    J_{n,0}^{rec} := (n-1) J^{rec}_{n-2,0} + \frac{\sin^{n-1}\theta \cos \theta}{c_{n \bmod 2}}(n-2)!!, \quad J_{1,0}^{rec}:=J_{1,0}, \quad J_{0,0}^{rec}:=J_{0,0}, \\
    J_{n,0}^{exp} := (n-1)!!\left( J_{n \bmod 2,0} + \frac{\cos \theta }{c_{n \bmod 2}} \sum_{\substack{i \not\equiv n(\bmod 2) \\ 0<i<n}} \frac{(i-1)!!}{i!!} \sin^i \theta \right).
\end{gather*}
Then $J_{n,0}^{rec} = J_{n,0}^{exp}$ for all $n \geq 0$.
\end{lemma}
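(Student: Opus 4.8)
The plan is to prove the identity by induction on $n$, proceeding in steps of two so as to respect the parity structure of the recurrence $J^{rec}_{n,0} = (n-1)J^{rec}_{n-2,0} + \frac{\sin^{n-1}\theta\cos\theta}{c_{n\bmod 2}}(n-2)!!$. The two base cases are $n=0$ and $n=1$. For $n=0$ the explicit formula reads $J^{exp}_{0,0} = (-1)!!\,(J_{0,0} + \ldots)$, where $(-1)!!=1$ is the empty product and the sum over $0<i<0$ is empty, so $J^{exp}_{0,0}=J_{0,0}=J^{rec}_{0,0}$; similarly for $n=1$ the double factorial $(0)!!=1$, the anchor term is $J_{1\bmod 2,0}=J_{1,0}$, and the sum over $0<i<1$ is empty, giving $J^{exp}_{1,0}=J_{1,0}=J^{rec}_{1,0}$.

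For the inductive step I would fix $n\geq 2$ and assume $J^{rec}_{n-2,0}=J^{exp}_{n-2,0}$. Substituting the explicit formula for $J^{exp}_{n-2,0}$ into the recurrence, and using that $(n-2)\bmod 2 = n\bmod 2$ (so that the parity condition $i\not\equiv(n-2)$ coincides with $i\not\equiv n$, while $c_{(n-2)\bmod 2}=c_{n\bmod 2}$ and $J_{(n-2)\bmod 2,0}=J_{n\bmod 2,0}$), the prefactor simplifies by the telescoping identity $(n-1)(n-3)!!=(n-1)!!$ to give
\begin{align*}
(n-1)J^{exp}_{n-2,0} = (n-1)!!\left(J_{n\bmod 2,0} + \frac{\cos\theta}{c_{n\bmod 2}}\sum_{\substack{i\not\equiv n(\bmod 2)\\ 0<i<n-2}}\frac{(i-1)!!}{i!!}\sin^i\theta\right).
\end{align*}

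It remains only to match the inhomogeneous term of the recurrence against the terms that the target sum (ranging over $0<i<n$ rather than $0<i<n-2$) contributes beyond the display above. Since $i=n-2$ shares the parity of $n$ and is therefore excluded by the constraint $i\not\equiv n\,(\bmod 2)$, the sole new index is $i=n-1$, whose contribution is $(n-1)!!\cdot\frac{\cos\theta}{c_{n\bmod 2}}\cdot\frac{(n-2)!!}{(n-1)!!}\sin^{n-1}\theta = \frac{\sin^{n-1}\theta\cos\theta}{c_{n\bmod 2}}(n-2)!!$. This is exactly the inhomogeneous term in the recurrence, so adding it to $(n-1)J^{exp}_{n-2,0}$ reproduces $J^{exp}_{n,0}$, completing the induction.

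The only delicate point — the main ``obstacle,'' such as it is — is the bookkeeping around the sum's index set: one must verify that extending the upper limit from $n-2$ to $n$ introduces precisely the single index $i=n-1$ (because $i=n-2$ fails the parity constraint), and that the double factorials collapse so that $(n-1)!!\cdot(n-2)!!/(n-1)!! = (n-2)!!$ recovers the forcing term exactly. Everything else is routine. The proof for $J_{n,1}$ is, as the lemma notes, entirely analogous, with the anchor $J_{n\bmod 2,1}$ and the two-index recurrence $J_{a,b}=(a-1)J_{a-2,b}+b\cos\theta\,J_{a-1,b-1}$ specialized to $b=1$ playing the roles above.
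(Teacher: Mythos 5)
Your proof is correct and follows essentially the same strategy as the paper's: a parity-respecting two-step induction whose key step uses the double-factorial identity $(n-1)(n-3)!! = (n-1)!!$ together with the observation that enlarging the sum's range introduces exactly one new index (the one of opposite parity to $n$), whose contribution matches the inhomogeneous term of the recurrence. The only cosmetic differences are the direction of the computation (you work forwards from $J^{rec}_{n,0}$ by substituting the induction hypothesis, while the paper works backwards from $J^{exp}_{n+1,0}$ by peeling off the last term of the sum) and your choice of base cases $n=0,1$, which incidentally also covers the $n=0$ case that the paper's induction, starting at $S_2$, leaves implicit.
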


\begin{proof} Let $S_n$, $n \in \mathbb{N},\; n \geq 2$ be the statement $J_{n,0}^{rec} = J_{n,0}^{exp}\text{ and }J_{n-1,0}^{rec} = J_{n-1,0}^{exp}$. We prove $S_n$ is true by induction. The base case $S_2$ is true because,
\begin{align*}
    J_{2,0}^{rec} &= (2-1)J_{0,0} + \frac{\sin \theta \cos \theta}{c_{2 \bmod 2}}(2-2)!! = J_{0,0} + \frac{\cos\theta \sin\theta }{2\pi},\\
    J_{2,0}^{exp} &= (2-1)!!J_{0,0} + \cos \theta \sum_{i=1}^{1} \frac{(2-1)!!}{(2i-1)!!}(2i-2)!! \frac{\sin^{2i-1} \theta}{2\pi}
    = J_{0,0} + \frac{\cos\theta \sin\theta }{2\pi},
\end{align*}
and the fact that $J_{1,0}^{rec}=J_{1,0}^{exp}$ is trivial. Induction step:  Assume $S_n$ is true. To prove $S_{n+1}$, we have only to show that $J_{n+1,0}^{exp}=J_{n+1,0}^{rec}$. To do this, we separate the last term of the sum to get
\begin{align*} 
    J^{exp}_{n+1,0} &=  n!! \left( J_{(n+1)\bmod 2,0} + \frac{\cos \theta}{c_{(n+1)\bmod 2}} \sum_{\substack{i \not\equiv (n+1)(\bmod 2) \\ 0<i<n-1}} \frac{(i-1)!!}{i!!} \sin^i \theta \right)\\
    &+ n!!\frac{\cos \theta}{c_{(n+1)\bmod 2}}\frac{(n-1)!!}{n!!}\sin^n \theta. 
\end{align*}
Because the parity of $n+1$ and $n-1$ are the same, and using $n!!=n(n-2)!!$ we recognize the first term as $nJ^{exp}_{n-1,0}$. So after simplifying the last term, we remain with

    
\begin{equation*}
    J^{exp}_{n+1,0} = n J_{n-1,0}^{exp}  + \frac{\sin^n\theta \cos \theta}{c_{(n+1) \bmod 2}}(n-1)!! = J_{n+1,0}^{rec},
\end{equation*}
by the induction hypothesis. This completes the induction.
\end{proof}

\subsection{Bijection between Paths in Graphs of $\mathbf{J}$ Functions and the Bessel Number graphs $P$,$Q$}
\label{app:bijection}
Let $G_{J^*} = (V_{J*}, E_{J*})$ be the graph of $J_{a,b}^*$ as in Figure \ref{j}. Similarly, let $G_P = (V_P, E_P)$ and $G_Q = (V_Q, E_Q)$ be the graph of the $P$ and $Q$ matrices up to row $a$, respectively, as in Figures \ref{p}, \ref{q}. We define a map  $\lambda: \mathbb{Z}^2 \times \mathbb{Z}^2   \to \mathbb{Z}^2$ as follows: Let $((i,j),(m,n)) \in \mathbb{Z}^2 \times \mathbb{Z}^2, \; 0 \leq i \leq a,\; b-a+m \leq j\leq b$. Then define $\lambda$ by
\begin{equation*}
    \lambda((i,j),(m,n)) := (i-m, j-n), \quad \lambda^{-1}((i,j),(m, n)) = (i+m, j+n).
\end{equation*}
The function $\lambda$ can be used as a map between vertices of graph $G_{J^*}$ to vertices of graph $G_{P}$ or $G_Q$. Let $\pi = (v_1, v_2,...,v_{k-1},v_k)$ be a path in  $G_{J^*}$ from vertex $v_1 = (m,n)$ to vertex $v_k = (a,b)$, where $v_i \in \mathbb{Z}^2,\;1\leq i \leq k$ is a vertex on the graph. $\lambda$ extends to a map on paths, $\Lambda$, defined by
\begin{align*}
    \Lambda((v_1, v_2,...,v_{k-1},v_k)) :=& (\lambda(v_1, v_1), \lambda(v_2, v_1),...,\lambda(v_{k-1}, v_1),\lambda(v_k, v_1)), \\ \Lambda^{-1}((v_1, v_2,...,v_{k-1},v_k)) =& (\lambda^{-1}(v_1, v_1), \lambda^{-1}(v_2, v_1),...,\lambda^{-1}(v_{k-1}, v_1),\lambda^{-1}(v_k, v_1)).
\end{align*}
Now, let $\Gamma_{J^\ast}(a,b,m,n)$ be the set of all paths in the graph of $J^\ast$ from $J^\ast_{m,n}$ to $J^\ast_{a,b}$, and let $\Gamma_P(a,b,m,n)$ be the set of all paths in the graph of $P$ from $P(0,0)$ to $P(a-m, b-n) = P(\lambda((a,b),(m,n)))$. For example, $\Gamma_{J^\ast}(6,8,0,4)$ is the set of all paths which run from $J_{6,8}^*$ to $J_{0,4}^*$, and  $\Gamma_P(6,8,0,4)$ is the set of all paths which run from $P(0,0)$ to $P(6,4)$.

With these definitions, $\Lambda: \Gamma_{J^\ast}(a,b,0,n) \to \Gamma_P(a,b,0,n)$ is a bijection. An illustration of all paths $\pi \in  \Pi(6,8,0,6)$ and the corresponding paths $\Lambda(\pi) \in \Gamma_P(6,8,0,6)$ is given in Figure \ref{jp}. Similarly, if we let  $\Gamma_Q(a,b,m,n)$ be the set of all paths from $Q(0,0)$ to $Q(a-m, b-n) = Q(\lambda((a,b),(m,n)))$ then $\Lambda: \Gamma_{J^\ast}(a,b,1,n) \to \Gamma_Q(a,b,1,n)$ is a bijection. This bijection establishes the equality of the weighted paths claim in \eqref{eq:path_equality}.

\begin{figure}[h!]
    \centering
     \resizebox{16cm}{12cm}{
    \begin{tikzpicture}
    
    \foreach \i in {0,1,...,6}{
        \draw (-0.5,\i+0.5) node{\i};
    }
    
    \foreach \i\j in {0/8,1/7,2/6}{
        \draw (\i + 0.5, 7.5) node{\j};
    }
    
    \foreach \i in {0,...,3}{
        \pgfmathsetmacro{\ii}{2*\i}
        \fill[yellow!5] (0,\ii) rectangle (1,\ii+1);
        
        \pgfmathsetmacro{\ini}{int(\ii)}
        \draw (0.5,\ii+0.5) node{$J_{\ini,8}$};
    }
    
    \foreach \i in {0,...,2}{
        \pgfmathsetmacro{\ii}{2*\i+1}
        \fill[yellow!5] (1,\ii) rectangle (2,\ii+1);
        
        \pgfmathsetmacro{\ini}{int(\ii)}
        \draw (1.5,\ii+0.5) node{$J_{\ini,7}$};
    }
    
    \foreach \i in {0,...,2}{
        \pgfmathsetmacro{\ii}{2*\i}
        \fill[yellow!5] (2,\ii) rectangle (3,\ii+1);
        
        \pgfmathsetmacro{\ini}{int(\ii)}
        \draw (2.5,\ii+0.5) node{$J_{\ini,6}$};
    }
    
    \draw[step = 1, black, thin] (0,0) grid (3,7);
    
    \draw[red,thick,->] (0.5, 5) -- (0.5, 6);
    \draw (0.65, 5.5) node[red, scale = 0.75]{5};
    
    \draw[red,thick,->] (0.5, 3) -- (0.5, 4);
    \draw (0.65, 3.5) node[red, scale = 0.75]{3};
    
    \draw[blue,thick,->] (1.25, 1.75) --(0.75, 2.25);
    \draw (1.15, 2.15) node[blue, scale = 0.75]{1};
    
    \draw[blue,thick,->] (2.25, 0.75) --(1.75, 1.25);
    \draw (2.15, 1.15) node[blue, scale = 0.75]{1};

    \begin{scope}[shift={(3.5,0)}]
    
    \foreach \i\j in {0/8,1/7,2/6}{
        \draw (\i + 0.5, 7.5) node{\j};
    }
    
    \foreach \i in {0,...,3}{
        \pgfmathsetmacro{\ii}{2*\i}
        \fill[yellow!5] (0,\ii) rectangle (1,\ii+1);
        
        \pgfmathsetmacro{\ini}{int(\ii)}
        \draw (0.5,\ii+0.5) node{$J_{\ini,8}$};
    }
    
    \foreach \i in {0,...,2}{
        \pgfmathsetmacro{\ii}{2*\i+1}
        \fill[yellow!5] (1,\ii) rectangle (2,\ii+1);
        
        \pgfmathsetmacro{\ini}{int(\ii)}
        \draw (1.5,\ii+0.5) node{$J_{\ini,7}$};
    }
    
    \foreach \i in {0,...,2}{
        \pgfmathsetmacro{\ii}{2*\i}
        \fill[yellow!5] (2,\ii) rectangle (3,\ii+1);
        
        \pgfmathsetmacro{\ini}{int(\ii)}
        \draw (2.5,\ii+0.5) node{$J_{\ini,6}$};
    }
    
    \draw[step = 1, black, thin] (0,0) grid (3,7);
    
    \draw[red,thick,->] (0.5, 5) -- (0.5, 6);
    \draw (0.65, 5.5) node[red, scale = 0.75]{5};
    
    \draw[red,thick,->] (1.5, 2) -- (1.5, 3);
    \draw (1.65, 2.5) node[red, scale = 0.75]{2};
    
    \draw[blue,thick,->] (1.25, 3.75) --(0.75, 4.25);
    \draw (1.15, 4.15) node[blue, scale = 0.75]{1};
    
    \draw[blue,thick,->] (2.25, 0.75) --(1.75, 1.25);
    \draw (2.15, 1.15) node[blue, scale = 0.75]{1};
    
    \end{scope}
    
    \begin{scope}[shift={(7,0)}]
    
    \foreach \i\j in {0/8,1/7,2/6}{
        \draw (\i + 0.5, 7.5) node{\j};
    }
    
    \foreach \i in {0,...,3}{
        \pgfmathsetmacro{\ii}{2*\i}
        \fill[yellow!5] (0,\ii) rectangle (1,\ii+1);
        
        \pgfmathsetmacro{\ini}{int(\ii)}
        \draw (0.5,\ii+0.5) node{$J_{\ini,8}$};
    }
    
    \foreach \i in {0,...,2}{
        \pgfmathsetmacro{\ii}{2*\i+1}
        \fill[yellow!5] (1,\ii) rectangle (2,\ii+1);
        
        \pgfmathsetmacro{\ini}{int(\ii)}
        \draw (1.5,\ii+0.5) node{$J_{\ini,7}$};
    }
    
    \foreach \i in {0,...,2}{
        \pgfmathsetmacro{\ii}{2*\i}
        \fill[yellow!5] (2,\ii) rectangle (3,\ii+1);
        
        \pgfmathsetmacro{\ini}{int(\ii)}
        \draw (2.5,\ii+0.5) node{$J_{\ini,6}$};
    }
    
    \draw[step = 1, black, thin] (0,0) grid (3,7);
    
    \draw[red,thick,->] (0.5, 5) -- (0.5, 6);
    \draw (0.65, 5.5) node[red, scale = 0.75]{5};
    
    \draw[red,thick,->] (2.5, 1) -- (2.5, 2);
    \draw (2.65, 1.5) node[red, scale = 0.75]{1};
    
    \draw[blue,thick,->] (1.25, 3.75) --(0.75, 4.25);
    \draw (1.15, 4.15) node[blue, scale = 0.75]{1};
    
    \draw[blue,thick,->] (2.25, 2.75) --(1.75, 3.25);
    \draw (2.15, 3.15) node[blue, scale = 0.75]{1};
    
    \end{scope}
    
    \begin{scope}[shift={(10.5,0)}]
    
    \foreach \i\j in {0/8,1/7,2/6}{
        \draw (\i + 0.5, 7.5) node{\j};
    }
    
    \foreach \i in {0,...,3}{
        \pgfmathsetmacro{\ii}{2*\i}
        \fill[yellow!5] (0,\ii) rectangle (1,\ii+1);
        
        \pgfmathsetmacro{\ini}{int(\ii)}
        \draw (0.5,\ii+0.5) node{$J_{\ini,8}$};
    }
    
    \foreach \i in {0,...,2}{
        \pgfmathsetmacro{\ii}{2*\i+1}
        \fill[yellow!5] (1,\ii) rectangle (2,\ii+1);
        
        \pgfmathsetmacro{\ini}{int(\ii)}
        \draw (1.5,\ii+0.5) node{$J_{\ini,7}$};
    }
    
    \foreach \i in {0,...,2}{
        \pgfmathsetmacro{\ii}{2*\i}
        \fill[yellow!5] (2,\ii) rectangle (3,\ii+1);
        
        \pgfmathsetmacro{\ini}{int(\ii)}
        \draw (2.5,\ii+0.5) node{$J_{\ini,6}$};
    }
    
    \draw[step = 1, black, thin] (0,0) grid (3,7);
    
    \draw[red,thick,->] (1.5, 4) -- (1.5, 5);
    \draw (1.65, 4.5) node[red, scale = 0.75]{4};
    
    \draw[red,thick,->] (1.5, 2) -- (1.5, 3);
    \draw (1.65, 2.5) node[red, scale = 0.75]{2};
    
    \draw[blue,thick,->] (1.25, 5.75) --(0.75, 6.25);
    \draw (1.15, 6.15) node[blue, scale = 0.75]{1};
    
    \draw[blue,thick,->] (2.25, 0.75) --(1.75, 1.25);
    \draw (2.15, 1.15) node[blue, scale = 0.75]{1};
    
    \end{scope}
    
    \begin{scope}[shift={(14,0)}]
    
    \foreach \i\j in {0/8,1/7,2/6}{
        \draw (\i + 0.5, 7.5) node{\j};
    }
    
    \foreach \i in {0,...,3}{
        \pgfmathsetmacro{\ii}{2*\i}
        \fill[yellow!5] (0,\ii) rectangle (1,\ii+1);
        
        \pgfmathsetmacro{\ini}{int(\ii)}
        \draw (0.5,\ii+0.5) node{$J_{\ini,8}$};
    }
    
    \foreach \i in {0,...,2}{
        \pgfmathsetmacro{\ii}{2*\i+1}
        \fill[yellow!5] (1,\ii) rectangle (2,\ii+1);
        
        \pgfmathsetmacro{\ini}{int(\ii)}
        \draw (1.5,\ii+0.5) node{$J_{\ini,7}$};
    }
    
    \foreach \i in {0,...,2}{
        \pgfmathsetmacro{\ii}{2*\i}
        \fill[yellow!5] (2,\ii) rectangle (3,\ii+1);
        
        \pgfmathsetmacro{\ini}{int(\ii)}
        \draw (2.5,\ii+0.5) node{$J_{\ini,6}$};
    }
    
    \draw[step = 1, black, thin] (0,0) grid (3,7);
    
    \draw[red,thick,->] (1.5, 4) -- (1.5, 5);
    \draw (1.65, 4.5) node[red, scale = 0.75]{4};
    
    \draw[red,thick,->] (2.5, 1) -- (2.5, 2);
    \draw (2.65, 1.5) node[red, scale = 0.75]{1};
    
    \draw[blue,thick,->] (1.25, 5.75) --(0.75, 6.25);
    \draw (1.15, 6.15) node[blue, scale = 0.75]{1};
    
    \draw[blue,thick,->] (2.25, 2.75) --(1.75, 3.25);
    \draw (2.15, 3.15) node[blue, scale = 0.75]{1};
    \end{scope}
    
    \begin{scope}[shift={(17.5,0)}]
    
    \foreach \i\j in {0/8,1/7,2/6}{
        \draw (\i + 0.5, 7.5) node{\j};
    }
    
    \foreach \i in {0,...,3}{
        \pgfmathsetmacro{\ii}{2*\i}
        \fill[yellow!5] (0,\ii) rectangle (1,\ii+1);
        
        \pgfmathsetmacro{\ini}{int(\ii)}
        \draw (0.5,\ii+0.5) node{$J_{\ini,8}$};
    }
    
    \foreach \i in {0,...,2}{
        \pgfmathsetmacro{\ii}{2*\i+1}
        \fill[yellow!5] (1,\ii) rectangle (2,\ii+1);
        
        \pgfmathsetmacro{\ini}{int(\ii)}
        \draw (1.5,\ii+0.5) node{$J_{\ini,7}$};
    }
    
    \foreach \i in {0,...,2}{
        \pgfmathsetmacro{\ii}{2*\i}
        \fill[yellow!5] (2,\ii) rectangle (3,\ii+1);
        
        \pgfmathsetmacro{\ini}{int(\ii)}
        \draw (2.5,\ii+0.5) node{$J_{\ini,6}$};
    }
    
    \draw[step = 1, black, thin] (0,0) grid (3,7);
    
      \draw[red,thick,->] (2.5, 3) -- (2.5, 4);
    \draw (2.65, 3.5) node[red, scale = 0.75]{3};
    
    \draw[red,thick,->] (2.5, 1) -- (2.5, 2);
    \draw (2.65, 1.5) node[red, scale = 0.75]{1};
    
    \draw[blue,thick,->] (1.25, 5.75) --(0.75, 6.25);
    \draw (1.15, 6.15) node[blue, scale = 0.75]{1};
    
    \draw[blue,thick,->] (2.25, 4.75) --(1.75, 5.25);
    \draw (2.15, 5.15) node[blue, scale = 0.75]{1};
    
    \end{scope}

    \pgfmathsetmacro{\min}{-9}
    
    \foreach \i in {0,1,...,6}{
        \pgfmathsetmacro{\ii}{int(6-\i)}
        \draw (-0.5,\i+0.5 + \min) node{\ii};
    }
    
    \foreach \i in {0,1,2}{
        \draw (\i + 0.5, 7.5 + \min) node{\i};
    }
    
    \foreach \i in {0,...,3}{
        \pgfmathsetmacro{\ii}{2*\i}
        \fill[yellow!5] (0,\ii + \min) rectangle (1,\ii+1 + \min);
        
        \pgfmathsetmacro{\ini}{int(6-\ii)}
        \draw (0.5,\ii+0.5 + \min) node{\scriptsize $P(\ini,0)$};
    }
    
    \foreach \i in {0,...,2}{
        \pgfmathsetmacro{\ii}{2*\i+1}
        \fill[yellow!5] (1,\ii + \min) rectangle (2,\ii+1 + \min);
        
        \pgfmathsetmacro{\ini}{int(6-\ii)}
        \draw (1.5,\ii+0.5 + \min) node{\scriptsize $P(\ini,1)$};
    }
    
    \foreach \i in {0,...,2}{
        \pgfmathsetmacro{\ii}{2*\i}
        \fill[yellow!5] (2,\ii + \min) rectangle (3,\ii+1 + \min);
        
        \pgfmathsetmacro{\ini}{int(6-\ii)}
        \draw (2.5,\ii+0.5 + \min) node{\scriptsize $P(\ini,2)$};
    }

    \draw[step = 1cm, black, thin] (0,-9) grid (3,-2);
    
    \draw[red,thick,->] (2.5, 4 + \min) -- (2.5, 3 + \min);
    \draw (2.65, 3.5 + \min) node[red, scale = 0.75]{3};
    
    \draw[red,thick,->] (2.5, 2 + \min) -- (2.5, 1 + \min);
    \draw (2.65, 1.5 + \min) node[red, scale = 0.75]{5};
    
    \draw[blue,thick,->] (0.75, 6.25 + \min) -- (1.25, 5.75 + \min);
    \draw (1.15, 6.15 + \min) node[blue, scale = 0.75]{1};
    
    \draw[blue,thick,->] (1.75, 5.25 + \min) -- (2.25, 4.75 + \min);
    \draw (2.15, 5.15 + \min) node[blue, scale = 0.75]{1};
    
    \begin{scope}[shift={(3.5,0)}]
    \foreach \i in {0,1,2}{
        \draw (\i + 0.5, 7.5 + \min) node{\i};
    }
    
    \foreach \i in {0,...,3}{
        \pgfmathsetmacro{\ii}{2*\i}
        \fill[yellow!5] (0,\ii + \min) rectangle (1,\ii+1 + \min);
        
        \pgfmathsetmacro{\ini}{int(6-\ii)}
        \draw (0.5,\ii+0.5 + \min) node{\scriptsize $P(\ini,0)$};
    }
    
    \foreach \i in {0,...,2}{
        \pgfmathsetmacro{\ii}{2*\i+1}
        \fill[yellow!5] (1,\ii + \min) rectangle (2,\ii+1 + \min);
        
        \pgfmathsetmacro{\ini}{int(6-\ii)}
        \draw (1.5,\ii+0.5 + \min) node{\scriptsize $P(\ini,1)$};
    }
    
    \foreach \i in {0,...,2}{
        \pgfmathsetmacro{\ii}{2*\i}
        \fill[yellow!5] (2,\ii + \min) rectangle (3,\ii+1 + \min);
        
        \pgfmathsetmacro{\ini}{int(6-\ii)}
        \draw (2.5,\ii+0.5 + \min) node{\scriptsize $P(\ini,2)$};
    }

    \draw[step = 1cm, black, thin] (0,-9) grid (3,-2);
    
     \draw[red,thick,->] (1.5, 5 + \min) -- (1.5, 4 + \min);
    \draw (1.65, 4.5 + \min) node[red, scale = 0.75]{2};
    
    \draw[red,thick,->] (2.5, 2 + \min) -- (2.5, 1 + \min);
    \draw (2.65, 1.5 + \min) node[red, scale = 0.75]{5};
    
    \draw[blue,thick,->] (0.75, 6.25 + \min) -- (1.25, 5.75 + \min);
    \draw (1.15, 6.15 + \min) node[blue, scale = 0.75]{1};
    
    \draw[blue,thick,->] (1.75, 3.25 + \min) -- (2.25, 2.75 + \min);
    \draw (2.15, 3.15 + \min) node[blue, scale = 0.75]{1};
    
    \end{scope}
    
    \begin{scope}[shift={(7,0)}]
    \foreach \i in {0,1,2}{
        \draw (\i + 0.5, 7.5 + \min) node{\i};
    }
    
    \foreach \i in {0,...,3}{
        \pgfmathsetmacro{\ii}{2*\i}
        \fill[yellow!5] (0,\ii + \min) rectangle (1,\ii+1 + \min);
        
        \pgfmathsetmacro{\ini}{int(6-\ii)}
        \draw (0.5,\ii+0.5 + \min) node{\scriptsize $P(\ini,0)$};
    }
    
    \foreach \i in {0,...,2}{
        \pgfmathsetmacro{\ii}{2*\i+1}
        \fill[yellow!5] (1,\ii + \min) rectangle (2,\ii+1 + \min);
        
        \pgfmathsetmacro{\ini}{int(6-\ii)}
        \draw (1.5,\ii+0.5 + \min) node{\scriptsize $P(\ini,1)$};
    }
    
    \foreach \i in {0,...,2}{
        \pgfmathsetmacro{\ii}{2*\i}
        \fill[yellow!5] (2,\ii + \min) rectangle (3,\ii+1 + \min);
        
        \pgfmathsetmacro{\ini}{int(6-\ii)}
        \draw (2.5,\ii+0.5 + \min) node{\scriptsize $P(\ini,2)$};
    }

    \draw[step = 1cm, black, thin] (0,-9) grid (3,-2);
    
    \draw[red,thick,->] (0.5, 6 + \min) -- (0.5, 5 + \min);
    \draw (0.65, 5.5 + \min) node[red, scale = 0.75]{1};
    
    \draw[red,thick,->] (2.5, 2 + \min) -- (2.5, 1 + \min);
    \draw (2.65, 1.5 + \min) node[red, scale = 0.75]{5};
    
    \draw[blue,thick,->] (0.75, 4.25 + \min) -- (1.25, 3.75 + \min);
    \draw (1.15, 4.15 + \min) node[blue, scale = 0.75]{1};
    
    \draw[blue,thick,->] (1.75, 3.25 + \min) -- (2.25, 2.75 + \min);
    \draw (2.15, 3.15 + \min) node[blue, scale = 0.75]{1};
    
    \end{scope}
    
    \begin{scope}[shift={(10.5,0)}]
    \foreach \i in {0,1,2}{
        \draw (\i + 0.5, 7.5 + \min) node{\i};
    }
    
    \foreach \i in {0,...,3}{
        \pgfmathsetmacro{\ii}{2*\i}
        \fill[yellow!5] (0,\ii + \min) rectangle (1,\ii+1 + \min);
        
        \pgfmathsetmacro{\ini}{int(6-\ii)}
        \draw (0.5,\ii+0.5 + \min) node{\scriptsize $P(\ini,0)$};
    }
    
    \foreach \i in {0,...,2}{
        \pgfmathsetmacro{\ii}{2*\i+1}
        \fill[yellow!5] (1,\ii + \min) rectangle (2,\ii+1 + \min);
        
        \pgfmathsetmacro{\ini}{int(6-\ii)}
        \draw (1.5,\ii+0.5 + \min) node{\scriptsize $P(\ini,1)$};
    }
    
    \foreach \i in {0,...,2}{
        \pgfmathsetmacro{\ii}{2*\i}
        \fill[yellow!5] (2,\ii + \min) rectangle (3,\ii+1 + \min);
        
        \pgfmathsetmacro{\ini}{int(6-\ii)}
        \draw (2.5,\ii+0.5 + \min) node{\scriptsize $P(\ini,2)$};
    }

    \draw[step = 1cm, black, thin] (0,-9) grid (3,-2);
    
     \draw[red,thick,->] (1.5, 5 + \min) -- (1.5, 4 + \min);
    \draw (1.65, 4.5 + \min) node[red, scale = 0.75]{2};
    
    \draw[red,thick,->] (1.5, 3 + \min) -- (1.5, 2 + \min);
    \draw (1.65, 2.5 + \min) node[red, scale = 0.75]{4};
    
    \draw[blue,thick,->] (0.75, 6.25 + \min) -- (1.25, 5.75 + \min);
    \draw (1.15, 6.15 + \min) node[blue, scale = 0.75]{1};
    
    \draw[blue,thick,->] (1.75, 1.25 + \min) -- (2.25, 0.75 + \min);
    \draw (2.15, 1.15 + \min) node[blue, scale = 0.75]{1};
    
    \end{scope}
    
    \begin{scope}[shift={(14,0)}]
    \foreach \i in {0,1,2}{
        \draw (\i + 0.5, 7.5 + \min) node{\i};
    }
    
    \foreach \i in {0,...,3}{
        \pgfmathsetmacro{\ii}{2*\i}
        \fill[yellow!5] (0,\ii + \min) rectangle (1,\ii+1 + \min);
        
        \pgfmathsetmacro{\ini}{int(6-\ii)}
        \draw (0.5,\ii+0.5 + \min) node{\scriptsize $P(\ini,0)$};
    }
    
    \foreach \i in {0,...,2}{
        \pgfmathsetmacro{\ii}{2*\i+1}
        \fill[yellow!5] (1,\ii + \min) rectangle (2,\ii+1 + \min);
        
        \pgfmathsetmacro{\ini}{int(6-\ii)}
        \draw (1.5,\ii+0.5 + \min) node{\scriptsize $P(\ini,1)$};
    }
    
    \foreach \i in {0,...,2}{
        \pgfmathsetmacro{\ii}{2*\i}
        \fill[yellow!5] (2,\ii + \min) rectangle (3,\ii+1 + \min);
        
        \pgfmathsetmacro{\ini}{int(6-\ii)}
        \draw (2.5,\ii+0.5 + \min) node{\scriptsize $P(\ini,2)$};
    }

    \draw[step = 1cm, black, thin] (0,-9) grid (3,-2);
    
    \draw[red,thick,->] (0.5, 6 + \min) -- (0.5, 5 + \min);
    \draw (0.65, 5.5 + \min) node[red, scale = 0.75]{1};
    
    \draw[red,thick,->] (1.5, 3 + \min) -- (1.5, 2 + \min);
    \draw (1.65, 2.5 + \min) node[red, scale = 0.75]{4};
    
    \draw[blue,thick,->] (0.75, 4.25 + \min) -- (1.25, 3.75 + \min);
    \draw (1.15, 4.15 + \min) node[blue, scale = 0.75]{1};
    
    \draw[blue,thick,->] (1.75, 1.25 + \min) -- (2.25, 0.75 + \min);
    \draw (2.15, 1.15 + \min) node[blue, scale = 0.75]{1};
    
    \end{scope}
    
    \begin{scope}[shift={(17.5,0)}]
    \foreach \i in {0,1,2}{
        \draw (\i + 0.5, 7.5 + \min) node{\i};
    }
    
    \foreach \i in {0,...,3}{
        \pgfmathsetmacro{\ii}{2*\i}
        \fill[yellow!5] (0,\ii + \min) rectangle (1,\ii+1 + \min);
        
        \pgfmathsetmacro{\ini}{int(6-\ii)}
        \draw (0.5,\ii+0.5 + \min) node{\scriptsize $P(\ini,0)$};
    }
    
    \foreach \i in {0,...,2}{
        \pgfmathsetmacro{\ii}{2*\i+1}
        \fill[yellow!5] (1,\ii + \min) rectangle (2,\ii+1 + \min);
        
        \pgfmathsetmacro{\ini}{int(6-\ii)}
        \draw (1.5,\ii+0.5 + \min) node{\scriptsize $P(\ini,1)$};
    }
    
    \foreach \i in {0,...,2}{
        \pgfmathsetmacro{\ii}{2*\i}
        \fill[yellow!5] (2,\ii + \min) rectangle (3,\ii+1 + \min);
        
        \pgfmathsetmacro{\ini}{int(6-\ii)}
        \draw (2.5,\ii+0.5 + \min) node{\scriptsize $P(\ini,2)$};
    }

    \draw[step = 1cm, black, thin] (0,-9) grid (3,-2);
    
    
    \draw[red,thick,->] (0.5, 6 + \min) -- (0.5, 5 + \min);
    \draw (0.65, 5.5 + \min) node[red, scale = 0.75]{1};
    
    \draw[red,thick,->] (0.5, 4 + \min) -- (0.5, 3 + \min);
    \draw (0.65, 3.5 + \min) node[red, scale = 0.75]{3};
    
    \draw[blue,thick,->] (0.75, 2.25 + \min) -- (1.25, 1.75 + \min);
    \draw (1.15, 2.15 + \min) node[blue, scale = 0.75]{1};
    
    \draw[blue,thick,->] (1.75, 1.25 + \min) -- (2.25, 0.75 + \min);
    \draw (2.15, 1.15 + \min) node[blue, scale = 0.75]{1};
    
    \end{scope}

    \end{tikzpicture}
    }
    \caption{Top: All paths  $\pi \in  \Gamma_{J^\ast}(6,8,0,6)$. Bottom: All paths $\Lambda(\pi) \in \Gamma_P(6,8,0,6)$. The paths are lined up so that for each path $\pi$ in the top row, $\Lambda(\pi)$ appears in the bottom row. }
    \label{jp}
\end{figure}
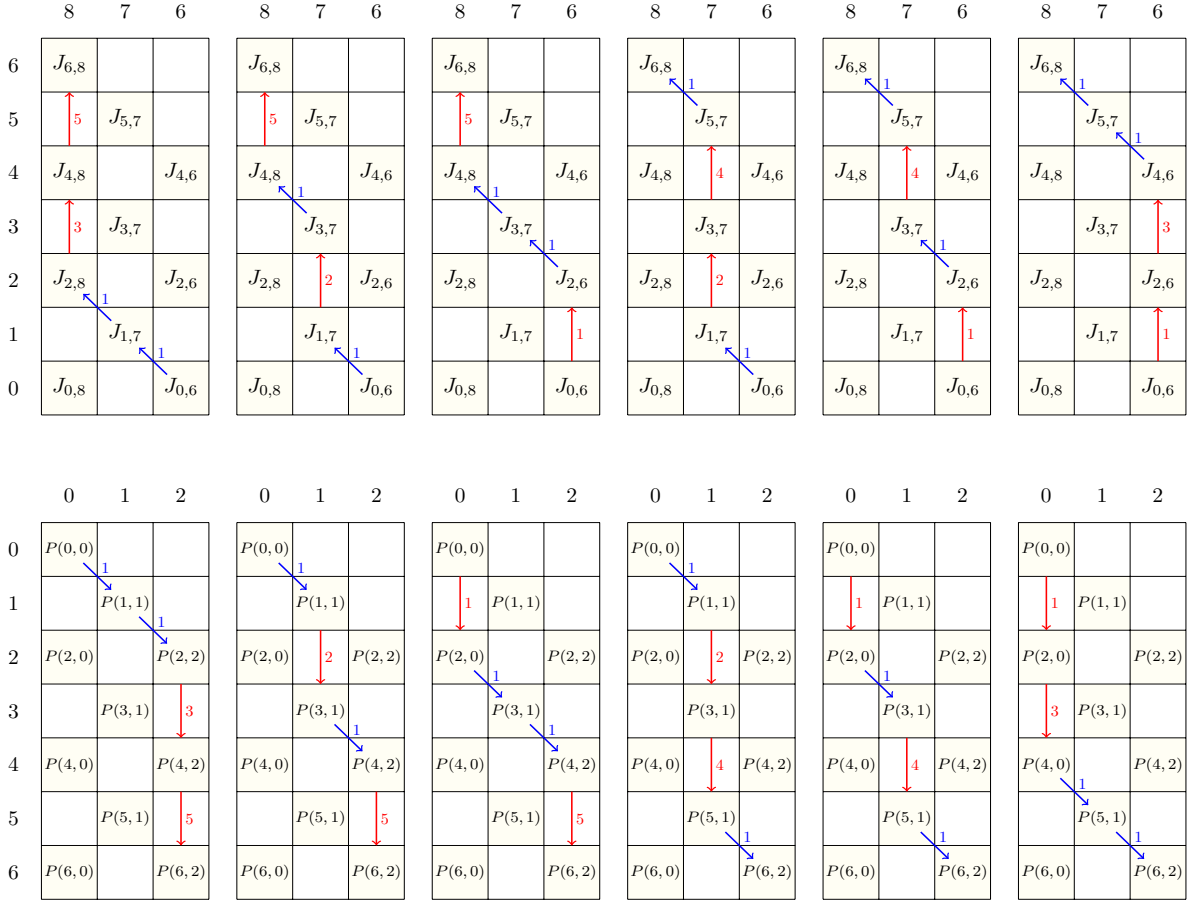



\section{Recursions for the $P$ and $Q$ numbers - Proof of Lemma \ref{lem:PQ_rec}}
\label{app:P_Q}

\noindent Earlier work established the following properties of the $P$ and $Q$ numbers.
\begin{theorem}[\citet{pq}]
The elements of the matrices $P$ and $Q$ satisfy
\begin{align}
    b \cdot P(a,b) &= a \cdot P(a-1,b-1), & &a \geq 1, 1 \leq b \leq a, \label{pq1}\\
    P(a+1,b) &= P(a,b-1) + (b+1) \cdot P(a,b+1), & &a \geq 0, 1 \leq b \leq a,\label{pq2}\\
    Q(a,b) &= P(a,b) + (b+1) \cdot Q(a-1,b+1), & &a\geq 1, 1\leq b\leq a, \label{pq3}\\ 
    Q(a,b) &=  a \cdot Q(a-2,b)+ Q(a-1,b-1),  & &a\geq 2, 1\leq b\leq a. \label{pq4}
\end{align}
\end{theorem}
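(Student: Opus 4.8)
The plan is to prove all four identities by direct substitution of the explicit closed forms for $P$ and $Q$ recorded in Definition \ref{defn:PQ}, reducing each recurrence to an elementary binomial-coefficient identity. Throughout I abbreviate $m := \tfrac{a-b}{2}$ and $s := \tfrac{a+b}{2}$, so that whenever $a \equiv b \pmod 2$ and $a \geq b$ one has $a = s+m$, $b = s-m$, and
\[
P(a,b) = \frac{a!}{b!\,m!\,2^{m}}, \qquad Q(a,b) = \frac{s!}{b!}\,2^{-m}\sum_{i=0}^{m}\binom{a+1}{i}.
\]
The first step in each case is a parity bookkeeping check: I verify that the terms appearing in a given identity are simultaneously nonzero (forcing a common parity of $a$ and $b$) or simultaneously zero, so that the ``otherwise $=0$'' branch of Definition \ref{defn:PQ} makes both sides vanish in the excluded parity class. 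This disposes of the off-parity cases immediately and lets me assume $a\equiv b \pmod 2$ in the main computation; the boundary constraints $1\le b\le a$ only ensure the shifted indices stay in range and cause no difficulty.

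For \eqref{pq1} and \eqref{pq2} the substitution is pure factorial algebra with no sums. For \eqref{pq1}, both $P(a,b)$ and $P(a-1,b-1)$ share the same value of $m$, so $b\,P(a,b)=\frac{a!}{(b-1)!\,m!\,2^m}=a\,P(a-1,b-1)$ on sight. For \eqref{pq2} one is in the off-diagonal parity $a\not\equiv b$, so $\tfrac{(a+1)-b}{2}=m$, $\tfrac{a-(b-1)}{2}=m$, and $\tfrac{a-(b+1)}{2}=m-1$; factoring $\frac{a!}{b!\,m!\,2^m}$ out of the right-hand side leaves the scalar factor $b + 2m = a+1$, which rebuilds $P(a+1,b)$ exactly. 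These two are the easy cases.

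The two $Q$-identities \eqref{pq3} and \eqref{pq4} are the substantive ones, and both collapse onto a single combinatorial lemma. After substituting the closed forms and factoring out the common prefactor, identity \eqref{pq3} becomes the partial-sum Pascal identity
\[
\sum_{i=0}^{m}\binom{a+1}{i} = 2\sum_{i=0}^{m-1}\binom{a}{i} + \binom{a}{m},
\]
which I would establish in one line from $\binom{a+1}{i}=\binom{a}{i}+\binom{a}{i-1}$ and reindexing. Identity \eqref{pq4} reduces, after the analogous factoring together with one application of Pascal's rule to rewrite $\sum\binom{a}{i}$ in terms of $\binom{a-1}{\cdot}$ sums, to the single statement $m\binom{a}{m}=a\binom{a-1}{m-1}$, i.e. the absorption identity. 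Thus once the partial-sum lemma and the absorption identity are in hand, all four recurrences follow.

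The main obstacle I anticipate is not any single step but the bookkeeping in \eqref{pq3} and \eqref{pq4}: one must track how $m$ and $s$ shift when $(a,b)$ is replaced by $(a-1,b+1)$, $(a-2,b)$, or $(a-1,b-1)$, and correctly rewrite each shifted $Q$-prefactor relative to the target $\frac{s!}{b!}2^{-m}$ before the sums can be compared. Getting these index shifts and the accompanying powers of $2$ aligned is where a careless error would hide; once the prefactors are correctly normalized, the remaining content is exactly the two elementary binomial identities above. An alternative, should the direct route prove unwieldy, is to encode $P$ and $Q$ through the generating-function / Bessel-polynomial identities of \citet{pq} and read the recurrences off the corresponding differential relations, but I expect the direct substitution to be both shorter and fully self-contained.
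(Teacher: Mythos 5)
Your plan is correct, but it takes a genuinely different route from the paper: the paper does not prove this theorem at all---it imports the four identities verbatim from \citet{pq} (``Earlier work established the following properties\dots'') and only proves the downstream Lemma \ref{lem:PQ_rec} from them. You instead verify the identities directly from the closed forms in Definition \ref{defn:PQ}, and every reduction you announce checks out: with $m=(a-b)/2$ and $s=(a+b)/2$, \eqref{pq1} and \eqref{pq2} are the pure factorial computations you describe (the scalar $b+2m=a+1$ is exactly right); \eqref{pq3}, after dividing by the common prefactor $\frac{s!}{b!}2^{-m}$ and noting $\frac{a!}{m!\,s!}=\binom{a}{m}$ since $m+s=a$, is precisely your partial-sum Pascal identity $\sum_{i=0}^{m}\binom{a+1}{i}=\binom{a}{m}+2\sum_{i=0}^{m-1}\binom{a}{i}$; and \eqref{pq4}, after multiplying through by $b!\,2^{m}/(s-1)!$ (which turns it into $s\sum_{i=0}^{m}\binom{a+1}{i}=2a\sum_{i=0}^{m-1}\binom{a-1}{i}+b\sum_{i=0}^{m}\binom{a}{i}$), collapses via the same partial-sum lemma and one Pascal step to the absorption identity $m\binom{a}{m}=a\binom{a-1}{m-1}$, exactly as you claim. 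Two small points to fold into the write-up: the boundary cases do lean on the ``otherwise $=0$'' branch rather than being entirely trivial (e.g.\ $b=a$ in \eqref{pq3} makes $Q(a-1,b+1)$ vanish through the $a\ge b$ clause, consistently with the empty sum $\sum_{i=0}^{-1}$ in your reduction), and you have silently---and correctly---repaired a typo in \eqref{p_explicit}, where the factor $\left(\frac{a-b}{2}\right)$ in the denominator should be $\left(\frac{a-b}{2}\right)!$. As for what each approach buys: the paper's citation keeps the appendix short and defers to the combinatorial theory of \citet{pq}, while your direct substitution makes the argument self-contained and simultaneously certifies that the explicit formulas \eqref{p_explicit}--\eqref{q_explicit} really do satisfy the recursions that feed Lemma \ref{lem:PQ_rec}, a consistency check the paper otherwise takes on faith.
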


\begin{proof}[Of Lemma \ref{lem:PQ_rec}]  Equation (\ref{pq2}) tells us that $P(a,b) = P(a-1,b-1) + (b+1) \cdot P(a-1,b+1)$ for $a \geq 1, 1 \leq b \leq a-1$, while equation (\ref{pq1}) tells us that $P(a-1,b+1) = \frac{(a-1)}{(b+1)} \cdot P(a-2,b)$ for $a \geq 2, 0 \leq b \leq a-2$. Putting these together, we get the following recurrence equation for $P(a,b)$:
\begin{align*}
    P(a,b) &= P(a-1,b-1) + (b+1) \left( \frac{(a-1)}{(b+1)} \cdot P(a-2,b) \right) \\
    &= (a-1) \cdot  P(a-2,b) + P(a-1,b-1),
\end{align*}
which holds for $a \geq 3, 1 \leq b \leq a-2$. Further, looking at equation (\ref{pq4}), we see that the recursion for the $Q$ numbers is very similar to that of the $P$ numbers, but with a coefficient of $a$ rather than $(a-1)$. This establishes Lemma \ref{lem:PQ_rec}.
\end{proof}

\section{Details of Network Architectures}
\label{app:network_architectures}
This section details the architectures of the 45 different network architectures used to produce \Cref{fig:simulations}.

\renewcommand{\arraystretch}{1}

\begin{table}[h]
    \centering
    \begin{tabular}{|c|c|c|c|c|c|c|c|}
    \hline
        \multirow{2}{*}{\#} & \multirow{2}{*}{Depth} & Avg. & \multicolumn{2}{c|}{\# Parameters}  &  \multicolumn{3}{c|}{Avg. Test Accuracy $\pm$ Standard Deviation}  \\ \cline{4-8}
         &  & Width & (F)MNIST & CIFAR & MNIST & FMNIST & CIFAR \\ \hline \hline
        1 & 2 & 50 & 58880 & 165790 & 0.924 $\pm$ 0.007 & 0.79 $\pm$ 0.02 & 0.211 $\pm$ 0.029 \\ \hline
        2 & 2 & 85 & 57350 & 135510 & 0.837 $\pm$ 0.051 & 0.709 $\pm$ 0.028 & 0.276 $\pm$ 0.011 \\ \hline
        3 & 2 & 200 & 19930 & 54250 & 0.878 $\pm$ 0.009 & 0.721 $\pm$ 0.098 & 0.163 $\pm$ 0.048 \\ \hline
        4 & 2 & 25 & 138300 & 201600 & 0.94 $\pm$ 0.004 & 0.812 $\pm$ 0.009 & 0.229 $\pm$ 0.025 \\ \hline
        5 & 2 & 125 & 31725 & 88925 & 0.89 $\pm$ 0.005 & 0.768 $\pm$ 0.013 & 0.199 $\pm$ 0.027 \\ \hline
        6 & 3 & 25 & 43990 & 114550 & 0.928 $\pm$ 0.008 & 0.812 $\pm$ 0.013 & 0.167 $\pm$ 0.022 \\ \hline
        7 & 3 & 50 & 62830 & 173280 & 0.916 $\pm$ 0.002 & 0.79 $\pm$ 0.012 & 0.224 $\pm$ 0.019 \\ \hline
        8 & 3 & 100 & 59700 & 96756 & 0.952 $\pm$ 0.004 & 0.839 $\pm$ 0.003 & 0.27 $\pm$ 0.016 \\ \hline
        9 & 3 & 67.67 & 87200 & 309900 & 0.924 $\pm$ 0.006 & 0.799 $\pm$ 0.011 & 0.281 $\pm$ 0.011 \\ \hline
        10 & 3 & 50 & 17310 & 189100 & 0.553 $\pm$ 0.181 & 0.599 $\pm$ 0.119 & 0.263 $\pm$ 0.022 \\ \hline
        11 & 4 & 30 & 369400 & 366150 & 0.877 $\pm$ 0.052 & 0.757 $\pm$ 0.026 & 0.192 $\pm$ 0.029 \\ \hline
        12 & 4 & 75 & 99400 & 105060 & 0.957 $\pm$ 0.003 & 0.842 $\pm$ 0.006 & 0.23 $\pm$ 0.025 \\ \hline
        13 & 5 & 21 & 74700 & 51630 & 0.931 $\pm$ 0.005 & 0.811 $\pm$ 0.009 & 0.146 $\pm$ 0.029 \\ \hline
        14 & 6 & 55 & 8840 & 976400 & 0.715 $\pm$ 0.088 & 0.569 $\pm$ 0.146 & 0.337 $\pm$ 0.008 \\ \hline
        15 & 6 & 87.5 & 169400 & 398200 & 0.949 $\pm$ 0.008 & 0.833 $\pm$ 0.007 & 0.332 $\pm$ 0.018 \\ \hline
        16 & 10 & 10 & 79020 & 180010 & 0.951 $\pm$ 0.003 & 0.832 $\pm$ 0.01 & 0.278 $\pm$ 0.018 \\ \hline
        17 & 10 & 100 & 64850 & 122050 & 0.939 $\pm$ 0.004 & 0.824 $\pm$ 0.008 & 0.262 $\pm$ 0.059 \\ \hline
        18 & 10 & 200 & 54170 & 262060 & 0.933 $\pm$ 0.005 & 0.81 $\pm$ 0.014 & 0.335 $\pm$ 0.016 \\ \hline
        19 & 10 & 17.5 & 49920 & 1002300 & 0.794 $\pm$ 0.052 & 0.648 $\pm$ 0.106 & 0.184 $\pm$ 0.026 \\ \hline
        20 & 11 & 34.55 & 518800 & 31720 & 0.955 $\pm$ 0.006 & 0.835 $\pm$ 0.011 & 0.14 $\pm$ 0.037 \\ \hline
    \end{tabular}
    \caption[Network Architecture Summary: Networks 1-20]{Summary of the architectures of the first 20 neural networks used in \Cref{fig:simulations}, as well as their performance on the test datasets. Note that the number of parameters differs between the (F)MNIST and CIFAR-10 datasets due to the fact that CIFAR-10 images are in colour requiring 3 colour channels, while the MNIST and FMNIST images are in grayscale. This table is continued in \Cref{tab:summary_2}. }
    \label{tab:summary_1}
\end{table}

\begin{table}[h]
    \centering
    \begin{tabular}{|c|c|c|c|c|c|c|c|}
    \hline
        \multirow{2}{*}{\#} & \multirow{2}{*}{Depth} & Avg. & \multicolumn{2}{c|}{\# Parameters}  &  \multicolumn{3}{c|}{Average Score $\pm$ Standard Deviation}  \\ \cline{4-8}
         &  & Width & (F)MNIST & CIFAR & MNIST & FMNIST & CIFAR \\ \hline \hline
        21 & 11 & 35 & 21100 & 269195 & 0.93 $\pm$ 0.005 & 0.823 $\pm$ 0.007 & 0.363 $\pm$ 0.016 \\ \hline
        22 & 13 & 42 & 36420 & 328200 & 0.91 $\pm$ 0.008 & 0.789 $\pm$ 0.01 & 0.364 $\pm$ 0.016 \\ \hline
        23 & 15 & 30 & 41844 & 174100 & 0.92 $\pm$ 0.004 & 0.805 $\pm$ 0.011 & 0.349 $\pm$ 0.015 \\ \hline
        24 & 15 & 50 & 13860 & 235650 & 0.909 $\pm$ 0.005 & 0.8 $\pm$ 0.012 & 0.328 $\pm$ 0.02 \\ \hline
        25 & 15 & 75 & 16580 & 206848 & 0.927 $\pm$ 0.003 & 0.823 $\pm$ 0.007 & 0.359 $\pm$ 0.009 \\ \hline
        26 & 16 & 35 & 42200 & 159100 & 0.943 $\pm$ 0.004 & 0.838 $\pm$ 0.004 & 0.343 $\pm$ 0.021 \\ \hline
        27 & 16 & 22.5 & 198800 & 656400 & 0.963 $\pm$ 0.003 & 0.845 $\pm$ 0.01 & 0.37 $\pm$ 0.016 \\ \hline
        28 & 20 & 25 & 94900 & 323700 & 0.955 $\pm$ 0.002 & 0.843 $\pm$ 0.006 & 0.367 $\pm$ 0.006 \\ \hline
        29 & 20 & 50 & 60416 & 62340 & 0.951 $\pm$ 0.003 & 0.837 $\pm$ 0.005 & 0.163 $\pm$ 0.058 \\ \hline
        30 & 20 & 37.5 & 44700 & 156600 & 0.948 $\pm$ 0.003 & 0.834 $\pm$ 0.008 & 0.346 $\pm$ 0.028 \\ \hline
        31 & 23 & 31.30 & 194550 & 598200 & 0.927 $\pm$ 0.005 & 0.788 $\pm$ 0.008 & 0.17 $\pm$ 0.004 \\ \hline
        32 & 25 & 15 & 64050 & 48180 & 0.951 $\pm$ 0.002 & 0.84 $\pm$ 0.004 & 0.186 $\pm$ 0.071 \\ \hline
        33 & 25 & 75 & 55160 & 125880 & 0.899 $\pm$ 0.014 & 0.748 $\pm$ 0.033 & 0.274 $\pm$ 0.048 \\ \hline
        34 & 25 & 150 & 53760 & 64390 & 0.782 $\pm$ 0.077 & 0.676 $\pm$ 0.064 & 0.206 $\pm$ 0.041 \\ \hline
        35 & 28 & 35.71 & 74715 & 78300 & 0.953 $\pm$ 0.001 & 0.844 $\pm$ 0.001 & 0.244 $\pm$ 0.075 \\ \hline
        36 & 30 & 15 & 60860 & 152380 & 0.819 $\pm$ 0.08 & 0.719 $\pm$ 0.033 & 0.17 $\pm$ 0.02 \\ \hline
        37 & 30 & 30 & 18630 & 145280 & 0.862 $\pm$ 0.08 & 0.772 $\pm$ 0.017 & 0.168 $\pm$ 0.02 \\ \hline
        38 & 30 & 100 & 34360 & 146680 & 0.941 $\pm$ 0.003 & 0.826 $\pm$ 0.009 & 0.165 $\pm$ 0.022 \\ \hline
        39 & 30 & 26.67 & 659100 & 118560 & 0.932 $\pm$ 0.014 & 0.785 $\pm$ 0.011 & 0.175 $\pm$ 0.007 \\ \hline
        40 & 30 & 31.67 & 18435 & 52755 & 0.313 $\pm$ 0.131 & 0.349 $\pm$ 0.109 & 0.158 $\pm$ 0.026 \\ \hline
        41 & 35 & 40 & 86160 & 276600 & 0.753 $\pm$ 0.074 & 0.586 $\pm$ 0.11 & 0.148 $\pm$ 0.029 \\ \hline
        42 & 35 & 75 & 250800 & 450525 & 0.725 $\pm$ 0.163 & 0.608 $\pm$ 0.077 & 0.165 $\pm$ 0.007 \\ \hline
        43 & 40 & 50 & 137200 & 251600 & 0.522 $\pm$ 0.141 & 0.513 $\pm$ 0.089 & 0.167 $\pm$ 0.007 \\ \hline
        44 & 40 & 75 & 278925 & 422400 & 0.467 $\pm$ 0.123 & 0.466 $\pm$ 0.09 & 0.161 $\pm$ 0.022 \\ \hline
        45 & 50 & 50 & 162200 & 177680 & 0.242 $\pm$ 0.064 & 0.22 $\pm$ 0.042 & 0.161 $\pm$ 0.019 \\ \hline
    \end{tabular}
    \caption[Network Architecture Summary: Networks 21-45]{Continuation of \Cref{tab:summary_1} for networks 21 through 45.}
    \label{tab:summary_2}
\end{table}

\begin{table}[h]
    \centering
    \begin{tabular}{|c|l|}
    \hline
        \# & Hidden Layer Widths \\ \hline
        1 & 50, 50 \\ \hline
        2 & 85, 85 \\ \hline
        3 & 200, 200 \\ \hline
        4 & 20, 30 \\ \hline
        5 & 100, 150 \\ \hline
        6 & 25, 25, 25 \\ \hline
        7 & 50, 50, 50 \\ \hline
        8 & 100, 100, 100 \\ \hline
        9 & 64, 75, 64 \\ \hline
        10 & 75, 50, 25 \\ \hline
        11 & 40, 40, 20, 20 \\ \hline
        12 & 50, 100, 100, 50 \\ \hline
        13 & 15, 15, 15, 30, 30 \\ \hline
        14 & 80, 70, 60, 50, 40, 30 \\ \hline
        15 & 25, 50, 75, 100, 125, 150 \\ \hline
        16 & 10, 10, 10, 10, 10, 10, 10, 10, 10, 10 \\ \hline
        17 & 100, 100, 100, 100, 100, 100, 100, 100, 100, 100 \\ \hline
        18 & 200, 200, 200, 200, 200, 200, 200, 200, 200, 200 \\ \hline
        19 & 20, 20, 20, 20, 20, 15, 15, 15, 15, 15 \\ \hline
        20 & 55, 30, 30, 30, 30, 30, 30, 30, 30, 30, 55 \\ \hline
        21 & 40, 39, 38, 37, 36, 35, 34, 33, 32, 31, 30 \\ \hline
        22 & 24, 27, 30, 33, 36, 39, 42, 45, 48, 51, 54, 57, 60 \\ \hline
        23 & 30, 30, 30, 30, 30, 30, 30, 30, 30, 30, 30, 30, 30, 30, 30 \\ \hline
        24 & 50, 50, 50, 50, 50, 50, 50, 50, 50, 50, 50, 50, 50, 50, 50 \\ \hline
        25 & 75, 75, 75, 75, 75, 75, 75, 75, 75, 75, 75, 75, 75, 75, 75 \\ \hline
    \end{tabular}
    \caption[List of Hidden Layer Widths: Networks 1-25]{Ordered list of hidden layer widths for the first 25 networks used in \Cref{fig:simulations}. This table is continued in \Cref{tab:widths_2}.}
    \label{tab:widths_1}
\end{table}

\begin{table}[h]
    \centering
    \begin{tabular}{|c|p{15cm}|}
    \hline
        \# & Hidden Layer Widths \\ \hline
        26 & 50, 48, 46, 44, 42, 40, 38, 36, 34, 32, 30, 28, 26, 24, 22, 20 \\ \hline
        27 & 15, 16, 17, 18, 19, 20, 21, 22, 23, 24, 25, 26, 27, 28, 29, 30 \\ \hline
        28 & 25, 25, 25, 25, 25, 25, 25, 25, 25, 25, 25, 25, 25, 25, 25, 25, 25, 25, 25, 25 \\ \hline
        29 & 50, 50, 50, 50, 50, 50, 50, 50, 50, 50, 50, 50, 50, 50, 50, 50, 50, 50, 50, 50 \\ \hline
        30 & 45, 45, 45, 45, 45, 40, 40, 40, 40, 40, 35, 35, 35, 35, 35, 30, 30, 30, 30, 30 \\ \hline
        31 & 40, 40, 40, 40, 40, 40, 40, 40, 40, 40, 40, 40, 40, 20, 20, 20, 20, 20, 20, 20, 20, 20, 20 \\ \hline
        32 & 15, 15, 15, 15, 15, 15, 15, 15, 15, 15, 15, 15, 15, 15, 15, 15, 15, 15, 15, 15, 15, 15, 15, 15, 15 \\ \hline
        33 & 75, 75, 75, 75, 75, 75, 75, 75, 75, 75, 75, 75, 75, 75, 75, 75, 75, 75, 75, 75, 75, 75, 75, 75, 75 \\ \hline
        34 & 150, 150, 150, 150, 150, 150, 150, 150, 150, 150, 150, 150, 150, 150, 150, 150, 150, 150, 150, 150, 150, 150, 150, 150, 150 \\ \hline
        35 & 25, 25, 25, 25, 50, 50, 50, 50, 25, 25, 25, 25, 50, 50, 50, 50, 25, 25, 25, 25, 50, 50, 50, 50, 25, 25, 25, 25 \\ \hline
        36 & 15, 15, 15, 15, 15, 15, 15, 15, 15, 15, 15, 15, 15, 15, 15, 15, 15, 15, 15, 15, 15, 15, 15, 15, 15, 15, 15, 15, 15, 15 \\ \hline
        37 & 30, 30, 30, 30, 30, 30, 30, 30, 30, 30, 30, 30, 30, 30, 30, 30, 30, 30, 30, 30, 30, 30, 30, 30, 30, 30, 30, 30, 30, 30 \\ \hline
        38 & 100, 100, 100, 100, 100, 100, 100, 100, 100, 100, 100, 100, 100, 100, 100, 100, 100, 100, 100, 100, 100, 100, 100, 100, 100, 100, 100, 100, 100, 100 \\ \hline
        39 & 40, 40, 40, 40, 40, 20, 20, 20, 20, 20, 20, 20, 20, 20, 20, 20, 20, 20, 20, 20, 20, 20, 20, 20, 20, 40, 40, 40, 40, 40 \\ \hline
        40 & 40, 40, 40, 40, 40, 30, 30, 30, 30, 30, 30, 30, 30, 30, 30, 30, 30, 30, 30, 30, 30, 30, 30, 30, 30, 30, 30, 30, 30, 30 \\ \hline
        41 & 40, 40, 40, 40, 40, 40, 40, 40, 40, 40, 40, 40, 40, 40, 40, 40, 40, 40, 40, 40, 40, 40, 40, 40, 40, 40, 40, 40, 40, 40, 40, 40, 40, 40, 40 \\ \hline
        42 & 75, 75, 75, 75, 75, 75, 75, 75, 75, 75, 75, 75, 75, 75, 75, 75, 75, 75, 75, 75, 75, 75, 75, 75, 75, 75, 75, 75, 75, 75, 75, 75, 75, 75, 75 \\ \hline
        43 & 50, 50, 50, 50, 50, 50, 50, 50, 50, 50, 50, 50, 50, 50, 50, 50, 50, 50, 50, 50, 50, 50, 50, 50, 50, 50, 50, 50, 50, 50, 50, 50, 50, 50, 50, 50, 50, 50, 50, 50 \\ \hline
        44 & 75, 75, 75, 75, 75, 75, 75, 75, 75, 75, 75, 75, 75, 75, 75, 75, 75, 75, 75, 75, 75, 75, 75, 75, 75, 75, 75, 75, 75, 75, 75, 75, 75, 75, 75, 75, 75, 75, 75, 75 \\ \hline
        45 & 50, 50, 50, 50, 50, 50, 50, 50, 50, 50, 50, 50, 50, 50, 50, 50, 50, 50, 50, 50, 50, 50, 50, 50, 50, 50, 50, 50, 50, 50, 50, 50, 50, 50, 50, 50, 50, 50, 50, 50, 50, 50, 50, 50, 50, 50, 50, 50, 50, 50 \\ \hline
    \end{tabular}
    \caption[List of Hidden Layer Widths: Networks 26-45]{Continuation of \Cref{tab:widths_1} for networks 26 through 45.}
    \label{tab:widths_2}
\end{table}

\FloatBarrier
\bibliography{sources}

\end{document}